\theoremstyle{plain}
\newtheorem{theorem}{Theorem}[section]
\newtheorem{example}[theorem]{Example}
\newtheorem{proposition}[theorem]{Proposition}
\newtheorem{lemma}[theorem]{Lemma}
\newtheorem{corollary}[theorem]{Corollary}
\theoremstyle{definition}
\newtheorem{definition}[theorem]{Definition}
\newtheorem{assumption}[theorem]{Assumption}
\theoremstyle{remark}
\title{Addressing Polarization And Unfairness In Performative Prediction}
\author{
    Kun Jin\equalcontrib\textsuperscript{\rm 1},
    Tian Xie\equalcontrib\textsuperscript{\rm 2},
    Yang Liu\textsuperscript{\rm 3},
    Xueru Zhang\textsuperscript{\rm 2}
}
\begin{document}

\maketitle

\begin{abstract}
In many real-world applications of machine learning—such as recommendations, hiring, and lending—deployed models influence the data they are trained on, leading to feedback loops between predictions and data distribution. The  \textit{performative prediction} (PP) framework captures this phenomenon by modeling the data distribution as a function of the deployed model. While prior work has focused on finding \textit{performative stable} (PS) solutions for robustness, their societal impacts, particularly regarding fairness, remain underexplored. We show that PS solutions can lead to severe polarization and prediction performance disparities, and that conventional fairness interventions in previous works often fail under model-dependent distribution shifts due to failing the PS criteria. To address these challenges in PP, we introduce novel fairness mechanisms that provably ensure \textbf{both stability and fairness}, validated by theoretical analysis and empirical results\footnote{\textbf{Code:} https://github.com/osu-srml/FairPP.}

\end{abstract}

\section{Introduction}\label{sec:intro}

Modern supervised learning has achieved remarkable success in static environments, where the data distribution remains unaffected by model deployment. However, in many real-world applications—such as digital platforms, hiring, or lending—models influence user behavior, causing feedback loops that shift the data distribution. These model-induced shifts often render standard training methods unstable or unfair, and they are prevalent in real-world applications. Examples include strategic individuals manipulating their data (in school admission, hiring, or lending) to game the ML system into making favorable predictions \citep{hardt_strategic_2015}, consumers changing their retention and participation choices (in digital platforms) based on their perception toward the ML model they are subject to \citep{Zhang_2019_Retention,chi2022towards}.  

To make predictions in the presence of model-induced distribution shifts, \citet{perdomo_performative_2021} proposed {\textbf{performative prediction}} (PP), a framework that explicitly considers the target data distribution $\mathcal{D}(\boldsymbol{\theta})$ as a function of the ML model parameter $\boldsymbol{\theta} \in \Theta \subset \mathbb{R}^d$ to be optimized in a compact domain. While PP captures the impact of ML model on target data, the distribution $\mathcal{D}(\boldsymbol{\theta})$ is solely determined by the model regardless of the original data distribution. A subsequent study \citep{brown2020performative} extended PP and proposed a more generalized  {\textbf{state-dependent performative prediction}} (SDPP) framework, which considers the impacts of both model and initial data distribution. Specifically, given the deployed ML model parameter $\boldsymbol{\theta}$ and initial data distribution $\mathcal{D}$, SDPP models the resulting target data distribution $\mathcal{D}' = \text{T}(\boldsymbol{\theta};\mathcal{D})$ using some transition mapping function $\text{T}$. Since PP is a special case of SDPP, we focus on SDPP in this paper.

When the transition map $\text{T}$ is $1$-jointly sensitive (details are in Def.~\ref{def:sens}), $\text{T}(\boldsymbol{\theta};\cdot)$ is contractive and repeatedly deploying $\boldsymbol{\theta}$ will cause the induced distributions to converge to a fixed point distribution $\mathcal{D}_{\boldsymbol{\theta}}$. The learning objective of SDPP is to minimize \textbf{\text{performative risk}} (PR) evaluated on $\mathcal{D}_{\boldsymbol{\theta}}$, i.e.,
\small
\begin{align*}
   \boldsymbol{\theta}^{\text{PO}} =\underset{\boldsymbol{\theta}}{\operatorname{argmin}}~\text{PR}(\boldsymbol{\theta}) \stackrel{\text{def}}{=} \mathbb{E}_{Z \sim \mathcal{D}_{\boldsymbol{\theta}}} [\ell(\boldsymbol{\theta};Z)] ~~~\text{s.t.}~~\mathcal{D}_{\boldsymbol{\theta}} = \text{T}(\boldsymbol{\theta};\mathcal{D}_{\boldsymbol{\theta}})
\end{align*}
\normalsize
\color{black}
where $\ell(\boldsymbol{\theta};Z)$ is the loss function, $Z$ is the data sampled from the \textit{fixed point} distribution $\mathcal{D}_{\boldsymbol{\theta}}$. 
The minimizer $\boldsymbol{\theta}^{\text{PO}}$ is named as \textbf{performative optimal} (PO) solution. Because the target data distribution itself is a function of variable $\boldsymbol{\theta}$ to be optimized, finding $\boldsymbol{\theta}^{\text{PO}}$ is often challenging \citep{perdomo_performative_2021, brown2020performative}.
Instead, existing works have mostly focused on finding  \textbf{performative stable} (PS) solution $\boldsymbol{\theta}^{\text{PS}}$, which minimizes the \textbf{decoupled performative risk} $\text{DPR}(\boldsymbol{\theta};\boldsymbol{\theta}^{\text{PS}})$ defined as follows,
\begin{equation}\label{eq:PS}
\boldsymbol{\theta}^{\text{PS}} = \underset{\boldsymbol{\theta}}{\operatorname{argmin}}~\text{DPR}(\boldsymbol{\theta};\boldsymbol{\theta}^{\text{PS}}) \stackrel{\text{def}}{=} \mathbb{E}_{Z \sim \text{T}(\boldsymbol{\theta}^{\text{PS}};\mathcal{D}^{\text{PS}})} [\ell(\boldsymbol{\theta};Z)]
\end{equation}
\normalsize
where $\mathcal{D}^{\text{PS}}$ is the fixed point data distribution induced by $\boldsymbol{\theta}^{\text{PS}}$ that satisfies $\mathcal{D}^{\text{PS}} = \text{T}(\boldsymbol{\theta}^{\text{PS}};\mathcal{D}^{\text{PS}})$. Unlike $\text{PR}(\boldsymbol{\theta})$ where data distribution  $\text{T}(\boldsymbol{\theta};\mathcal{D})$ depends on variable $\boldsymbol{\theta}$ to be optimized, $\text{DPR}(\boldsymbol{\theta};\boldsymbol{\theta}^{\text{PS}})$  decouples the two, i.e., data distribution is
induced by $\boldsymbol{\theta}^{\text{PS}}$ while the variable to be optimized is $\boldsymbol{\theta}$.
Although in general $\boldsymbol{\theta}^{\text{PS}}\neq \boldsymbol{\theta}^{\text{PO}}$, $\boldsymbol{\theta}^{\text{PS}}$ is the fixed point of \eqref{eq:PS} and stabilizes the system: at $\boldsymbol{\theta}^{\text{PS}}$, data distribution $\mathcal{D}^{\text{PS}}$ also remains fixed. Many algorithms have been proposed in the literature to find $\boldsymbol{\theta}^{\text{PS}}$. A prime example is \textit{repeated
risk minimization} (RRM) \citep{perdomo_performative_2021}, an iterative algorithm that finds $\boldsymbol{\theta}^{\text{PS}}$ (under certain conditions) by repeatedly updating the model $\boldsymbol{\theta}^{(t)}$ that minimizes risk on
the fixed distribution $\mathcal{D}^{(t-1)}$ induced by the previous model $\boldsymbol{\theta}^{(t-1)}$, i.e., 
\begin{eqnarray}\label{eq:rrm_o}
\boldsymbol{\theta}^{(t)} &=& \underset{\boldsymbol{\theta}}{\operatorname{argmin}}~ \mathbb{E}_{Z \sim \mathcal{D}^{(t-1)}} [\ell(\boldsymbol{\theta};Z)],\nonumber\\
 \mathcal{D}^{(t)} &=& \text{T}(\boldsymbol{\theta}^{(t)};\mathcal{D}^{(t-1)}).
\end{eqnarray}
    However, the societal impact of PS solutions is less understood and it is unclear whether PS solutions can cause harm and violate social norms such as fairness. 

In this paper, we examine the fairness properties of PS solutions. We consider scenarios where an ML model is used to make decisions about people from multiple social groups, and the population data distribution changes based on the ML model. We find that $\boldsymbol{\theta}^{\text{PS}}$ can 1) incur severe polarization effects: entire population $\mathcal{D}^{\text{PS}}$ is dominated by certain groups, leaving the rest marginalized and almost diminished in the system; 2) be biased when deployed on $\mathcal{D}^{\text{PS}}$ and people from different groups will experience different losses. Because in many important domains such as job or loan applications, it is critical to ensure equal quality of ML predictions and population diversity, we investigate under what conditions and by what algorithms we can simultaneously achieve stability and fairness in SDPP.  

Focusing on group-wise \textit{loss disparity} \citep{martinez2020minimax, diana2021minimax,khalili2023loss} and \textit{participation disparity} \citep{Zhang_2019_Retention, Raab_Boczar_Fazel_Liu_2024} fairness measures, we first explore whether existing fairness mechanisms commonly used in supervised learning can help mitigate unfairness in SDPP settings; this includes \textit{regularization methods} (adding fairness violation as a penalty term to the objective function of unconstrained optimization, e.g., \citep{khan2023fairness, zhang2021unified}) and \textit{re-weighting methods} (adjusting weights and importance of samples in learning objective, e.g., \citep{jung2023reweighting, duchi2018learning, duchi2023distributionally}). We show that common choices of penalty terms (e.g., group-wise loss difference) and re-weighting designs (e.g., standard distributionally robust optimization) that are effective in traditional supervised learning may fail in SDPP by disrupting the stability of the system. Using repeated risk minimization (RRM) shown in \eqref{eq:rrm_o}  as an example, this means that applying such fairness mechanisms at each round of RRM can disrupt the convergence of the iterative algorithm and $(\boldsymbol{\theta}^{(t)},\mathcal{D}^{(t)})$ may diverge to an unexpected state. We thus propose novel fairness mechanisms, which can be easily adopted and incorporated into iterative algorithms such as RRM. We theoretically show that the proposed mechanism can effectively improve fairness while maintaining the stability of the system.

It is worth noting that although a few recent works also studied fairness issues under model-induced distribution shifts, they all make rather strong assumptions about the distribution shifts and do not apply to the general SDPP framework. For example, \citet{mishler2022fair} pointed out the fairness issues under performative settings without providing solutions to achieve fairness and stability at the same time. \citet{zezulka2023performativity, hu2022achieving, Raab_Boczar_Fazel_Liu_2024, somerstep2024algorithmic} assumed there exists a causal model that depicts how data distribution would shift based on the ML model, and these causal models need to be fully known for the fairness mechanisms to work. \citet{Raab_Boczar_Fazel_Liu_2024} studied a special type of model-induced distribution shift where only the group proportion changes.  
In App. \ref{app:related}, we discuss more related works. 

The rest of the paper is organized as follows. Section \ref{section:formulation} provides the background of SDPP. Section \ref{sec:fair_issue} formulates the problem and demonstrates the unfairness and polarization issues of PS solutions. Section \ref{sec:fair_mechanism} highlights the difficulties of simultaneously achieving fairness and stability in SDPP, where we first show that existing fairness mechanisms commonly used in supervised learning may fail in SDPP settings and then propose a novel fairness mechanism. 
In Section \ref{section:convergence}, we conduct the theoretical analysis and show that our method can effectively improve fairness while maintaining stability. Finally, Section \ref{section:numerical} empirically validates the proposed method on both synthetic and real data.

\section{Preliminaries} \label{section:formulation}

\paragraph{Iterative algorithms to find PS solutions.} As mentioned in Section~\ref{sec:intro}, the original goal of SDPP is to find $\boldsymbol{\theta}^{\text{PO}}$ that minimizes $\text{PR}(\boldsymbol{\theta})=\mathbb{E}_{Z \sim \mathcal{D}_{\boldsymbol{\theta}}} [\ell(\boldsymbol{\theta};Z)]$, the loss over the population induced by the deployed model. However, solving this optimization is often challenging because the data distribution $\mathcal{D}_{\boldsymbol{\theta}}$ depends on the variable $\boldsymbol{\theta}$ being optimized. Thus, prior studies such as \cite{perdomo_performative_2021, brown2020performative} have mostly focused on finding performative stable solution $\boldsymbol{\theta}^{\text{PS}}$, which is the fixed point of Eqn.~\eqref{eq:PS} and can be found through an iterative process of \textit{data sampling} and \textit{model deployment}. Specifically, denote $\mathcal{L}(\boldsymbol{\theta};\mathcal{D})=\mathbb{E}_{Z \sim \mathcal{D}} [\ell(\boldsymbol{\theta};Z)]$ and let $(\boldsymbol{\theta}^{(t)},\mathcal{D}^{(t)})$ be the model parameter and data distribution at round $t$ of the iterative algorithm, then repeatedly updating the model $\boldsymbol{\theta}^{(t)}$  according to Eqn.~\eqref{eq:rrm_original} could lead $(\boldsymbol{\theta}^{(t)},\mathcal{D}^{(t)})$ converging to PS solution  $(\boldsymbol{\theta}^{\text{PS}},\mathcal{D}^{\text{PS}})$ under certain conditions \citep{perdomo_performative_2021, brown2020performative}. 
\begin{eqnarray}\label{eq:rrm_original}
\boldsymbol{\theta}^{(t)} =\underset{\boldsymbol{\theta}}{\operatorname{argmin}}~ \mathcal{L}(\boldsymbol{\theta};\mathcal{D}^{(t-1)}),~~
\mathcal{D}^{(t)} =\text{Tr}(\boldsymbol{\theta}^{(t)};\mathcal{D}^{(t-1)}).
\end{eqnarray}
where $\text{Tr}$ may not be the same as transition map $\text{T}$ that drives evolution of data. Depending on how frequently the model is deployed compared to the change of data, $\text{Tr}$ is defined differently based on \textbf{repeated deployment schema}. Common examples include:
\begin{align*}
\textbf{conventional:~} & \text{Tr}(\boldsymbol{\theta}^{(t)};\mathcal{D}^{(t-1)}) = \text{T}(\boldsymbol{\theta}^{(t)};\mathcal{D}^{(t-1)}) \\
\textbf{$\mathbf{k}$-delayed:~} & \text{Tr}(\boldsymbol{\theta}^{(t)};\mathcal{D}^{(t-1)}) = \text{T}^k(\boldsymbol{\theta}^{(t)};\mathcal{D}^{(t-1)}) \\
= &\underbrace{\text{T}\Big(\boldsymbol{\theta}^{(t)}; \dots \text{T}\big( \boldsymbol{\theta}^{(t)};\text{T}}_{k \text{ times}}(\boldsymbol{\theta}^{(t)};\mathcal{D}^{(t-1)})\big)\Big)\\
\textbf{delayed:~} & \text{Tr}(\boldsymbol{\theta}^{(t)};\mathcal{D}^{(t-1)}) = 
\text{T}^{\lceil r \rceil + 1}(\boldsymbol{\theta}^{(t)};\mathcal{D}^{(t-1)})
\end{align*}
where the \textit{repeated risk minimization} (RRM) \citep{perdomo_performative_2021} introduced in Section~\ref{sec:intro} corresponds to conventional deployment schema. By customizing the time interval between two deployments, we can get variants including \textit{delayed RRM} and \textit{$k$-delayed RRM} \citep{brown2020performative}. Note that the delayed deployment schema is a special case of $k$-delayed deployment schema, where the number of repeated deployments $r$ is chosen to ensure the output distribution $\mathcal{D}^{(t)}$ is sufficiently close to the fixed point distribution when $\boldsymbol{\theta}^{(t)}$ keeps being deployed on the population $\mathcal{D}^{(t-1)}$.

\paragraph{Technical conditions for iterative algorithms to converge to PS solutions.}
As shown in \cite{perdomo_performative_2021,brown2020performative}, PS solutions exist and are unique only when $\ell$ and $\text{T}$ satisfy certain conditions. Moreover, iterative algorithms introduced in Eqn. \eqref{eq:rrm_original} can converge to the PS solution. We introduce these conditions below, where $\Theta$, $\mathcal{Z}$, and $\triangle (\mathcal{Z})$ denote the parameter space, sample space, and space of distributions over samples.
 
\begin{definition} [Strong convexity of loss function]
    $\ell (\boldsymbol{\theta};Z)$ is $\gamma$-strongly convex if and only if for all $\boldsymbol{\theta}, \boldsymbol{\theta}' \in \Theta$ and $Z \in \mathcal{Z}$, we have
    \begin{equation*}
        \ell (\boldsymbol{\theta};Z) \geq \ell (\boldsymbol{\theta}';Z) + \langle \nabla_{\boldsymbol{\theta}} \ell (\boldsymbol{\theta'};Z),  \boldsymbol{\theta} - \boldsymbol{\theta}' \rangle + \frac{\gamma}{2} \|\boldsymbol{\theta} -  \boldsymbol{\theta}' \|_2^2.
    \end{equation*}
\end{definition}

\begin{definition} [Joint smoothness of loss function] \label{def:joint_smootheness}
    $\ell (\boldsymbol{\theta}; Z)$ is $\beta$-jointly smooth if the gradient with respect to $\boldsymbol{\theta}$ is $\beta$-Lipschitz in $\boldsymbol{\theta}$ and $Z$, i.e., $\forall \boldsymbol{\theta}, \boldsymbol{\theta}' \in \Theta$ and $\forall  Z, Z' \in \mathcal{Z}$, we have
    \begin{align*}
        \|\nabla_{\boldsymbol{\theta}} \ell (\boldsymbol{\theta}; Z) - \nabla_{\boldsymbol{\theta}} \ell (\boldsymbol{\theta}'; Z)\|_2 &\leq \beta \| \boldsymbol{\theta} - \boldsymbol{\theta}'\|_2\\        \|\nabla_{\boldsymbol{\theta}} \ell (\boldsymbol{\theta}; Z) - \nabla_{\boldsymbol{\theta}} \ell (\boldsymbol{\theta}; Z')\|_2 &\leq \beta  \| Z - Z'\|_2
    \end{align*}
\end{definition}

\begin{definition} [Joint sensitivity of transition map]\label{def:sens}
    Let $\mathcal{W}_1$ denote the Wasserstein-1 distance measure. The transition map $\text{T}$ is $\epsilon$-jointly sensitive if for all $\boldsymbol{\theta}, \boldsymbol{\theta}' \in \Theta$ and $\mathcal{D}, \mathcal{D}' \in \triangle (\mathcal{Z})$, we have
    \begin{align*}
    \mathcal{W}_1( \text{T}(\boldsymbol{\theta}; \mathcal{D}), \text{T}(\boldsymbol{\theta}'; \mathcal{D})) &\leq \epsilon \|\boldsymbol{\theta} -\boldsymbol{\theta}' \|_2 \\
    \mathcal{W}_1( \text{T}(\boldsymbol{\theta}; \mathcal{D}), \text{T}(\boldsymbol{\theta}; \mathcal{D}')) &\leq \epsilon \mathcal{W}_1(\mathcal{D},\mathcal{D}')
    \end{align*}
\end{definition}

\begin{lemma}[Existence of a unique PS solution \citep{brown2020performative, perdomo_performative_2021}] \label{lemma:SDPP}
SDPP problem is guaranteed to have a unique PS solution if \textbf{all} of the following hold: (i) $\ell (\boldsymbol{\theta};Z)$ is $\gamma$-strongly convex; (ii) $\ell (\boldsymbol{\theta};Z)$ is $\beta$-joint smooth; (iii) $\text{T}$ is $\epsilon$-joint sensitive and $\epsilon(1+2\beta/\gamma) < 1$. 
\end{lemma}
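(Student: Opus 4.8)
The plan is to derive both existence and uniqueness from two nested applications of the Banach fixed-point theorem, following the template of \citet{perdomo_performative_2021, brown2020performative}, and to handle the converse (necessity of the three conditions) by an explicit low-dimensional counterexample.

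\emph{Step 1: the induced-distribution map.} Fix $\boldsymbol{\theta}\in\Theta$. By condition (iii), $\mathcal{D}\mapsto\text{T}(\boldsymbol{\theta};\mathcal{D})$ is $\epsilon$-Lipschitz in $\mathcal{W}_1$; since any $\gamma$-strongly convex, $\beta$-smooth loss has $\gamma\le\beta$, the hypothesis $\epsilon(1+\beta/\gamma)<1$ forces $\epsilon<\tfrac12$, so this map is a contraction on the complete space of distributions over $\mathcal{Z}$ with finite first moment. Banach yields a unique fixed point $\mathcal{D}_{\boldsymbol{\theta}}=\text{T}(\boldsymbol{\theta};\mathcal{D}_{\boldsymbol{\theta}})$. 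Moreover, for any $\boldsymbol{\theta}_1,\boldsymbol{\theta}_2$, writing $\mathcal{D}_{\boldsymbol{\theta}_i}=\text{T}(\boldsymbol{\theta}_i;\mathcal{D}_{\boldsymbol{\theta}_i})$, the triangle inequality together with the two parts of joint sensitivity gives
\[
\mathcal{W}_1(\mathcal{D}_{\boldsymbol{\theta}_1},\mathcal{D}_{\boldsymbol{\theta}_2})
\le \epsilon\|\boldsymbol{\theta}_1-\boldsymbol{\theta}_2\|_2 + \epsilon\,\mathcal{W}_1(\mathcal{D}_{\boldsymbol{\theta}_1},\mathcal{D}_{\boldsymbol{\theta}_2}),
\]
hence $\mathcal{W}_1(\mathcal{D}_{\boldsymbol{\theta}_1},\mathcal{D}_{\boldsymbol{\theta}_2})\le\frac{\epsilon}{1-\epsilon}\|\boldsymbol{\theta}_1-\boldsymbol{\theta}_2\|_2$, i.e. $\boldsymbol{\theta}\mapsto\mathcal{D}_{\boldsymbol{\theta}}$ is $\tfrac{\epsilon}{1-\epsilon}$-Lipschitz.

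\emph{Step 2: the best-response map.} Define $G(\boldsymbol{\theta}):=\arg\min_{\boldsymbol{\theta}'}\mathcal{L}(\boldsymbol{\theta}';\mathcal{D}_{\boldsymbol{\theta}})$; by (i) this minimizer exists and is unique, and by construction a pair $(\boldsymbol{\theta},\mathcal{D}_{\boldsymbol{\theta}})$ solves Eqn.~\eqref{eq:PS} iff $\boldsymbol{\theta}$ is a fixed point of $G$. Using the stationarity condition $\nabla_{\boldsymbol{\theta}}\mathcal{L}(G(\boldsymbol{\theta}_i);\mathcal{D}_{\boldsymbol{\theta}_i})=0$ (or the corresponding first-order variational inequality if $\Theta$ is a proper closed convex set, exploiting that Euclidean projection is nonexpansive) and $\gamma$-strong convexity,
\[
\gamma\|G(\boldsymbol{\theta}_1)-G(\boldsymbol{\theta}_2)\|_2^2
\le \big\langle \nabla_{\boldsymbol{\theta}}\mathcal{L}(G(\boldsymbol{\theta}_2);\mathcal{D}_{\boldsymbol{\theta}_2})-\nabla_{\boldsymbol{\theta}}\mathcal{L}(G(\boldsymbol{\theta}_2);\mathcal{D}_{\boldsymbol{\theta}_1}),\, G(\boldsymbol{\theta}_1)-G(\boldsymbol{\theta}_2)\big\rangle .
\]
Applying Cauchy--Schwarz and then the bound $\|\mathbb{E}_{Z\sim\mathcal{D}_1}\nabla_{\boldsymbol{\theta}}\ell(\boldsymbol{\theta};Z)-\mathbb{E}_{Z\sim\mathcal{D}_2}\nabla_{\boldsymbol{\theta}}\ell(\boldsymbol{\theta};Z)\|_2\le\beta\,\mathcal{W}_1(\mathcal{D}_1,\mathcal{D}_2)$ — which follows from the $\beta$-Lipschitz-in-$Z$ part of joint smoothness evaluated along an optimal coupling of $\mathcal{D}_1$ and $\mathcal{D}_2$ — yields $\|G(\boldsymbol{\theta}_1)-G(\boldsymbol{\theta}_2)\|_2\le\frac{\beta}{\gamma}\,\mathcal{W}_1(\mathcal{D}_{\boldsymbol{\theta}_1},\mathcal{D}_{\boldsymbol{\theta}_2})$. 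Chaining with Step 1 gives $\|G(\boldsymbol{\theta}_1)-G(\boldsymbol{\theta}_2)\|_2\le\frac{\beta\epsilon}{\gamma(1-\epsilon)}\|\boldsymbol{\theta}_1-\boldsymbol{\theta}_2\|_2$, and since (using $\epsilon<1$) the factor $\frac{\beta\epsilon}{\gamma(1-\epsilon)}<1$ is algebraically equivalent to $\epsilon(1+\beta/\gamma)<1$, the map $G$ is a contraction. Banach then produces a unique fixed point $\boldsymbol{\theta}^{\text{PS}}$, and $(\boldsymbol{\theta}^{\text{PS}},\mathcal{D}_{\boldsymbol{\theta}^{\text{PS}}})$ is the unique PS solution.

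\emph{Necessity, and the main obstacle.} For the converse it suffices to build a one-dimensional instance in which violating any single condition breaks the argument: take $\ell(\theta;z)=\tfrac12(\theta-z)^2$ (so $\gamma=\beta=1$) and an affine state-dependent transition $\text{T}(\theta;\mathcal{D})=\delta_{a+b\theta+c\,\mathbb{E}_{\mathcal{D}}[Z]}$, which is $\max(|b|,|c|)$-jointly sensitive. One computes $\mathcal{D}_\theta=\delta_{(a+b\theta)/(1-c)}$ and $G(\theta)=(a+b\theta)/(1-c)$, so $G$ is expansive — and RRM along with its $k$-delayed and delayed variants diverges — exactly when $|b|/(1-c)\ge1$, which with $b=c=\epsilon$ reads $\epsilon(1+\beta/\gamma)\ge1$; likewise, replacing $\ell$ by a loss affine in $\theta$ destroys existence/uniqueness of the inner $\arg\min$, and inflating the $Z$-Lipschitz constant of $\nabla_{\boldsymbol{\theta}}\ell$ again makes $G$ expansive. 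The only genuinely delicate point is Step 2, namely transferring the pointwise smoothness estimate into the Wasserstein bound on expected gradients (which requires Kantorovich duality / an optimal coupling and mild regularity of $\mathcal{Z}$) and then bookkeeping the two Lipschitz constants so that the product lands exactly on the stated threshold $\epsilon(1+\beta/\gamma)<1$; everything else is a routine instantiation of the contraction-mapping principle.
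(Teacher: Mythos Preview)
Your proposal is correct and follows essentially the same route as the cited works and the paper's own proofs of the fair analogues (Proposition~\ref{prop:unique_fair_PS}, Lemmas~\ref{lemma:minimizers_bound} and~\ref{lemma:fixed_point_ditribution_dist}): first establish the $\tfrac{\epsilon}{1-\epsilon}$-Lipschitz dependence of the fixed-point distribution $\mathcal{D}_{\boldsymbol{\theta}}$ on $\boldsymbol{\theta}$, then show the best-response map is $\tfrac{\beta}{\gamma}$-Lipschitz in $\mathcal{W}_1$, and compose to get a contraction on $\Theta$ under $\epsilon(1+\beta/\gamma)<1$. The paper itself does not prove Lemma~\ref{lemma:SDPP} (it is imported from \cite{perdomo_performative_2021,brown2020performative}), and your affine one-dimensional counterexample for the converse direction is an appropriate and standard construction.
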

\begin{lemma}[Convergence of iterative algorithms]\label{lemma:converge}
If conditions (i)(ii)(iii) in Lemma \ref{lemma:SDPP} are all satisfied, iterative algorithms are guaranteed to converge to the unique solution. However, there is no convergence guarantee when $\ell (\boldsymbol{\theta}; Z)$ is non-convex even if both (ii) and (iii) are satisfied. 
\end{lemma}

Lemma \ref{lemma:SDPP} has been shown in Theorem 8 of \citet{brown2020performative}, while we prove Lemma \ref{lemma:converge}
based on \citet{perdomo_performative_2021} in App. \ref{app:proof_SDPP}.

\section{Unfairness and polarization in SDPP}\label{sec:fair_issue}

\paragraph{Problem formulation.} In this work, we study SDPP with different demographic groups, where an ML model $\boldsymbol{\theta}$ is trained to make predictions about individuals from multiple groups distinguished by a sensitive attribute $s \in \mathcal{S}$ (e.g., gender, age, race), whose data distribution changes based on the deployed ML model and such model-induced distribution shift can be captured by transition map $\text{T}$. Suppose individuals from group $s$ follow the identical data distribution $\mathcal{D}_s^{(t)}$ at the round $t$ of an iterative algorithm, and let $p_s^{(t)}$ be the size of group $s$ as the fraction of entire population at $t$. Then the data distribution of the entire population is  $\mathcal{D}^{(t)} = \sum_{s \in \mathcal{S}} p_s^{(t)} \mathcal{D}_s^{(t)}$ with $\sum_{s \in \mathcal{S}} p_s^{(t)} =1$. 

Note that the above SDPP with multiple groups 
is a general framework. By specifying the transition mapping $\text{T}$, many problems studied in prior works can be regarded as a special case. This includes:
\begin{enumerate}[leftmargin=*,topsep=0.2cm,itemsep=0cm]
    \item \textbf{Strategic classification} \citep{hardt_strategic_2015, Zhang_2022_ICML}: individuals in high-stakes applications such as lending, hiring, and college admission may manipulate their data based on ML model strategically to increase their chances of receiving favorable decisions, leading to changes in group distribution $\mathcal{D}_s^{(t)}$.   
    \item \textbf{Decision-making systems under user retention dynamics} \citep{Zhang_2019_Retention, duchi2018learning,pmlr-v80-hashimoto18a}: ML models in recognition or recommendation systems may attract more users if they experience high accuracy but drive away those with less satisfaction, causing the group proportion $p_s^{(t)}$ to change.  
\end{enumerate}

We first explore the fairness properties of PS solutions in SDPP, i.e., examining whether $\boldsymbol{\theta}^{\text{PS}}$ in Eqn.~\eqref{eq:PS} have disparate impacts on different demographic groups. Specifically, we consider two fairness metrics: group-wise \textit{loss disparity} $\triangle_{\mathcal{L}}^{(t)}$ \citep{martinez2020minimax, hashimoto2018fairness} and \textit{participation disparity} $\triangle_p^{(t)}$ \citep{Raab_Boczar_Fazel_Liu_2024}, which measure the difference of group loss $\mathcal{L}(\boldsymbol{\theta}; \mathcal{D}_s^{(t)})$ and fraction $p_s^{(t)}$ across different groups at round $t$ of an iterative algorithm, respectively. In examples with two groups $\mathcal{S}=\{a,b\}$, the unfairness can be quantified as:
\begin{align}\label{eq:unfairness}
 \triangle_{\mathcal{L}}^{(t)} &:= \left|\mathcal{L}(\boldsymbol{\theta}^{(t)}; \mathcal{D}_a^{(t)}) - \mathcal{L}(\boldsymbol{\theta}^{(t)}; \mathcal{D}_b^{(t)})\right|, \nonumber \\\triangle_p^{(t)}&:=\left|p_a^{(t)}-p_b^{(t)}\right| 
\end{align}

\paragraph{Unfairness \& polarization effects in SDPP.}

\begin{figure}
\includegraphics[trim=0.21cm 0.35cm 0.25cm 0.2cm,clip, width=0.21\textwidth]{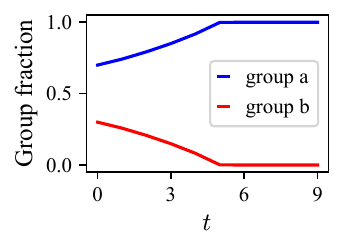}
\includegraphics[trim=0.21cm 0.35cm 0.25cm 0.2cm,clip, width=0.21\textwidth]{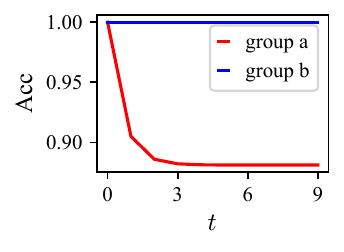}
\centering
\vspace{-0.1cm}
\caption{Illustrating examples of polarization effects and unfairness of $\boldsymbol{\theta}^{\text{PS}}$ in Prop.~\ref{prop1} (left) and \ref{prop2} (right): dynamics of the group fraction (left) and group-wise accuracy (right) under RRM: the system converges to  $(\boldsymbol{\theta}^{\text{PS}}; \mathcal{D}^{\text{PS}})$ that is unfair (details are in App. \ref{example:ps}).  }
\label{fig:e34}
\end{figure}

We first show that PS solutions $\boldsymbol{\theta}^{\text{PS}}$ in SDPP may have disparate impacts on different groups. Specifically, when finding $\boldsymbol{\theta}^{\text{PS}}$ using iterative algorithms introduced in Section~\ref{section:formulation}, the process may incur severe \textit{polarization effects} and exhibit \textit{unfairness}, i.e., certain groups may get more and more marginalized, and group-wise loss disparity gets exacerbated during the iterative process.

\begin{proposition}[Polarization effects of $\boldsymbol{\theta}^{\text{PS}}$]\label{prop1} Consider  population from multiple groups with fixed group distribution $\mathcal{D}_s$ whose participation $p_s^{(t+1)}$ in an ML system depends on their perceived group loss $\mathcal{L}(\boldsymbol{\theta}^{(t)}; \mathcal{D}_s)$. Suppose the deployment of system $\boldsymbol{\theta}^{(t)}=\arg\min_{\boldsymbol{\theta}} \sum_{s\in\mathcal{S}}p_s^{(t)}\mathcal{L}(\boldsymbol{\theta}; \mathcal{D}_s)$ follows the conventional RRM schema, then   
there exist  $\mathcal{D}^{(0)}$ and $\text{T}$ such that as $t\to \infty$, $p_s^{(t)}$ changes monotonically and certain groups diminish entirely from the system. 
\end{proposition}

\begin{proposition}[Exacerbated group-wise loss disparity]\label{prop2}
 Consider population from multiple groups with fixed group proportion $p_s$; each individual is subject to a binary ML decision $\widehat{Y}^{(t)}=\mathbf{1}(X^{(t)}\ge \boldsymbol{\theta}^{(t)})$ and may strategically manipulate the data to increase the chance of receiving positive decisions. Suppose the deployment of ML system $\boldsymbol{\theta}^{(t)}=\arg\max_{\boldsymbol{\theta}} \Pr\big(\widehat{Y}^{(t)}= Y\big)$ follows the conventional RRM schema, and individuals manipulate features according to $X^{(t)} = X^{(t)} + \eta \boldsymbol{\theta}^{(t)}$ without changing $Y$\citep{perdomo_performative_2021}, then there exists $\mathcal{D}^{(0)}$ such that group-wise loss disparity $\Delta^{(t)}_{\mathcal{L}}$ increases. 
\end{proposition}

To prove Prop. \ref{prop1} and \ref{prop2}, it is sufficient to provide two examples to illustrate the polarization and unfairness effects in SDPP settings. We construct the examples with details in App. \ref{example:ps} and visualize them in Figure \ref{fig:e34}. This shows that even though RRM can converge to a stable solution $\boldsymbol{\theta}^{\text{PS}}$, the solution is unfair and loss disparity $\triangle_{\mathcal{L}}^{(t)}$ and participation disparity $\triangle_{p}^{(t)}$ may get exacerbated.
\color{black}

\section{Finding Fair-PS solutions} \label{sec:fair_mechanism}
Section~\ref{sec:fair_issue} shows that without fairness consideration, PS solutions of SDPP may incur polarization effects and have disparate impacts on different groups. This section tackles unfairness issues in SDPP. One straightforward idea is to directly apply the fairness mechanisms at every round of the iterative algorithms introduced in Section~\ref{section:formulation}. However, we will show that although such methods are effective in conventional supervised learning, they can disrupt the stability and the iterative algorithms may no longer converge.


\subsection{Fair-PS solutions}

Many fairness mechanisms have been proposed in supervised learning to mitigate group-wise loss disparity and participation disparity. We consider two categories commonly used in the literature: \textit{regularization method} and \textit{sample re-weighting method}, as detailed below. 
\begin{enumerate}[leftmargin=*]
    \item \textbf{Fairness via regularization:} It adds a regularization or penalty term to the original learning objective function $\mathcal{L}(\boldsymbol{\theta}; \mathcal{D})$, which penalizes the violation of fairness \citep{khan2023fairness, zhang2021unified}. The fair objective function is
\begin{equation}\label{eq:fair_reg_risk}
\mathcal{L}_{\textsf{fair}}(\boldsymbol{\theta}; \mathcal{D}, \rho) := \mathcal{L}(\boldsymbol{\theta}; \mathcal{D}) + \mathcal{P}(\boldsymbol{\theta}; \mathcal{D},\rho),
\end{equation}
where $\mathcal{P}(\boldsymbol{\theta}; \mathcal{D},\rho)$ is the fair penalty term and the scalar $\rho > 0$ controls the strength of the penalty.

\item \textbf{Fairness via sample re-weighting:} It adjusts the weights of samples (possibly adversarially) and increases weights for disadvantaged groups \citep{jung2023reweighting, duchi2018learning, duchi2023distributionally}. An example is distributionally robust optimization (DRO) \citep{pmlr-v80-hashimoto18a}, which minimizes worst-case  loss and the fair objective is
\begin{equation} \label{eq:DRO_objective}
 \textstyle   \mathcal{L}_{\textsf{fair}}(\boldsymbol{\theta}; \mathcal{D}, \rho) :=  \max_{\Tilde{\mathcal{D}} \in \mathcal{B}(\mathcal{D}, r(\rho))} \mathcal{L}(\boldsymbol{\theta}; \Tilde{\mathcal{D}}), 
\end{equation}
where $\mathcal{B}(\mathcal{D}, r(\rho)) := \{\Tilde{\mathcal{D}} | d(\mathcal{D}, \Tilde{\mathcal{D}}) \leq r(\rho)\}$ denotes a distribution ball centered at $\mathcal{D}$ with radius $r(\rho)$ derived from the fair mechanism strength $\rho$, and $d$ is a distribution distance metric. 
\end{enumerate}
By optimizing a fair objective $ \mathcal{L}_{\textsf{fair}}(\boldsymbol{\theta}; \mathcal{D}, \rho)$, existing fairness mechanisms can effectively mitigate unfairness in supervised learning with static data distribution $\mathcal{D}$. However, it remains unclear how these methods would perform in SDPP when the model itself causes the data distribution shifts. Specifically, consider iterative algorithms introduced in Eqn. \eqref{eq:rrm_original} that find PS solutions (e.g., RRM). Suppose we apply the above fairness mechanism at every round when updating the model parameter $\boldsymbol{\theta}$, i.e., replacing $\boldsymbol{\theta}^{(t)} = \operatorname{argmin}_{\boldsymbol{\theta}} \mathcal{L}(\boldsymbol{\theta}; \mathcal{D}^{(t-1)})$ with fair version $\boldsymbol{\theta}^{(t)} = \operatorname{argmin}_{\boldsymbol{\theta}} \mathcal{L}_{\textsf{fair}}(\boldsymbol{\theta}; \mathcal{D}^{(t-1)}, \rho)$ in iterative algorithms. We ask: \textit{can such new iterative algorithms mitigate group-wise loss and participation disparity in SDPP and converge to a fair and stable solution?} 

Before answering the above question, we first define \textbf{Fair-PS solutions} for SDPP, at which both ML system and population distribution reach stability and unfairness is mitigated.

\begin{definition}[Fair-PS solution]\label{def:fair-ps}
    We define $(\mathcal{D}^{\text{PS}}_{\textsf{fair}}, \boldsymbol{\theta}^{\text{PS}}_{\textsf{fair}})$ as the Fair-PS solution to $ \mathcal{L}_{\textsf{fair}}(\boldsymbol{\theta}; \mathcal{D}, \rho)$ if
    $$ \mathcal{D}^{\text{PS}}_{\textsf{fair}} = \text{T}(\boldsymbol{\theta}^{\text{PS}}_{\textsf{fair}};\mathcal{D}^{\text{PS}}_{\textsf{fair}}),~~\boldsymbol{\theta}^{\text{PS}}_{\textsf{fair}} = \underset{\boldsymbol{\theta}}{\operatorname{argmin}}~ \mathcal{L}_{\textsf{fair}}(\boldsymbol{\theta}; \mathcal{D}^{\text{PS}}_{\textsf{fair}},\rho).$$
\end{definition}

\subsection{Existing designs fail to converge to Fair-PS solutions}
We will use two examples to illustrate that the popular choices of fairness mechanisms used in existing  literature may fail to achieve stability and fairness in SDPP. This includes (i) regularization method \citep{khan2023fairness, zhang2021unified} with \textit{group loss variance} $
        \mathcal{P}(\boldsymbol{\theta}; \mathcal{D},\rho) := \rho \sum_{s\in\mathcal{S}} p_s \cdot [\mathcal{L}(\boldsymbol{\theta}; \mathcal{D}_s) - \mathcal{L}(\boldsymbol{\theta}; \mathcal{D})]^2$  as the penalty term in Eqn.~\eqref{eq:fair_reg_risk}; and (ii) \textit{distributionally robust optimization} (DRO) method  \citep{hashimoto2018fairness,peetpare2023long} with $\chi^2$-distance metric
        $d(\mathcal{D},\Tilde{\mathcal{D}}) := \int \left(\frac{d \mathcal{D}}{d \Tilde{\mathcal{D}}} - 1\right)^2 d \Tilde{\mathcal{D}}$ in Eqn.~\eqref{eq:DRO_objective}. 

\begin{example}[Group loss variance as penalty term] \label{example:group_loss_var} 
Consider two groups $a, b$ with data $Z = (X,Y)$ and fixed group fractions $p_a = p_b = 0.5$. Suppose all samples in group $a$ are $(1, -1)$ and all samples in group $b$ are $(2,1)$. Consider squared loss function $\ell(z; \theta) = (y - h_{\theta}(x))^2 = (y - \theta x)^2$ which is strongly convex and jointly smooth. Under group loss variance penalty $\mathcal{P}(\boldsymbol{\theta}; \mathcal{D},\rho)$, we have 
$ \mathcal{L}_{\textsf{fair}}(\boldsymbol{\theta}; \mathcal{D}, \rho) = 2.5 \theta^2 -\theta + 1 + \rho \cdot (2.25 \theta^4 + 9\theta^2 - 9\theta^3)$. When $\rho = 0.6$, the second-order derivative $\nabla^2_{\theta} \mathcal{L}_{\textsf{fair}} = 16.2 \theta^2 -32.4 \theta +15.8$ and is negative when $\theta = 1$. The negative second-order gradient means that $\mathcal{L}_{\textsf{fair}}$ is nonconvex. According to Lemma \ref{lemma:SDPP}, we cannot \textbf{ensure} the iterative algorithms converge to Fair-PS solutions when $\ell$ is nonconvex. Thus, adding group loss variance as a penalty at each round possibly disrupts the stability. Appendix \ref{app:e5} verifies the non-convergence with empirical results.
\end{example}

\begin{example}[Repeated DRO with $\chi^2$-distance metric]\label{example:dro}
Consider two groups $a,b$ with fixed data $z_a =1$ and $z_b=-1$ for all samples but group fractions $p_a^{(t)} = 0.5\cdot(1+\theta^{(t)})$ and $p_b^{(t)} = 0.5\cdot(1-\theta^{(t)})$. $p_a^{(0)} = 0.4, p_b^{(0)} = 0.6$. Consider a mean estimation task where the model parameter ${\theta}^{(t+1)} := \arg\min_{\boldsymbol{\theta}} \max_{\Tilde{\mathcal{D}} \in \mathcal{B}(\mathcal{D}^{(t)}, r)} \mathcal{L}({\theta}; \Tilde{\mathcal{D}})$ is updated using DRO with mean squared error and $\chi^2$-distance bound $r = 1/6$.  Denote $q_a^{(t)},q_b^{(t)}$as the group fractions of the "worst-case" distribution $\Tilde{\mathcal{D}}$. We  have $q_a^{(0)} = 0.6$, $q_b^{(0)} = 0.4$ and $\theta^{(1)} = \arg\min_{\theta} q_a^{(0)}(1-\theta)^2 + q_b^{(0)}(1+\theta)^2=0.2$, which results in $p_a^{(1)} = 0.6, p_b^{(1)} = 0.4$. Since $\mathcal{L}_a(\theta^{(1)};z_a) < \mathcal{L}_b(\theta^{(1)};z_b)$, DRO should minimize the risk of the "worst-case" distribution with $q_a^{(1)} = 0.4$, $q_b^{(1)} = 0.6$, i.e., $\theta^{(2)} = \arg\min_{\theta} q_a^{(1)}(1-\theta)^2 + q_b^{(1)}(1+\theta)^2 = -0.2$. Repeating the procedure we will get $p_a^{(2)} = 0.4, p_b^{(2)} = 0.6$ and  $q_a^{(2)} = 0.6, q_b^{(2)} = 0.4$, $\theta^{(3)} = 0.2$. It turns out that repeated DRO results in $\theta^{(t)}$ oscillating between $0.2$ and $-0.2$ and it never converges.
\end{example}

It is worth noting that DRO methods have been used in \citet{hashimoto2018fairness} to mitigate group fraction disparity in repeated optimizations. However, it only improves fairness without any convergence guarantees to stable solutions. \citet{peetpare2023long} repeatedly used DRO to improve fairness under PP settings, it only converges to a PS solution under stronger assumptions where the distributionally robust objective must be strongly convex and jointly smooth, and the transition map $\text{T}$ also needs to be $\epsilon$-sensitive with respect to the worst-case distribution. Under milder conditions in Lemma \ref{lemma:SDPP}, it may fail to converge as Example \ref{example:dro} illustrated.
\color{black}

\subsection{Novel designs for fairness mechanism}\label{subsec:fair}
Next, we introduce three novel fair objective functions $ \mathcal{L}_{\textsf{fair}}(\boldsymbol{\theta}; \mathcal{D}, \rho)$ for fairness mechanisms, of which two belong to \textit{regularization method} and one is a \textit{sample re-weighting method}. By replacing $\boldsymbol{\theta}^{(t)} = \operatorname{argmin}_{\boldsymbol{\theta}} \mathcal{L}(\boldsymbol{\theta}; \mathcal{D}^{(t-1)})$ with the proposed fair update $\boldsymbol{\theta}^{(t)} = \operatorname{argmin}_{\boldsymbol{\theta}} \mathcal{L}_{\textsf{fair}}(\boldsymbol{\theta}; \mathcal{D}^{(t-1)}, \rho)$ in Eqn. \eqref{eq:rrm_original}, the resulting iterative algorithms can converge to Fair-PS solutions.


\paragraph{Proposed fair regularization (with and without demographics).}
Let $\mathcal{D}^{(0)}$ denote the initial population distribution. Depending on whether sensitive attributes $S$ are accessible during training, we propose two fairness penalty terms, as detailed below. 
\begin{enumerate}[leftmargin=*,topsep=0.2cm,itemsep=0.0cm]
    \item \underline{{Group level fairness penalty:}} It updates  $\boldsymbol{\theta}^{(t)}$ by minimizing $\mathcal{L}_{\textsf{fair}}(\boldsymbol{\theta}; \mathcal{D}^{(t-1)}, \rho)$ defined as follows
\small
\begin{align} \label{eq:squared_group_loss_penalty}
\mathcal{L}(\boldsymbol{\theta}; \mathcal{D}^{(t-1)})+  \rho \sum_{s\in\mathcal{S}} p^{(t-1)}_s  [\mathcal{L}(\boldsymbol{\theta}; \mathcal{D}_s^{(t-1)})]^2
\end{align}
\normalsize
    \item \underline{{Sample level fairness penalty without demographics:}} It updates $\boldsymbol{\theta}^{(t)}$ by minimizing $\mathcal{L}_{\textsf{fair}}(\boldsymbol{\theta}; \mathcal{D}^{(t-1)}, \rho)$ defined as follows, which does not require access to sensitive attribute values.
\begin{align} 
\label{eq:fair_loss_sample_level}
\textstyle \mathcal{L}(\boldsymbol{\theta}; \mathcal{D}^{(t-1)}) +\rho\mathbb{E}_{Z \sim \mathcal{D}^{(t-1)}} \big[  [\ell(\boldsymbol{\theta};Z)]^2 \big].
\end{align}
\end{enumerate}

\paragraph{Proposed fair sample re-weighting.} At the first round, it performs risk minimization. Starting from $t=2$, it updates $\boldsymbol{\theta}^{(t)}$ by minimizing $\mathcal{L}_{\textsf{fair}}(\boldsymbol{\theta}; \mathcal{D}^{(t-1)}, \rho)$ defined as follows
\begin{align}\label{eq:fair_reweight}
\textstyle \sum_{s\in\mathcal{S}} q_s^{(t-1)} \mathcal{L}(\boldsymbol{\theta}; \mathcal{D}_s^{(t-1)})
\end{align}
with
\begin{align*}
\textstyle \boldsymbol{q}^{(t-1)} &= \left[q^{(t-1)}_s\right]_{s \in \mathcal{S}} = \frac{\boldsymbol{p}^{(t-1)} + \rho \boldsymbol{l}^{(t-1)}} {\| \boldsymbol{p}^{(t-1)} + \rho \boldsymbol{l}^{(t-1)} \|_1} \\ \textstyle\boldsymbol{l}^{(t-1)} &= \left[p_s^{(t-1)} \mathcal{L}(\boldsymbol{\theta}^{(t-1)};\mathcal{D}_s^{(t-2)}) \right]_{s \in \mathcal{S}}
\end{align*}
Unlike DRO in \citet{peetpare2023long, hashimoto2018fairness} that requires solving a min-max optimization at the current round, our re-weighting method only adjusts the weights for each group based on the group-wise losses in the previous round, which is more computationally efficient.

\paragraph{Comparison \& discussion.}
Intuitively, compared to original $\mathcal{L}(\boldsymbol{\theta}; \mathcal{D}^{(t-1)})$, all three proposed fair objective functions $ \mathcal{L}_{\textsf{fair}}(\boldsymbol{\theta}; \mathcal{D}^{(t-1)},\rho)$ improves fairness at each round by assigning more weights to disadvantaged groups/samples (i.e., those experiencing higher losses) in the upcoming update. Indeed, both \textit{sample level fairness penalty} and \textit{fair sample re-weighting} can be regarded as modifications of \textit{group level fairness penalty}. Comparing Eqn.~\eqref{eq:squared_group_loss_penalty} and \eqref{eq:fair_loss_sample_level}, the two penalty terms get similar when most individual samples in the disadvantaged (resp. advantaged) groups are also similarly disadvantaged (resp. advantaged). This is more likely to happen when each group has a distribution with a small variance. For example, if the distribution of each group is a point mass, Eqn. \eqref{eq:squared_group_loss_penalty} and \eqref{eq:fair_loss_sample_level} are identical.

Comparing Eqn. \eqref{eq:squared_group_loss_penalty} and \eqref{eq:fair_reweight}, we can rewrite Eqn. \eqref{eq:fair_reweight} as the following:
\begin{align*}
 &~~~~~~~\| \boldsymbol{p}^{(t-1)} + \rho \boldsymbol{l}^{(t-1)}\|_1  \mathcal{L}_{\textsf{fair}} (\boldsymbol{\theta};\mathcal{D}^{(t-1)},\rho)\\
 &= \sum_{s} p_s^{(t-1)}(1+\rho \mathcal{L}(\boldsymbol{\theta}^{(t-1)}; \mathcal{D}_s^{(t-2)}))\,\mathcal{L}(\boldsymbol{\theta}; \mathcal{D}_s^{(t-1)})  
\end{align*}
We can see that the right-hand side will be a multiple of Eqn. \eqref{eq:squared_group_loss_penalty} if we replace $\mathcal{L}(\boldsymbol{\theta}^{(t-1)};\mathcal{D}_s^{(t-2)})$ with $\mathcal{L}(\boldsymbol{\theta};\mathcal{D}_s^{(t-1)})$. This means both equations yield the same $\boldsymbol{\theta}^{(t)}$ when the population distribution does not change from $t-2$ to $t-1$, suggesting that the two approaches become more similar when the sensitivity of the transition map $\text{T}$ is smaller, or equivalently, the distribution shift is milder.

\section{Theoretical analysis} \label{section:convergence}

\begin{algorithm}[H]
 {\small \caption{Fair repeated risk minimization (Fair-RRM) }
   \label{alg:fair_RRM}
\begin{algorithmic}
   \REQUIRE $t=0$, initial data distribution  $\mathcal{D}^{(0)}$, strength of fair mechanism $\rho$,  initial model parameter $\boldsymbol{\theta}^{(0)}$, stopping criteria $\tau$ 
   \STATE Choose repeated deployment schema and fair mechanism; 
   \REPEAT
   \STATE $\boldsymbol{\theta}^{(t+1)} \leftarrow \arg \min_{\boldsymbol{\theta}}  \mathcal{L}_{\textsf{fair}}(\boldsymbol{\theta};\mathcal{D}^{(t)},\rho)$;
   \STATE Get $\mathcal{D}^{(t+1)}=\text{Tr}(\boldsymbol{\theta}^{(t+1)};\mathcal{D}^{(t)}) $ from the chosen schema;
   \STATE $t \leftarrow t+1$;
   \UNTIL{$\| \boldsymbol{\theta}^{(t)} - \boldsymbol{\theta}^{(t-1)}\|_2 \leq \tau$}
\end{algorithmic}
}
\end{algorithm}

In this section, we will show that by replacing $\boldsymbol{\theta}^{(t)} = \operatorname{argmin}_{\boldsymbol{\theta}} \mathcal{L}(\boldsymbol{\theta}; \mathcal{D}^{(t-1)})$  in Eqn.~\eqref{eq:rrm_original} with the fair update $\boldsymbol{\theta}^{(t)} = \operatorname{argmin}_{\boldsymbol{\theta}} \mathcal{L}_{\textsf{fair}}(\boldsymbol{\theta}; \mathcal{D}^{(t-1)}, \rho)$ we proposed in Section~\ref{subsec:fair}, Fair-PS solutions (Definition~\ref{def:fair-ps}) exist under certain conditions and the resulting iterative algorithms (Algorithm \ref{alg:fair_RRM}) can converge to such Fair-PS solutions. We assume conditions in Lemma~\ref{lemma:SDPP} hold in this section and there exists a unique PS solution in the original SDPP problem. We first define a parameter $\Tilde{\beta}$ which will be frequently used in the theorems introduced below.
\small
$$\Tilde{\beta}:= \begin{cases}
     (2 \rho \overline{\ell} + 1)\beta + 2\rho\tilde{\ell}^2, ~\text{for regularization method \eqref{eq:squared_group_loss_penalty} or \eqref{eq:fair_loss_sample_level} }\\
     (\rho \overline{\ell} + 1)\beta,~\text{for sample re-weighting method \eqref{eq:fair_reweight} }
\end{cases}$$
\normalsize
where $\overline{\ell} := \sup_{\boldsymbol{\theta},Z} \|\ell(\boldsymbol{\theta};Z)\|$ and $\tilde{\ell} := \sup_{\boldsymbol{\theta},Z}\{ \|\nabla\ell_{\boldsymbol{\theta}}(\boldsymbol{\theta};Z)\|, \|\nabla\ell_{Z}(\boldsymbol{\theta};Z)\|\}$. We can identify conditions under which a unique Fair-PS solution exists. 

\begin{proposition}[Existence of unique Fair-PS solution] \label{prop:unique_fair_PS} 
  For a given population with initial distribution $\mathcal{D}^{(0)}$ and the proposed fair mechanism $\mathcal{L}_{\textsf{fair}}$ with strength $\rho$, there  is a unique Fair-PS solution if $\epsilon(1 + \Tilde{\beta}/\gamma) < 1$. Moreover,  $(\mathcal{D}^{\text{PS}}_{\textsf{fair}}, \boldsymbol{\theta}^{\text{PS}}_{\textsf{fair}})$  is independent of the choice of repeated deployment schema. 
\end{proposition}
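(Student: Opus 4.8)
The plan is to reduce the statement to the Banach fixed-point theorem, mirroring the proof of Lemma~\ref{lemma:SDPP} with the loss replaced by the fair objective. First I would record the two structural facts needed about $\mathcal{L}_{\textsf{fair}}(\cdot;\mathcal{D},\rho)$: (a) for each of the three mechanisms (Eqns.~\eqref{eq:squared_group_loss_penalty}, \eqref{eq:fair_loss_sample_level}, \eqref{eq:fair_reweight}) it is still $\gamma$-strongly convex in $\boldsymbol{\theta}$ — the penalty terms $[\mathcal{L}(\boldsymbol{\theta};\mathcal{D}_s)]^2$ and $\mathbb{E}_Z[\ell(\boldsymbol{\theta};Z)^2]$ are squares of nonnegative convex functions, hence convex, so they contribute no negative curvature, and the re-weighted objective in \eqref{eq:fair_reweight} is a convex combination, with weights independent of the optimization variable, of the $\gamma$-strongly convex per-group losses; and (b) $\nabla_{\boldsymbol{\theta}}\mathcal{L}_{\textsf{fair}}$ is Lipschitz in $\boldsymbol{\theta}$ and, via Kantorovich duality, Lipschitz in the data distribution with respect to $\mathcal{W}_1$, with modulus exactly the $\Tilde{\beta}$ defined in the paper. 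Fact (b) is where $(2\rho\overline{\ell}+1)$ (resp. $(\rho\overline{\ell}+1)$) enters: differentiating a quadratic penalty brings down a factor $2\mathcal{L}(\boldsymbol{\theta};\mathcal{D}_s)\le 2\overline{\ell}$ multiplying a gradient that is itself $\beta$-Lipschitz, while for re-weighting the coefficient multiplying each $\nabla_{\boldsymbol{\theta}}\mathcal{L}(\boldsymbol{\theta};\mathcal{D}_s)$ is inflated by at most $1+\rho\overline{\ell}$. Note $\Tilde{\beta}\ge\beta$, so the hypothesis $\epsilon(1+\Tilde{\beta}/\gamma)<1$ is strictly stronger than the original condition in Lemma~\ref{lemma:SDPP}, consistent with the paper's narrative that fairness intervention demands milder shifts.

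Next I would build the self-map whose fixed points are the Fair-PS solutions. Since $\epsilon(1+\Tilde{\beta}/\gamma)<1$ forces $\epsilon<1$, for every $\boldsymbol{\theta}$ the map $\mathcal{D}\mapsto\text{T}(\boldsymbol{\theta};\mathcal{D})$ is an $\epsilon$-contraction on $(\triangle(\mathcal{Z}),\mathcal{W}_1)$ with a unique fixed point $\mathcal{D}(\boldsymbol{\theta})$, and a standard telescoping argument gives $\mathcal{W}_1(\mathcal{D}(\boldsymbol{\theta}),\mathcal{D}(\boldsymbol{\theta}'))\le\tfrac{\epsilon}{1-\epsilon}\|\boldsymbol{\theta}-\boldsymbol{\theta}'\|_2$. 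Define $G(\boldsymbol{\theta}):=\operatorname{argmin}_{\boldsymbol{\theta}'}\mathcal{L}_{\textsf{fair}}(\boldsymbol{\theta}';\mathcal{D}(\boldsymbol{\theta}),\rho)$ (for the re-weighting mechanism the weights inside $\mathcal{L}_{\textsf{fair}}$ are also evaluated at $\boldsymbol{\theta}$ and $\mathcal{D}(\boldsymbol{\theta})$, which is consistent with Definition~\ref{def:fair-ps} at a fixed point); $G$ is well defined and single-valued by $\gamma$-strong convexity, and $(\mathcal{D},\boldsymbol{\theta})$ is a Fair-PS solution iff $\boldsymbol{\theta}=G(\boldsymbol{\theta})$ and $\mathcal{D}=\mathcal{D}(\boldsymbol{\theta})$. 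Using the elementary fact that the minimizers of two $\gamma$-strongly convex functions differ by at most $\gamma^{-1}$ times the sup-norm of the difference of their gradients, together with (a)--(b),
\[
\|G(\boldsymbol{\theta})-G(\boldsymbol{\theta}')\|_2\ \le\ \tfrac{1}{\gamma}\sup_{\boldsymbol{\theta}''}\big\|\nabla_{\boldsymbol{\theta}''}\mathcal{L}_{\textsf{fair}}(\boldsymbol{\theta}'';\mathcal{D}(\boldsymbol{\theta}),\rho)-\nabla_{\boldsymbol{\theta}''}\mathcal{L}_{\textsf{fair}}(\boldsymbol{\theta}'';\mathcal{D}(\boldsymbol{\theta}'),\rho)\big\|_2\ \le\ \tfrac{\Tilde{\beta}}{\gamma}\cdot\tfrac{\epsilon}{1-\epsilon}\,\|\boldsymbol{\theta}-\boldsymbol{\theta}'\|_2 .
\]
The modulus $\tfrac{\Tilde{\beta}\epsilon}{\gamma(1-\epsilon)}$ is $<1$ exactly when $\epsilon(1+\Tilde{\beta}/\gamma)<1$, so Banach's theorem gives a unique fixed point $\boldsymbol{\theta}^{\text{PS}}_{\textsf{fair}}$, and $\mathcal{D}^{\text{PS}}_{\textsf{fair}}:=\mathcal{D}(\boldsymbol{\theta}^{\text{PS}}_{\textsf{fair}})$ yields the unique Fair-PS solution.

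For schema-independence, observe that Definition~\ref{def:fair-ps} is stated purely in terms of $\text{T}$, so the $(\mathcal{D}^{\text{PS}}_{\textsf{fair}},\boldsymbol{\theta}^{\text{PS}}_{\textsf{fair}})$ just constructed already references no deployment schema; what must be checked is that Algorithm~\ref{alg:fair_RRM} under a different schema cannot have a different stationary point. Any stationary $(\mathcal{D},\boldsymbol{\theta})$ of the iteration under $\text{Tr}=\text{T}^k$ satisfies $\mathcal{D}=\text{T}^k(\boldsymbol{\theta};\mathcal{D})$, but the unique fixed point $\mathcal{D}(\boldsymbol{\theta})$ of the contraction $\text{T}(\boldsymbol{\theta};\cdot)$ is also the unique fixed point of its iterate $\text{T}^k(\boldsymbol{\theta};\cdot)$; hence $\mathcal{D}=\mathcal{D}(\boldsymbol{\theta})$ and the remaining equation is $\boldsymbol{\theta}=G(\boldsymbol{\theta})$, independent of $k$ (and of $k=\lceil r\rceil+1$ for the delayed schema). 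I expect the main obstacle to be fact (b): carefully verifying that each fair objective inherits joint smoothness with the stated constant. For the regularization mechanisms this is a product-rule estimate using $0\le\ell\le\overline{\ell}$ together with $\beta$-smoothness and boundedness of $\nabla_{\boldsymbol{\theta}}\ell$; for the re-weighting mechanism one must additionally bound the Lipschitz dependence of the normalized weights $\boldsymbol{q}^{(t-1)}$ on $\boldsymbol{\theta}$ and on the group distributions, using that the normalizing constant $\|\boldsymbol{p}+\rho\boldsymbol{l}\|_1\ge 1$ is bounded away from zero. Everything else is the standard performative-prediction machinery applied verbatim.
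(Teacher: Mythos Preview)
Your proposal is correct and follows essentially the same route as the paper: both establish $\gamma$-strong convexity and $\Tilde{\beta}$-joint smoothness of $\mathcal{L}_{\textsf{fair}}$ (the paper packages these as separate lemmas), pass to the fixed-point-distribution map $\boldsymbol{\theta}\mapsto\mathcal{D}(\boldsymbol{\theta})$ with Lipschitz constant $\epsilon/(1-\epsilon)$, and then apply Banach to the composite $\boldsymbol{\theta}\mapsto\operatorname{argmin}\mathcal{L}_{\textsf{fair}}(\cdot;\mathcal{D}(\boldsymbol{\theta}),\rho)$ with the same contraction modulus $\tfrac{\Tilde{\beta}\epsilon}{\gamma(1-\epsilon)}$. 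Your explicit argument for schema-independence (the unique fixed point of a contraction coincides with that of any iterate $\text{T}^k(\boldsymbol{\theta};\cdot)$) is a nice addition that the paper merely asserts without proof.
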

Although the choice of repeated deployment schema does not affect the Fair-PS solution $(\mathcal{D}^{\text{PS}}_{\textsf{fair}}, \boldsymbol{\theta}^{\text{PS}}_{\textsf{fair}})$, it influences the convergence rate of the iterative algorithms in Theorem \ref{thm:fair_RRM}.  

\begin{theorem}[Convergence of Fair-RRM] \label{thm:fair_RRM} 
Algorithm~\ref{alg:fair_RRM} converges to a Fair-PS solution under the following deployment schemas: 
(i) under \textbf{conventional} deployment schema, it converges to the Fair-PS solution at a linear rate if $\epsilon(1 + \Tilde{\beta}/\gamma) < 1$; (ii) under \textbf{$k$-delayed} deployment schema, it converges to the Fair-PS solution at a linear rate for any $k$ if $\epsilon(1 + \Tilde{\beta}/\gamma) < 1-\epsilon$; (iii) under \textbf{delayed} deployment schema when $r = \log^{-1}\left( \frac{1}{\epsilon} \right) \log\left( \frac{\mathcal{W}_1(\mathcal{D}^{(0)}, \mathcal{D}^{(1)})}{\delta} \right) $, it converges to within a radius  $\delta$ of the Fair-PS solution (i.e., $\|\boldsymbol{\theta}^{(t)} -\boldsymbol{\theta}^{\text{PS}}_{\textsf{fair}}\|_2 \le \delta$ and $\mathcal{W}_1(\mathcal{D}(\boldsymbol{\theta}^{(t)}), \mathcal{D}^{\text{PS}}_{\textsf{fair}}) \le \delta$) in $\mathcal{O}(\log^2 \frac{1}{\delta})$ steps if $\epsilon(1 + \Tilde{\beta}/\gamma) < 1$.  
\end{theorem}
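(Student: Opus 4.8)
## Proof Proposal for Theorem \ref{thm:fair_RRM}

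\textbf{Overall strategy.} The plan is to show that Fair-RRM is nothing but the state-dependent iteration \eqref{eq:rrm_original} run on a \emph{modified} SDPP problem in which the per-round objective $\mathcal{L}$ is replaced by $\mathcal{L}_{\textsf{fair}}$, and then to re-run the contraction arguments underlying Lemma~\ref{lemma:SDPP} with the smoothness constant $\beta$ replaced by $\Tilde{\beta}$. The enabling structural facts are: (a) for each fixed data distribution, $\mathcal{L}_{\textsf{fair}}(\,\cdot\,;\mathcal{D},\rho)$ remains $\gamma$-strongly convex — for \eqref{eq:squared_group_loss_penalty} and \eqref{eq:fair_loss_sample_level} the added penalties are squares of the (nonnegative) convex group/sample risks, hence convex, while \eqref{eq:fair_reweight} is a convex combination of the $\gamma$-strongly convex group risks; and (b) the fair-minimizer map $G_{\textsf{fair}}(\mathcal{D}):=\arg\min_{\boldsymbol{\theta}}\mathcal{L}_{\textsf{fair}}(\boldsymbol{\theta};\mathcal{D},\rho)$ obeys $\|G_{\textsf{fair}}(\mathcal{D})-G_{\textsf{fair}}(\mathcal{D}')\|_2 \le (\Tilde{\beta}/\gamma)\,\mathcal{W}_1(\mathcal{D},\mathcal{D}')$. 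Fact (b) comes from the first-order optimality conditions for $G_{\textsf{fair}}$ together with $\gamma$-strong convexity, once one shows that $\nabla_{\boldsymbol{\theta}}\mathcal{L}_{\textsf{fair}}(\boldsymbol{\theta};\,\cdot\,,\rho)$ is $\Tilde{\beta}$-Lipschitz in the distribution argument with respect to $\mathcal{W}_1$: differentiating the square produces the multiplicative factor $1+2\rho\overline{\ell}$ and differentiating the linear re-weighting produces $1+\rho\overline{\ell}$, after which $\sup_{\boldsymbol{\theta},Z}\ell(\boldsymbol{\theta};Z)=\overline{\ell}$ and joint $\beta$-smoothness of $\ell$ give the stated $\Tilde{\beta}$. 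Carrying out this bound cleanly — controlling the cross terms from the product/chain rule, and phrasing everything in terms of the joint distribution over $(Z,s)$ so that a single $\mathcal{W}_1$ distance simultaneously controls within-group shifts and group-fraction changes in \eqref{eq:squared_group_loss_penalty} and \eqref{eq:fair_reweight} — is the main technical obstacle; everything downstream is bookkeeping on known arguments.

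\textbf{Conventional schema.} With $\text{Tr}=\text{T}$, the one-round map on distributions is $\Psi(\mathcal{D}):=\text{T}(G_{\textsf{fair}}(\mathcal{D});\mathcal{D})$. By the triangle inequality, $\epsilon$-joint sensitivity of $\text{T}$, and (b),
\begin{equation*}
\mathcal{W}_1(\Psi(\mathcal{D}),\Psi(\mathcal{D}')) \le \epsilon\|G_{\textsf{fair}}(\mathcal{D})-G_{\textsf{fair}}(\mathcal{D}')\|_2 + \epsilon\,\mathcal{W}_1(\mathcal{D},\mathcal{D}') \le \epsilon\big(1+\Tilde{\beta}/\gamma\big)\,\mathcal{W}_1(\mathcal{D},\mathcal{D}'),
\end{equation*}
so $\Psi$ is a contraction when $\epsilon(1+\Tilde{\beta}/\gamma)<1$; its unique fixed point is $\mathcal{D}^{\text{PS}}_{\textsf{fair}}$ (Proposition~\ref{prop:unique_fair_PS}), and Banach's theorem yields linear convergence of $\mathcal{D}^{(t)}$, hence of $\boldsymbol{\theta}^{(t)}=G_{\textsf{fair}}(\mathcal{D}^{(t-1)})$.

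\textbf{$k$-delayed and delayed schemas.} For $\text{Tr}=\text{T}^k$, iterating the sensitivity of $\text{T}(\boldsymbol{\theta};\cdot)$ and then applying (b) gives, with $\boldsymbol{\theta}_i=G_{\textsf{fair}}(\mathcal{D}_i)$,
\begin{equation*}
\mathcal{W}_1\big(\text{T}^k(\boldsymbol{\theta}_1;\mathcal{D}_1),\text{T}^k(\boldsymbol{\theta}_2;\mathcal{D}_2)\big) \le \epsilon^k\,\mathcal{W}_1(\mathcal{D}_1,\mathcal{D}_2) + \tfrac{\epsilon}{1-\epsilon}\|\boldsymbol{\theta}_1-\boldsymbol{\theta}_2\|_2 \le \Big(\epsilon^k+\tfrac{\epsilon}{1-\epsilon}\tfrac{\Tilde{\beta}}{\gamma}\Big)\mathcal{W}_1(\mathcal{D}_1,\mathcal{D}_2),
\end{equation*}
and since $\epsilon^k\le\epsilon$ for all $k\ge1$, the hypothesis $\epsilon(1+\Tilde{\beta}/\gamma)<1-\epsilon$ implies $\epsilon+\tfrac{\epsilon}{1-\epsilon}\tfrac{\Tilde{\beta}}{\gamma}<1$ (using $1-2\epsilon\le(1-\epsilon)^2$), so the map is a contraction uniformly in $k$, giving linear convergence as before. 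For the delayed schema, $k=\lceil r\rceil+1$ with $r$ as stated makes $\text{T}^k(\boldsymbol{\theta};\mathcal{D})$ land within $\mathcal{W}_1$-distance $\delta$ of the fixed-point distribution $\mathcal{D}^{\ast}(\boldsymbol{\theta})$ of the $\epsilon$-contraction $\text{T}(\boldsymbol{\theta};\cdot)$, where $\mathcal{W}_1(\mathcal{D}^{\ast}(\boldsymbol{\theta}),\mathcal{D}^{\ast}(\boldsymbol{\theta}'))\le\tfrac{\epsilon}{1-\epsilon}\|\boldsymbol{\theta}-\boldsymbol{\theta}'\|_2$; thus the iteration shadows the idealized map $\boldsymbol{\theta}\mapsto G_{\textsf{fair}}(\mathcal{D}^{\ast}(\boldsymbol{\theta}))$ up to $O(\delta)$, and that idealized map contracts with modulus $\tfrac{\epsilon}{1-\epsilon}\tfrac{\Tilde{\beta}}{\gamma}<1$ precisely when $\epsilon(1+\Tilde{\beta}/\gamma)<1$. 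A geometric-series argument then shows $O(\log\tfrac1\delta)$ outer rounds suffice to reach the $\delta$-ball; each round performs $\lceil r\rceil+1=O(\log\tfrac1\delta)$ deployments, giving the claimed $\mathcal{O}(\log^2\tfrac1\delta)$ total.

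\textbf{The re-weighting wrinkle.} Update \eqref{eq:fair_reweight} is not a function of $\mathcal{D}^{(t-1)}$ alone, since $\boldsymbol{q}^{(t-1)}$ depends on $\boldsymbol{\theta}^{(t-1)}$ and $\mathcal{D}^{(t-2)}$ through $\boldsymbol{l}^{(t-1)}$. I would run the contraction argument on the augmented state $(\boldsymbol{\theta}^{(t)},\mathcal{D}^{(t)},\mathcal{D}^{(t-1)})$: the weight vector $\boldsymbol{q}$ is a bounded and Lipschitz function of this state (using $\overline{\ell}<\infty$), the argmin $\sum_s q_s\mathcal{L}(\boldsymbol{\theta};\mathcal{D}_s)$ is Lipschitz in $\boldsymbol{q}$ by $\gamma$-strong convexity, and $\text{Tr}$ is $\epsilon$-sensitive, so the augmented map contracts in a suitably weighted product metric under the same threshold on $\epsilon(1+\Tilde{\beta}/\gamma)$, now with $\Tilde{\beta}=(\rho\overline{\ell}+1)\beta$. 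In all three schemas, uniqueness and schema-independence of $(\mathcal{D}^{\text{PS}}_{\textsf{fair}},\boldsymbol{\theta}^{\text{PS}}_{\textsf{fair}})$ are exactly Proposition~\ref{prop:unique_fair_PS}, so the limits of the iterations coincide with this Fair-PS solution, completing the argument.
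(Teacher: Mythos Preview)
Your proposal is correct and follows essentially the same strategy as the paper: establish that $\mathcal{L}_{\textsf{fair}}$ is $\gamma$-strongly convex and $\Tilde{\beta}$-jointly smooth (Lemmas~\ref{lemma:fair_loss_gamma_convex}, \ref{lemma:fair_loss_tilde_beta_smooth}, \ref{lemma:fair_reweight_objective_smoothness} in the paper), derive the key Lipschitz bound $\|G_{\textsf{fair}}(\mathcal{D})-G_{\textsf{fair}}(\mathcal{D}')\|_2\le(\Tilde{\beta}/\gamma)\mathcal{W}_1(\mathcal{D},\mathcal{D}')$ (their Lemma~\ref{lemma:minimizers_bound}), and run Banach's theorem on the resulting one-step map under each schema.

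A couple of minor packaging differences are worth flagging. For the conventional schema the paper tracks the \emph{pair} $(\boldsymbol{\theta},\mathcal{D})$ under the sum metric $d_{\text{pair}}=\mathcal{W}_1+\|\cdot\|_2$ and shows the Fair-RRM map $F_{\textsf{fair}}(\boldsymbol{\theta},\mathcal{D})=(G(\text{T}(\boldsymbol{\theta};\mathcal{D});\mathcal{L}_{\textsf{fair}}),\text{T}(\boldsymbol{\theta};\mathcal{D}))$ is an $\epsilon(1+\Tilde{\beta}/\gamma)$-contraction, whereas you project to the distribution alone via $\Psi(\mathcal{D})=\text{T}(G_{\textsf{fair}}(\mathcal{D});\mathcal{D})$; both give the same constant and condition. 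For the $k$-delayed case the paper simply bounds $\text{T}^k$ as $\tfrac{\epsilon}{1-\epsilon}$-jointly sensitive and reapplies part~(i), while you keep the sharper $\epsilon^k$ term and then dominate it by $\epsilon$; the resulting condition $\epsilon(1+\Tilde{\beta}/\gamma)<1-\epsilon$ is identical. Finally, for the re-weighting mechanism the paper does not augment the state as you propose; it instead observes (Lemma~\ref{lemma:fair_reweight_objective_smoothness}) that the factor $1+\rho\mathcal{L}_s(\boldsymbol{\theta}^{(t)};\mathcal{D}_s^{(t-1)})\in[1,1+\rho\overline{\ell}]$ acts as a bounded scalar multiplying each group risk, so the objective is still $\gamma$-strongly convex and $(1+\rho\overline{\ell})\beta$-smooth, and the same contraction bound goes through. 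Your augmented-state route is a legitimate (and arguably more careful) alternative that lands at the same threshold.
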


In App.~\ref{app:frerm}, we extend Algorithm \ref{alg:fair_RRM} to \textit{fair repeated empirical risk minimization (Fair-RERM)}. Although the theorems are for fairness mechanisms in Section \ref{sec:fair_mechanism}, the proofs can be easily extended to a general class of $\mathcal{L}_{\textsf{fair}}$ with convex and smooth fairness penalty terms (details in App. \ref{app:design}).

\paragraph{Fairness guarantee.} Thm. \ref{thm:fair_RRM} and App. \ref{app:frerm} show that repeatedly minimizing $\mathcal{L}_{\textsf{fair}}(\boldsymbol{\theta}; \mathcal{D}^{(t-1)}, \rho)$ on evolving data sequence can lead the system converging to a Fair-PS solution. Note that $\rho$ controls the strength of fairness and different $\rho$ could result in different $(\boldsymbol{\theta}^{\text{PS}}_{\textsf{fair}},\mathcal{D}^{\text{PS}}_{\textsf{fair}})$. In conventional supervised learning with static data distribution $\mathcal{D}$, it is trivial to see that larger $\rho$ will lead to a fairer solution \citep{martinez2020minimax}. However, in SDPP settings, the impact of $\rho$ on unfairness is less straightforward. Since both data distribution $\mathcal{D}^{\text{PS}}_{\textsf{fair}}$ and model $\boldsymbol{\theta}^{\text{PS}}_{\textsf{fair}}$ depend on $\rho$, analyzing how loss disparity would change as $\rho$ varies can be highly complex. However, we manage to study fair mechanisms in Eqn.~\eqref{eq:squared_group_loss_penalty} with \textit{group-level fairness penalty} to prove and quantify the fairness guarantee at the Fair-PS solution. We focus on a special case of user retention dynamics \citep{hashimoto2018fairness,Zhang_2019_Retention} with two groups $s\in\{a,b\}$, where the group distribution $\mathcal{D}_s$ is fixed but the fraction $p_s^{(t)}$ changes based on group loss $\mathcal{L}(
\boldsymbol{\theta}^{(t)};\mathcal{D}_s)$ during repeated risk minimization process.
\begin{assumption}\label{assumption:retention}
    The majority group always experiences lower expected loss, i.e., $\arg\max_{s\in\{a,b\}}p_s^t = \arg\min_{s\in\{a,b\}}\mathcal{L}(\boldsymbol{\theta}^{(t)};\mathcal{D}_s)$. 
    For two models deployed on population $\mathcal{D}^{(t)}$, the model with larger loss disparity $\triangle_{\mathcal{L}}^{(t)}$ leads to higher group fraction disparity $\triangle^{(t+1)}_{p}$ at time $t+1$.
\end{assumption}

Assumption \ref{assumption:retention} is natural: in applications such as recommendation systems, the minority group often suffers from higher loss and has a lower retention rate. Denote Fair-PS solution  $\left(\boldsymbol{\theta}^{\text{PS}}_{\textsf{fair}}\left(\rho\right), \mathcal{D}^{\text{PS}}_{\textsf{fair}}\left(\rho\right)\right)$ as a function of $\rho\geq 0$ and the original PS solution as $(\boldsymbol{\theta}^{\text{PS}}, \mathcal{D}^{\text{PS}})$. Let $\triangle^{\text{PS}}_{\textsf{fair},\mathcal{L}}(\rho)$ be the group loss disparity of $\boldsymbol{\theta}^{\text{PS}}_{\textsf{fair}}\left(\rho\right)$ evaluated on distribution $\mathcal{D}^{\text{PS}}_{\textsf{fair}}\left(\rho\right)$ and $\Delta _{\mathcal{L}}^{\text{PS}}$ be the group loss disparity of $\boldsymbol{\theta}^{\text{PS}}$ evaluated at $\mathcal{D}^{\text{PS}}$. We can first prove that a larger $\rho$ leads to stronger fairness in Thm. \ref{thm:fair_improvement}.

\begin{theorem}\label{thm:fair_improvement}
 Under Assumption \ref{assumption:retention}, $\triangle^{\text{PS}}_{\textsf{fair},\mathcal{L}}(\rho)$ is non-increasing in $\rho$. 
\end{theorem} 

Furthermore, we can quantify the fairness improvement.

\begin{theorem}\label{thm:quantified_fairness_improvement}
Denote $p_s^{\text{PS}}$ as the fraction of group $s$ at $\mathcal{D}^{\text{PS}}$ under retention dynamics.
Assume \(\mathcal{L}(\boldsymbol{\theta}; \mathcal{D})\) is twice continuously differentiable. For sufficiently small \(\rho > 0\), we have $$\Delta _{\mathcal{L}}^{\text{PS}} - \Delta _{\mathcal{L}, \textsf{fair}}^{\text{PS}}(\rho)
= 2\rho \, p_a^{\text{PS}} p_b^{\text{PS}} \, \Delta _{\mathcal{L}}^{\text{PS}} \cdot 
\boldsymbol{v}_{\textsf{fair}}^\top H^{-1} \boldsymbol{v}_{\textsf{fair}} + \mathcal{O}(\rho^2),$$
where
\(
\boldsymbol{v}_{\textsf{fair}} = \nabla_{\boldsymbol{\theta}} \mathcal{L}(\boldsymbol{\theta}^{\text{PS}}; \mathcal{D}_a)
- \nabla_{\boldsymbol{\theta}} \mathcal{L}(\boldsymbol{\theta}^{\text{PS}}; \mathcal{D}_b)\), and \(
H = \nabla_{\boldsymbol{\theta}}^2 \mathcal{L}(\boldsymbol{\theta}^{\text{PS}}; \mathcal{D}^{\text{PS}}).
\)
\end{theorem}
Since $H$ is positive definite due to the strong convexity of $\mathcal{L}_{\textsf{fair}}$ (Lemma \ref{lemma:fair_loss_gamma_convex}), Thm. \ref{thm:quantified_fairness_improvement} reveals the fairness improvement is positive when $\rho$ is sufficiently small even without Assumption \ref{assumption:retention}.

\section{Numerical results} \label{section:numerical}

This section empirically evaluates the proposed methods on synthetic and real data (including credit data \citep{perdomo_performative_2021,creditdata} and MNIST \citep{deng2012mnist}) under semi-synthesized performative shifts.
We run all experiments with multiple random seeds and visualize the standard errors. 

\paragraph{Performative Gaussian mean estimation.} We generate a synthetic dataset of $10000$ samples from $s\in\{a,b\}$ with initial group fractions $p_a^{(0)} = 0.3, p_b^{(0)} = 0.7$ and target values $y_a = 0.3 + \epsilon, y_b = 0.7 + \epsilon$, where $\epsilon \sim \mathcal{N}(0,0.05)$. At each round, the decision-maker estimates the mean of the current distribution as $\boldsymbol{\theta}^{(t)}$, and the loss function is the mean squared error with $l_2$ regularization. We consider user retention dynamics similar to \citet{hashimoto2018fairness} where $p^{(t+1)}_s = \frac{\mathcal{R}(s,t)}{\sum_{s' \in \{a,b\}}\mathcal{R}(s',t)}$ changes based on group-wise loss. Here $\mathcal{R}(s,t) = \left(1 - \sum_{s'\in\{a,b\}} p^{\min}_{s'}\right) \times \frac{1}{2}\left(p_s^t + \frac{\mathcal{L}(\boldsymbol{\theta}^{(t)};\mathcal{D}_{-s}^{(t)})}{\mathcal{L}(\boldsymbol{\theta}^{(t)};\mathcal{D}_{-s}^{(t)}) + \mathcal{L}(\boldsymbol{\theta}^{(t)};\mathcal{D}_{s}^{(t)})}\right) + p_s^{\min}$ where $p_s^{\min} = 0.02$ is the minimum group fraction, and $-s = \{a,b\}\setminus s$. We perform the mean estimation task empirically to train multiple linear regression models and compare \texttt{RERM} with \texttt{Fair-RERM}, including regularization methods with group-level penalty (\texttt{Fair-RERM-GLP}), sample-level penalty (\texttt{Fair-RERM-SLP}) and sample re-weighting method (\texttt{Fair-RERM-RW}). We perform $30$ rounds of empirical risk minimization on $7$ different random seeds. Fig. \ref{subfig:ldisp_gm} illustrates the evolution of group-wise loss disparity $\triangle_\mathcal{L}^{(t)}$, where higher fairness improvement is achieved with higher $\rho$. The sample re-weighting method (\texttt{Fair-RERM-RW}) seems to yield better fairness control than others in this setting. Fig. \ref{subfig:tradeoff_gm} shows the tradeoff between fairness (group-wise loss disparity) and the global loss. The ``stars" are produced by adjusting $\rho$ and demonstrate a ``Pareto-optimal" surface of each fairness mechanism.

\begin{figure}[t]
    \centering
    \begin{subfigure}[t]{0.43\textwidth}
        \includegraphics[width=\textwidth]{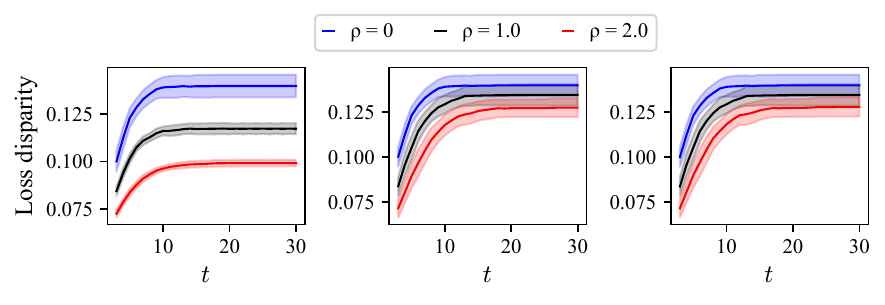}
        \caption{
            \parbox[t]{\linewidth}{
                Dynamics of loss disparity for Performative Gaussian mean estimation when $\rho \in \{0,1.0,2.0\}$. When $\rho = 0$, the policy reduces to standard \texttt{RERM}.
            }
        }
        \label{subfig:ldisp_gm}
    \end{subfigure}
    \begin{subfigure}[t]{0.43\textwidth}
        \includegraphics[width=\textwidth]{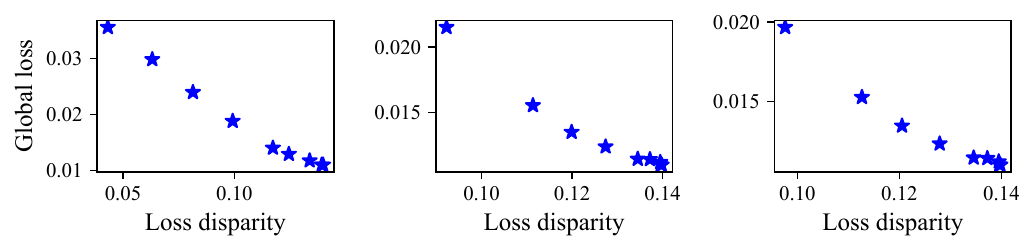}
        \caption{
            \parbox[t]{\linewidth}{
                Tradeoff between fairness and the global loss at the PS solution. ``Stars'' are produced by varying $\rho$.
            }
        }
        \label{subfig:tradeoff_gm}
    \end{subfigure}
    \caption{Results on Gaussian data: \texttt{Fair-RERM-RW} (left), \texttt{Fair-RERM-GLP} (middle), \texttt{Fair-RERM-SLP} (right).}
    \label{fig:gm}
\end{figure}

\paragraph{Credit data retention dynamics with strategic behaviors.} We use the \textit{Give Me Some Credit} data \citep{creditdata} consisting of features $X \in \mathbb{R}^{10}$ to measure individuals' creditworthiness $Y \in \{0,1\}$ \citep{perdomo_performative_2021, hu2022achieving}. We preprocessed the data similarly to \citet{perdomo_performative_2021} and divided individuals into two groups $s \in \{a,b\}$ based on the \texttt{age} attribute. Next, we assume there is a newly established credit rating agency comparing \texttt{RERM} with \texttt{Fair-RERM} with logistic classification models to predict individuals' creditworthiness. Similarly, we assume the group-wise loss in round $t$ affects the group fraction at $t+1$. Meanwhile, we assume there is a subset of features $X_s\in X$ which individuals can change strategically to $X_s'$ based on the current model parameters $\theta$. Specifically, $X_s' = X_s - \epsilon \cdot \theta_s$, where $\theta_s$ is the subset of $\theta$ with respect to $X_s$ and $\epsilon = 0.1$. With the dynamics, we can visualize the evolution of group-wise loss disparity and the tradeoff between fairness and the global loss in Fig. \ref{fig:credit}. All results demonstrate the effectiveness of our methods where \texttt{Fair-RERM-SLP} seems to be more effective.

\begin{figure}[t]
    \centering
    \begin{subfigure}[t]{0.43\textwidth}
        \includegraphics[trim=0.21cm 0.35cm 0.25cm 0.2cm,clip,width=\textwidth]{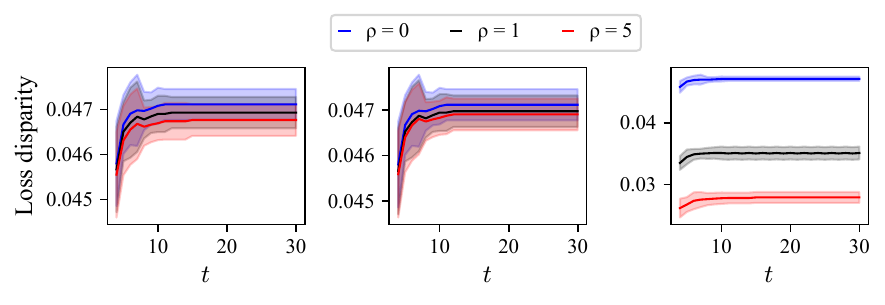}
        \caption{
            \parbox[t]{\linewidth}{
                Dynamics of group-wise loss disparity for Credit data when $\rho \in \{0,1.0,5.0\}$. When $\rho = 0$, the policy is just \texttt{RERM}.
            }
        }
        \label{subfig:ldisp_credit}
    \end{subfigure}
    \begin{subfigure}[t]{0.43\textwidth}
        \includegraphics[width=\textwidth]{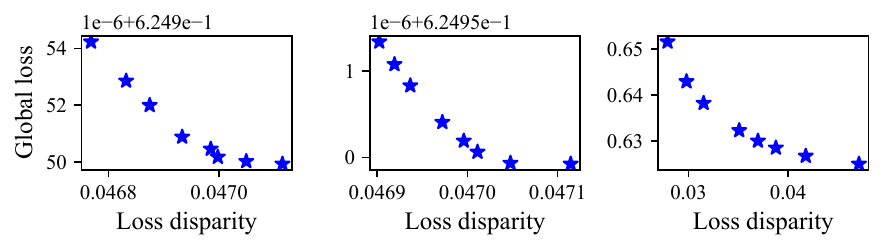}
        \caption{
            \parbox[t]{\linewidth}{
                Tradeoff between fairness and the global loss at the PS solution. ``Stars'' are produced by varying $\rho$.
            }
        }
        \label{subfig:tradeoff_credit}
    \end{subfigure}
    \caption{Results on Credit data: \texttt{Fair-RERM-RW} (left), \texttt{Fair-RERM-GLP} (middle), \texttt{Fair-RERM-SLP} (right).}
    \label{fig:credit}
\end{figure}

\paragraph{Additional experiments in App. \ref{app:addexp}.} Due to the page limit, we defer additional experiments to App.~\ref{app:converge}. We want to highlight that we perform experiments on MNIST data \citep{deng2012mnist} to test whether our fairness mechanisms can still be useful beyond the convex setting with a deep learning model. Remarkably, Fig.~\ref{subfig:ldisp_mnist} in App.~\ref{app:converge} verifies that \texttt{Fair-RERM} can still improve fairness at the PS solution. Moreover, Fig. \ref{subfig:tradeoff_mnist} in App.~\ref{app:converge} demonstrates that the Fair-PS solution may converge to a local stationary point better for both fairness and performative loss in non-convex PP settings. We also perform experiments on ACSIncome dataset \cite{ding2022retiringadultnewdatasets} (App. \ref{app:ACSIncome}) and show how our fairness mechanisms influence \textit{Equal opportunity} and \textit{Demographic Parity} (App. \ref{app:eqopt}).

\section{Conclusions \& limitations}\label{sec:conclusion}

Our work reveals unfairness issues of the PS solutions of PP. We propose novel fairness-aware algorithms to find Fair-PS solutions with the convergence holds under mild assumptions to facilitate trustworthy machine learning. However, the theory of this work does not cover non-convex settings and it still remains an open question for the future research.


\section*{Acknowledgements}
This work was funded in part by the National Science Foundation under award number IIS-2202699 and
IIS-2416895.

\bibliography{references}
\newpage
\appendix
\onecolumn
\setcounter{secnumdepth}{2}
\renewcommand{\thesubsection}{\thesection.\arabic{subsection}}

\section{Related works}\label{app:related}


\textbf{Performative Prediction.} Proposed by \citet{perdomo_performative_2021}, Performative prediction focuses on machine learning problems where the model deployments influence the data distribution, and the common application scenarios include strategic classification \citep{hardt_strategic_2015, raab2021unintended, xie2024algorithmic,xie2024learning,xie2025sprint,xie2024automating,xie2024non} and predictive policing \citep{ensign2018runaway}. \citet{perdomo_performative_2021} formulated the problem and proposed \textit{repeated risk minimization} (RRM) to ensure the convergence of model parameters to a stable point under a set of sufficient and necessary conditions. \citet{Mendler2020Stochastic} presented convergence results on stochastic performative optimization. Subsequent work either strived to relax the convergence condition \citep{mofakhami2023performative,zhao2022optimizing} or develop methods to ensure the convergence to performative optimal points under special cases \citep{miller_outside_2021, izzo2021performative}. \citet{brown2020performative} extended performative prediction to stateful settings where the current data distribution is determined by both the previous states and the deployed model. For more related works on PP, we refer to this survey \citep{hardt2023performative}. 

\textbf{Long-term fairness.} There is a large line of work on long-term fairness in machine learning, where we mainly refer to distributionally robust optimization (DRO) \citep{duchi2018learning, duchi2023distributionally} and soft fair regularizers \citep{Kami2011, zafar2017fairness, guldogan2022equal}. DRO refers to the optimization problem where the loss is minimized on the worst-case distribution around current data distribution \citep{duchi2018learning, duchi2023distributionally}, while soft fair regularizers are often used in traditional fair optimization frameworks to penalize unfairness. Only a few pieces of literature touched on fairness problems under performative prediction settings. \citep{milli2018social, humanip,Zhang_2022_ICML,Zhang_2020_Long_term} focused on long-term fairness settings under strategic classification similar to Example \ref{example:ps_sc}. \citet{zezulka2023performativity} pointed out fairness issues in performative prediction and formulated it within causal graphs. \citet{hu2022achieving} also used causality-based models to formulate and ensure short-term and long-term fairness in performative prediction, incurring strong assumptions and expensive computational costs.  \citet{Raab_Boczar_Fazel_Liu_2024} studied a special case where only the agents' retention rates are performative. \citet{peetpare2023long} proposed to use distributionally robust optimization (DRO) to promote fairness in performative prediction, but they proposed assumptions that are hard to verify and can not be derived from other commonly accepted assumptions in PP. Another contemporary work \citep{jia2024distributionally} developed more practical assumptions for DRO under PP settings, but they do not consider SDPP settings and did not discuss the fairness issues under PP settings. \citet{xue2024distributionally} applied DRO to find the PO solutions, but the algorithm convergence needs to be established in a case-by-case basis and with careful calibration. Meanwhile, they also did not discuss the fairness issues.

Finally, it may be worthwhile noting that many works of machine learning fairness exist, but most of them only consider the population as a static distribution (e.g., \citet{martinez2020minimax}). We refer to \citet{caton2024fairness} as a most recent comprehensive survey.

\section{Further elaboration on the concepts}

Here we include the notation table and a Venn diagram to help the readers better understand the technical details of our paper.

\subsection{Notation Table}

\begin{table}[!htb]
      \centering
        
        \begin{tabular}{p{0.2\textwidth} p{0.66\textwidth} }

        \toprule
        \textbf{Notation} & \textbf{Meanings} \\
        \midrule
        $\boldsymbol{\theta}$ & Model parameters
        \\

        \midrule
        $h_{\boldsymbol{\theta}}$ & The decision model parameterized by $\boldsymbol{\theta}$
        \\
        
        \midrule
        $d_{\boldsymbol{\theta}}$ & The fixed point distribution by repeatedly deploying $\boldsymbol{\theta}$
        \\
        
        \midrule
        $\Theta$ & The parameter space
        \\

        \midrule
        $Z$ & Data sample
        \\

        \midrule
        $X$ & Feature in the data sample
        \\

        \midrule
        $Y$ & Label in the data sample
        \\

        \midrule
        $\mathcal{D}$ & Sample distribution
        \\

        \midrule
        $\triangle(\mathcal{Z})$ & Support of sample distribution
        \\

        \midrule
        $\text{T}$ & Transition mapping
        \\

        \midrule
        $\text{T}^k$ & $k$-Delayed transition mapping
        \\

        \midrule
        $\text{T}^{DL}$ & Delayed transition mapping where $k = \lceil r \rceil + 1$
        \\

        \midrule
        $\ell$ & Loss function
        \\

        \midrule
        $s$ & Sensitive attribute, where group $s$ means the set of samples with attribute $s$
        \\

        \midrule
        $\mathcal{S}$ & The set of sensitive attributes
        \\
        
        \midrule
        $\mathcal{L}_s$ & The expected loss of group $s$
        \\

        \midrule
        $\mathcal{D}^{(t)}_s$ & Sample distribution of group $s$ at time $t$
        \\

        \midrule
        $p^{(t)}_s$ & Population fraction of group $s$ at time $t$
        \\

        \midrule
        $\mathcal{W}_1$ & The 1-Wasserstein distance
        \\

        \midrule
        $\triangle_{DP}, \triangle_{EO}, \triangle_{\mathcal{L}}$ & Demographic Parity, Equal Opportunity, Loss Disparity penalty
        \\

        \midrule
        $\mathcal{L}_{\textsf{fair}}$ & Fairness-aware objective
        \\

        \midrule
        $\mathcal{P}$ & The penalty term in fairness aware objective
        \\
        
        \midrule
        $\mathcal{B}$ & Distribution ball, used to limit the choices of re-weighting
        \\
        
        \midrule
        $\boldsymbol{l}^{(t)}$ & Loss-guided re-weighting vector for time step $t$
        \\

        \midrule
        $q^{(t)}_s$ & Group weight of group $s$ at time $t$ in the fair reweighting mechanism
        \\

        \midrule
        $\rho$ & Strength of fair mechanism
        \\
        
        \midrule
        $\gamma$ & Strong convexity coefficient of the loss
        \\
        
        \midrule
        $\beta$ & Joint smoothness coefficient of the loss
        \\

        \midrule
        $\epsilon$ & Joint sensitivity coefficient of the transition mapping
        \\

        \midrule
        $\Tilde{\beta}$ & Joint smoothness coefficient of the fairness-aware objective
        \\

        \midrule
        $\overline{\ell}$ & Maximum loss value given $\Theta$ and $\triangle(\mathcal{Z})$
        \\

        
		\bottomrule
  
		\end{tabular}
  \vspace{2mm}
		\caption{Summary of notation.}\label{table:notation}
\end{table}

\subsection{Venn diagram of concepts in PP}

We present a Venn diagram of different PP cases in Figure \ref{fig:venn}. Within the broad concept of PP, we specifically characterize 3 practical concepts in real-world problems, namely the (1) state-dependent performative shifts, (2) performative retention dynamics, and (3) performative distribution shifts. These concepts represent different dimensions to partition the PP problems.

\begin{figure}[ht]
\begin{center}
\centerline{\includegraphics[width=0.45\columnwidth]{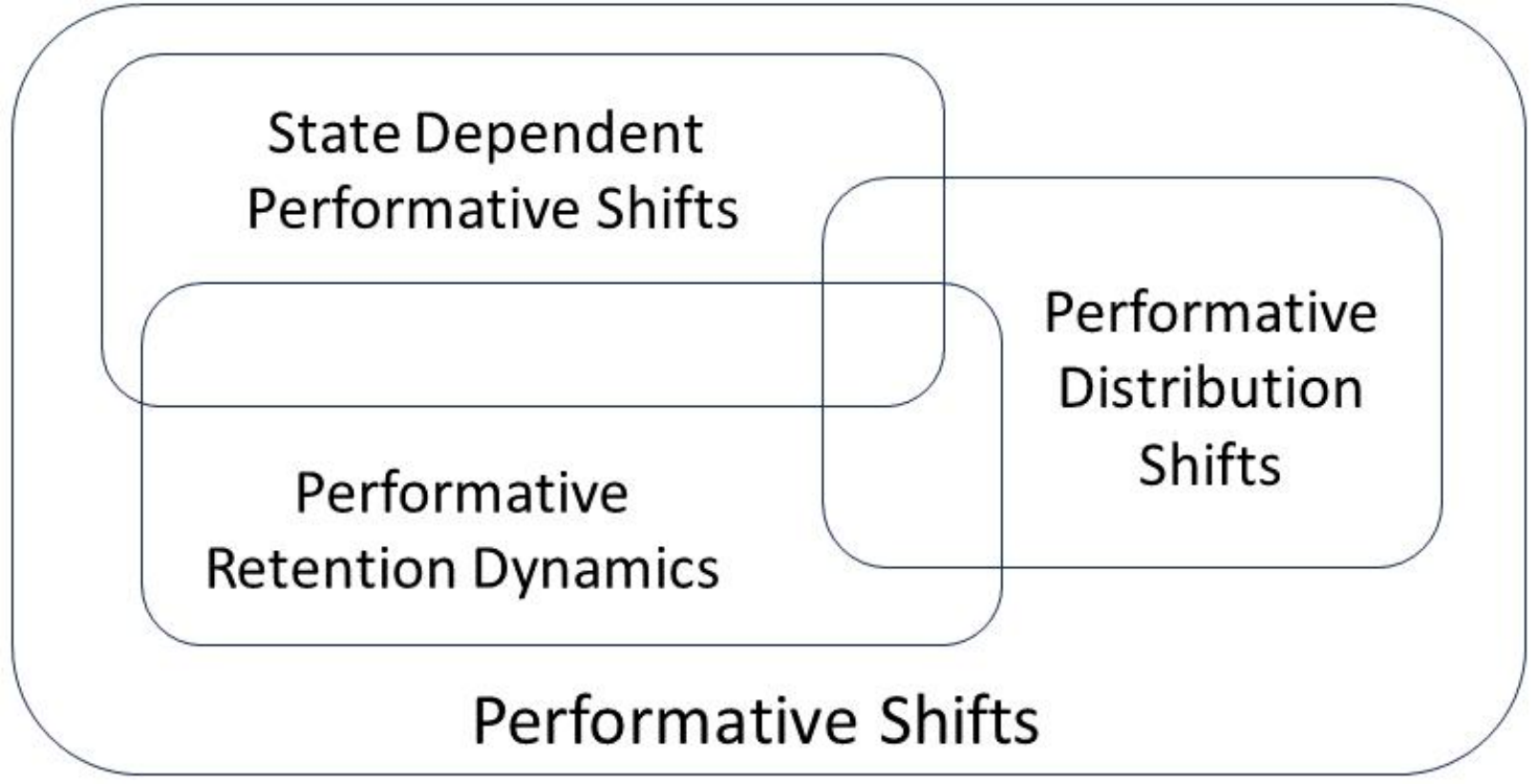}}
\caption{A Venn diagram of the PP cases.}
\label{fig:venn}
\end{center}
\end{figure}

\paragraph{Performative retention dynamics (PRD).} Previous works that study long-term fairness \citep{Zhang_2019_Retention, Zhang_2020_Long_term} mostly assume there to be model-dependent retention rate/population fraction $p_s$ changes, which can all be formulated as PRD problems. Recommendation systems are typical application scenarios of SDPP and PRD since the population fraction in the next iteration depends on both the current population and the current model. The better recommendation model captures a certain user group's interest, the more likely this user group will have high retention rate.

\begin{example} \label{example:retention} (User retention)
    Suppose two demographic groups $a,b$ each follow a static distribution $\mathcal{D}_a$ and $\mathcal{D}_b$, but their fraction in the population changes over time, and thus the overall data distribution is dynamic.
    Specifically, let $p_s^{min}$ be control variables to limit the minimum fraction of group $s$, and their retention rate is determined by a retention function $\pi_s(\boldsymbol{\theta}^{(t)})$. Then we have
    \begin{equation}
        p^{(t)}_s := \frac{p^{(t-1)}_s  \pi_s(\boldsymbol{\theta}^{(t)}) + p_s^{min}}{\sum_{s'} (p^{(t-1)}_{s'}  \pi_{s'}(\boldsymbol{\theta}^{(t)}) + p_{s'}^{min})}, ~~ \mathcal{D}^{(t)} = \sum_{s} p^{(t)}_s \mathcal{D}_s,
    \end{equation}
    which can be modeled by PP in the SDPP setting.
\end{example}

\paragraph{Performative distribution shifts (PDS).} In this paper, PDS specifically means in each group, the data distribution $\mathcal{D}_s$ shifts with the decision models, e.g., in strategic classification, users strategically manipulate their features for better decision outcomes. In recommendation systems, where creators strategically manipulate their features for better decision outcomes from the model \citep{hardt_strategic_2015}, and users' interests can be shifted by the recommended items.

We present a practical example of PP in addition to the one in the main article.

\begin{figure}
    \centering
    \begin{tikzpicture}[
			scale=0.12,
            > = stealth, 
            shorten > = 1pt, 
            auto,
            node distance = 1cm, 
            semithick 
        ]
        \tikzstyle{state}=[
            draw = none,
        ]
        \tikzstyle{hstate}=[
            draw = none,
            fill = gray
        ]
      \node (S) at  (0,14){$S$};
        \node[state] (X0) at  (-15,0){$X_0$};
        \node[state] (X1) at  (0,0){$X_1$};
        \node[state] (X2) at  (15,0){$X_2$};
        \node[hstate] (Y0) at  (-15,7){$Y_0$};
        \node[hstate] (Y1) at  (0,7){$Y_1$};
        \node[hstate] (Y2) at  (15,7){$Y_2$};
        \node[state] (A0) at  (-15,-7){$D_0$};
        \node[state] (A1) at  (0,-7){$D_1$};
        \node[state] (A2) at  (15,-7){$D_2$};
        \path[<-] (X0) edge node {} (Y0);
        \path[->] (X0) edge node {} (A0);
        \path[<-] (X1) edge node {} (Y1);
        \path[->] (X1) edge node {} (A1);
        \path[<-] (X2) edge node {} (Y2);
        \path[->] (X2) edge node {} (A2);
        \path[->] (Y0) edge node {} (Y1);
        \path[->] (Y1) edge node {} (Y2);
        \path[->] (A0) edge node {} (Y1);
        \path[->] (A1) edge node {} (Y2);
        \path[->] (S) edge node {} (Y0);
        \path[->] (S) edge node {} (Y1);
        \path[->] (S) edge node {} (Y2);
        \path[->] (S) edge node {} (A0);
        \path[->] (S) edge[bend left] node {} (A1);
        \path[->] (S) edge node {} (A2);
        \path[->] (S) edge node {} (X0);
        \path[->] (S) edge[bend right] node{} (X1);
        \path[->] (S) edge node {} (X2);
    \end{tikzpicture}
    \vskip -0.1in
    \caption{POMDP}
    \vskip -0.1in
  \label{fig:mdp}
\end{figure}
\begin{example} \label{example:POMDP} (Partially observable Markov decision process (POMDP) as PDS \citep{Zhang_2020_Long_term})
    The population fractions of the two groups $p_s$ are static, i.e., $\mathcal{D}^{(t)} = \sum_{s} p_s \mathcal{D}_s^{(t)}$, but the feature distribution changes with the deployed decision model, and such changes follow a POMDP as shown in Figure \ref{fig:mdp}.

    Specifically, within each group, there are two sub-populations, qualified and disqualified, e.g., the qualified sub-population in group $a$ satisfies $y=1, s = a$. Each sub-population has a static distribution, $\mathcal{D}_{s,y}$, but each group's distribution is dynamic and determined by
    \begin{equation}
        \mathcal{D}_s^{(t)} = \sum_y \alpha^{(t)}_{s,y} \mathcal{D}_{s,y}, ~~ \alpha^{(t)}_{s,y} := \mathbb{P}(Y^{(t)} = 1 | S=s),
    \end{equation}
    where $Y^{(t)}$ is the random variable of the agent's true label at time step $t$.
    The POMDP assumes the decision model and the sensitive attribute jointly determine the distribution of the true label distribution at the next time step, and we have
    \begin{equation}
        \alpha^{(t)}_{s,1} = \sum_{y = 0}^1 \alpha^{(t-1)}_{s,y}  \underbrace{\mathbb{P}(Y^{(t)} = 1 | Y^{(t-1)} = y, \boldsymbol{\theta} = \boldsymbol{\theta}^{(t)}, S=s)}_{\text{from POMDP}},
    \end{equation}
    and we can similarly write out the dynamics of all the $\alpha^{(t)}_{s,y}$ terms and thus the transition mapping $T$ as a PDS problem.
\end{example}

\subsection{Fair-RERM}\label{app:frerm}

Here we provide the algorithm for the Fair-RERM class and the results of its sample complexity.

\begin{algorithm}[H]
   \caption{Fair-RERM}
   \label{alg:fair_RERM}
\begin{algorithmic}
   \REQUIRE $t=0$,  $\mathcal{D}^{(0)}$, $\rho$,  $\boldsymbol{\theta}^{(0)}$, $\delta_{\theta}$, choose fair mechanism 
   \REPEAT
   \STATE Sample $\mathcal{Z}^{(t)}$ from $\mathcal{D}^{(t)}$
   \STATE $\boldsymbol{\theta}^{(t+1)} \leftarrow \arg \min_{\boldsymbol{\theta}}  \mathcal{L}_{\textsf{fair}}(\boldsymbol{\theta};\mathcal{Z}^{(t)},\rho)$
   \STATE Get the samples from $\mathcal{D}^{(t+1)}$ changing according to different schema
   \STATE $t \leftarrow t+1$
   \UNTIL{$\| \boldsymbol{\theta}^{(t)} - \boldsymbol{\theta}^{(t-1)}\|_2 \leq \delta_{\theta}$}
\end{algorithmic}
\end{algorithm}

\begin{theorem}[Convergence of fair-RERM] \label{thm:fair_RERM}
Suppose $\exists \alpha > 1, \mu > 0$ such that $\int_{\mathbb{R}^m} e^{\mu |x|^{\alpha}} Z dx$ is finite $\forall Z \in \triangle(\mathcal{Z})$. For a given convergence radius $\delta \in (0,1)$, take $n_t = \mathcal{O}\left(\frac{\log(t/p)}{(\epsilon(1+\Tilde{\beta}/\gamma)\delta)^m}\right)$ samples at $t$.  
    If $2 \epsilon(1 + \Tilde{\beta}/\gamma) < 1$, then with probability $1-p$, the iterates of fair-RERM are within a radius $\delta$ of the Fair-PS solution for $t \geq (1 - 2 \epsilon(1 + \Tilde{\beta}/\gamma) O(\log (1/\delta))$. 
\end{theorem}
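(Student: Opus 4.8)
The plan is to mirror the standard perturbation-of-RRM argument from \citet{perdomo_performative_2021} and \citet{brown2020performative}, tracking the extra error introduced by using finite samples $\mathcal{Z}^{(t)}\sim\mathcal{D}^{(t)}$ instead of the true distribution. First, I would establish the \emph{population}-level contraction: by Proposition~\ref{prop:unique_fair_PS} and Theorem~\ref{thm:fair_RRM}, when $\epsilon(1+\Tilde{\beta}/\gamma)<1$ the map $\boldsymbol{\theta}\mapsto \operatorname{argmin}_{\boldsymbol{\theta}'}\mathcal{L}_{\textsf{fair}}(\boldsymbol{\theta}';\text{Tr}(\boldsymbol{\theta};\mathcal{D}),\rho)$ followed by the transition is a contraction in $\boldsymbol{\theta}$ with modulus $\epsilon(1+\Tilde{\beta}/\gamma)$; the key structural facts are that $\mathcal{L}_{\textsf{fair}}$ is still $\gamma$-strongly convex in $\boldsymbol{\theta}$ (the fair penalty does not decrease curvature under the chosen designs) and $\Tilde{\beta}$-jointly smooth, so the argmin operator is $(\Tilde{\beta}/\gamma)$-Lipschitz in the data (in $\mathcal{W}_1$), exactly as in Lemma~\ref{lemma:SDPP} with $\beta$ replaced by $\Tilde{\beta}$.

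Second, I would quantify the one-step sampling error. Using the empirical distribution $\hat{\mathcal{D}}^{(t)}_{n_t}$ of $n_t$ i.i.d.\ draws, the empirical minimizer $\boldsymbol{\theta}^{(t+1)}$ differs from the population minimizer on $\mathcal{D}^{(t)}$ by at most $(\Tilde{\beta}/\gamma)\,\mathcal{W}_1(\hat{\mathcal{D}}^{(t)}_{n_t},\mathcal{D}^{(t)})$, again by strong convexity plus joint smoothness of the gradient map. The sub-exponential moment hypothesis $\int e^{\mu|x|^\alpha}\,Z\,dx<\infty$ is precisely what is needed to invoke the Fournier–Guillin / Weed–Bach rate for empirical Wasserstein convergence in $\mathbb{R}^m$: with probability $1-p/t^2$ (to get a union bound over all rounds), $\mathcal{W}_1(\hat{\mathcal{D}}^{(t)}_{n_t},\mathcal{D}^{(t)})\le \tilde{\mathcal{O}}(n_t^{-1/m}\sqrt{\log(t/p)})$, which is below the target tolerance once $n_t=\mathcal{O}\!\big(\log(t/p)\,/\,(\epsilon(1+\Tilde{\beta}/\gamma)\delta)^m\big)$.

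Third, I would close the recursion. Writing $e_t:=\|\boldsymbol{\theta}^{(t)}-\boldsymbol{\theta}^{\text{PS}}_{\textsf{fair}}\|_2$, the contraction gives $e_{t+1}\le \epsilon(1+\Tilde{\beta}/\gamma)\,e_t + c_t$ where $c_t$ is the per-round sampling error just bounded; unrolling a geometric series with ratio $\epsilon(1+\Tilde{\beta}/\gamma)$, and using the stronger hypothesis $2\epsilon(1+\Tilde{\beta}/\gamma)<1$ to ensure the accumulated noise (whose variance-style terms compound) stays summable and does not blow up the geometric factor, yields $e_t\le \delta$ after $t\ge (1-2\epsilon(1+\Tilde{\beta}/\gamma))^{-1}\,\mathcal{O}(\log(1/\delta))$ rounds, holding jointly with probability $1-p$ by the union bound. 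The same argument runs verbatim for the $k$-delayed schema by composing $k$ transition steps (each $\epsilon$-Lipschitz), which is why the theorem statement suppresses the schema.

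The main obstacle I anticipate is verifying that each of the three proposed fair objectives genuinely retains $\gamma$-strong convexity and admits the stated $\Tilde{\beta}$ joint-smoothness constant \emph{uniformly over the evolving distributions} — in particular for the sample re-weighting design \eqref{eq:fair_reweight}, where the weights $\boldsymbol{q}^{(t-1)}$ themselves depend on past iterates and losses, so one must show this dependence is Lipschitz and does not couple back destructively; bounding $\|\nabla_{\boldsymbol{\theta}}\mathcal{L}_{\textsf{fair}}\|$-type quantities there uses $\overline{\ell}=\sup_{\boldsymbol{\theta},Z}\ell(\boldsymbol{\theta};Z)<\infty$ and the fact that $\boldsymbol{q}$ lies in the simplex. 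A secondary technical point is the union bound over the infinite horizon: choosing the failure probabilities at round $t$ to scale like $p/t^2$ (hence the $\log(t/p)$ in $n_t$) makes $\sum_t p/t^2\le p$, which is the reason the sample size grows only logarithmically in $t$.
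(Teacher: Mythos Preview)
Your overall architecture---population-level contraction, one-step Wasserstein sampling error via Fournier--Guillin under the sub-exponential moment assumption, union bound with per-round failure $\mathcal{O}(p/t^2)$, then unrolling a perturbed geometric recursion---is exactly the route the paper takes (it cites \cite{brown2020performative}, Theorem~5, and \cite{fournier_hal-00915365}).

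There is one genuine technical gap. You phrase the contraction and the recursion in terms of $e_t:=\|\boldsymbol{\theta}^{(t)}-\boldsymbol{\theta}^{\text{PS}}_{\textsf{fair}}\|_2$ alone, claiming $e_{t+1}\le \epsilon(1+\Tilde{\beta}/\gamma)\,e_t + c_t$. In the \emph{state-dependent} setting this is not available: $\mathcal{D}^{(t)}=\text{T}(\boldsymbol{\theta}^{(t)};\mathcal{D}^{(t-1)})$ depends on the previous distribution, so two trajectories with the same $\boldsymbol{\theta}^{(t)}$ but different $\mathcal{D}^{(t-1)}$ do not produce the same $\boldsymbol{\theta}^{(t+1)}$, and there is no self-map on $\Theta$ to contract. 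The paper handles this by working throughout with the \emph{pair} metric
\[
d_{\text{pair}}\big((\boldsymbol{\theta},\mathcal{D}),(\boldsymbol{\theta}',\mathcal{D}')\big)=\|\boldsymbol{\theta}-\boldsymbol{\theta}'\|_2+\mathcal{W}_1(\mathcal{D},\mathcal{D}'),
\]
showing that the Fair-RERM map $(\boldsymbol{\theta},\mathcal{D})\mapsto(\hat{G}^n_{\textsf{fair}}(\boldsymbol{\theta},\mathcal{D}),\text{T}(\boldsymbol{\theta};\mathcal{D}))$ contracts $d_{\text{pair}}$ by the factor $\epsilon(1+\Tilde{\beta}/\gamma)$ up to the sampling perturbation, and then running your two-case argument (outside the $\delta$-ball: strict contraction; inside: absorption) in this joint metric. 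Once you replace $e_t$ by $d_{\text{pair}}\big((\boldsymbol{\theta}^{(t)},\mathcal{D}^{(t-1)}),(\boldsymbol{\theta}^{\text{PS}}_{\textsf{fair}},\mathcal{D}^{\text{PS}}_{\textsf{fair}})\big)$, the rest of your outline goes through verbatim; the factor $2$ in the hypothesis $2\epsilon(1+\Tilde{\beta}/\gamma)<1$ arises precisely from the case split (adding the $\delta$-sized sampling term to the contraction term when $d_{\text{pair}}\ge\delta$), not from a variance accumulation as you suggest. Your anticipated obstacle about $\gamma$-strong convexity and $\Tilde{\beta}$-smoothness of $\mathcal{L}_{\textsf{fair}}$ is handled separately in the paper by dedicated lemmas and is not re-proved inside this theorem.
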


\subsection{General guidance of designing fairness objectives under the SDPP setting}\label{app:design}

The key to designing a fairness objective while ensuring the convergence of the iterative algorithm is to ensure the convexity of the fairness penalty. Example \ref{example:group_loss_var} shows that group loss variance cannot preserve the convexity of $\mathcal{L}_{\textsf{fair}}$, so it fails to be a plausible objective. Generally, denote the penalty term as $\mathcal{P}(\boldsymbol{\theta}, \mathcal{D}, \rho)$ and the fair-regularized loss $\mathcal{L}_{\textsf{fair}} = \mathcal{L} + \mathcal{P}(\boldsymbol{\theta}, \mathcal{D}, \rho)$, we can derive a general sufficient condition for the convergence to a unique Fair-PS solution for Fair-RRM and Fair-RERM in Thm. \ref{thm:general}.

\begin{theorem}[Convergence of general fairness objectives]\label{thm:general}
    If $\mathcal{P}(\boldsymbol{\theta}, \mathcal{D}, \rho)$ is convex and $\beta_{P}$-smooth, and $\Tilde{\beta} = \beta_P + \beta$. Then same convergence results in Thm. \ref{thm:fair_RRM} and Thm. \ref{thm:fair_RERM} apply to $\mathcal{L}_{\textsf{fair}}$ with only the different $\Tilde{\beta}$ values dependent on $\mathcal{P}$.
\end{theorem}

Proof of Thm. \ref{thm:general} is almost identical to Thm. \ref{thm:fair_RRM} and Thm. \ref{thm:fair_RERM}, while the only difference lies in identifying that $\mathcal{L}_{\textsf{fair}}$ is $\Tilde{\beta}$-smooth.

However, it can still be highly non-trivial to design a fair objective in practice due to the following reasons: (i) it is hard to design an objective as interpretable as the ones we mentioned in Section \ref{sec:fair_mechanism}. The squared loss as a penalty term is well-motivated and directly penalizes the groups/samples with the higher losses; (ii) It becomes more difficult to obtain the concrete parameters for the convergence guarantee. For example, $\bar{\ell}$ in Section \ref{section:convergence} becomes the upper bound of $\frac{\partial g}{\partial \ell}$ and this may be hard to obtain and interpret. Overall, we see the formulations in Section \ref{sec:fair_mechanism} as more appropriate choices because loss disparity is applicable to both classification and regression settings. Also, many prominent works on recommendation systems (e.g., \citep{hashimoto2018fairness, Zhang_2019_Retention}) assumed the well-being of users are determined by the loss.

\subsection{Only measure fairness at PS solution}\label{app:mfair}

We provide an example to illustrate that fairness is not guaranteed even if we ensure instantaneous fairness without stability being achieved. 

\begin{example} (Fairness violation due to performative shifts) Consider a binary strategic classification problem with two groups $a,b$ and is a threshold model $h_{\theta}(X) = \mathbf{1}\{X \geq \theta\}$. $p_a, p_b$ are fixed and both groups satisfy $\mathbb{P}(Y^{(t)}=1) = \min\{0,\max\{X^{(t)},1\}\}$, and the transitions satisfy $X^{(t)} = X^{(0)} + (0.1+0.1 \cdot \boldsymbol{1}\{s=a\})\theta^{(t)}$ but do not result in any changes in labels. If at $t=0$, both groups have features $X^{(0)}$ subject to uniform distribution on $[0,1]$, then any $\theta = 0.5$ satisfies accuracy parity. However, at $t=1$, group $a$ has a shifted distribution, resulting in $\theta = 0.5$ to be unfair. 
\end{example}

\subsection{Additional corollaries on PS-fairness guarantee} \label{app:fairness_guarantee}

Below we provide an example corollary derived from Theorem \ref{thm:fair_improvement}, and we can similarly derive the PS-Fairness guarantee under other choices of algorithms and mechanisms.

\begin{corollary}
    In Algorithm \ref{alg:fair_RRM}, under we can obtain $\rho_{max}$ under different fairness mechanisms, such that $\epsilon(1 + \Tilde{\beta}_{max}/\gamma) = 1$, where 
    $$\Tilde{\beta}_{max}:= \begin{cases}
     (2 \rho_{max} \overline{\ell} + 1)\beta, ~\text{for regularization method \eqref{eq:squared_group_loss_penalty} or \eqref{eq:fair_loss_sample_level} }\\
     (\rho_{max} \overline{\ell} + 1)\beta,~\text{for sample re-weighting method \eqref{eq:fair_reweight} }
    \end{cases}$$
    Then we define
    \begin{equation}
    \triangle^{\text{PS}}_{\textsf{max-fair},\mathcal{L}}:= \lim_{\delta \rightarrow 0^+} \triangle^{\text{PS}}_{\textsf{fair},\mathcal{L}}(\rho_{max} - \delta),
    \end{equation}
    and we can guarantee that for fairness constraint such that $\triangle^{\text{PS}}_{\textsf{fair-tolerance},\mathcal{L}} \geq \triangle^{\text{PS}}_{\textsf{max-fair},\mathcal{L}}$, we can find a $\rho < \rho_{max}$ that produce a Fair-PS solution with fairness penalty below the constraint, i.e., $\triangle^{\text{PS}}_{\textsf{fair},\mathcal{L}}(\rho) < \triangle^{\text{PS}}_{\textsf{fair-tolerance},\mathcal{L}}$
    
\end{corollary}

\section{Additional experiments}\label{app:addexp}

All experiments are run on the CPU of a Macbook Pro with Apple M1 Pro chip and 16GB memory. We use Python 3.9 for all tasks. For datasets, Yann LeCun and Corinna Cortes hold the copyright of MNIST, which is a derivative work from original NIST datasets. MNIST dataset is made available under the terms of the Creative Commons Attribution-Share Alike 3.0 license; the credit dataset has been published by \citep{creditdata}; In App. \ref{app:converge}, we visualize the convergence of performative loss for all the above experiments. We also visualize the evolution of minority group fractions for them. In App. \ref{app:gcls}, we perform experiments on a performative Gaussian data classification task. In App. \ref{app:multiple_exp}, we conduct experiments on 3 groups. In App. \ref{app:e5}, we visualize the unconvergence of the fairness penalty using group loss variance as a complementary to Example \ref{example:group_loss_var}; Finally, we examine the \textbf{k-delayed} schema on all our fairness mechanisms on Credit data in App. \ref{app:k_delayed} and obtain the similar results.

\subsection{Complementary results of main experiments}\label{app:converge}

\begin{figure}[H]
    \centering
    \begin{subfigure}[b]{0.43\textwidth}
        \includegraphics[width=\textwidth]{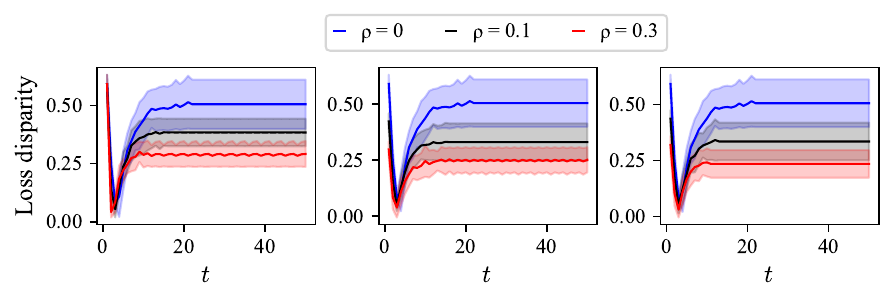}
        \caption{Dynamics of group-wise loss disparity for \textbf{MNIST} data when $\rho \in \{0,0.1,0.3\}$ When $\rho = 0$, the policy is just \texttt{RERM}.}
        \label{subfig:ldisp_mnist}
    \end{subfigure}
    \hspace{0.5cm}
    \begin{subfigure}[b]{0.43\textwidth}
        \includegraphics[width=\textwidth]{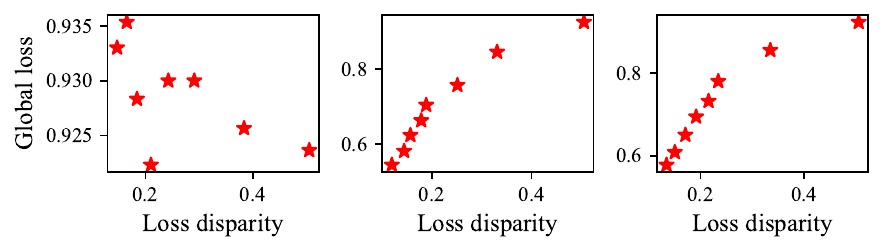}
        \caption{Tradeoff \textbf{not necessarily exists} at the PS solution. ``Stars" are produced by varying $\rho$.}
        \label{subfig:tradeoff_mnist}
    \end{subfigure}
    \caption{MNIST results: \texttt{Fair-RERM-RW} (left), \texttt{Fair-RERM-GLP} (middle), \texttt{Fair-RERM-SLP} (right).}
    \label{fig:mnist}
\end{figure}

\paragraph{Performative MNIST classification.} MNIST data \citep{deng2012mnist} consists of handwritten digits images. We randomly select 1000 images as the training set with 2 groups, where the first contains digits from $0$ to $4$ and the second includes digits from $5$ to $9$. We consider the same group fraction dynamics as before with $p_s^{\min} = 0.1$ for both groups. Using a 2-layer MLP for risk minimization, we perform \texttt{RERM} and \texttt{Fair-RERM} for $50$ rounds. Fig.~\ref{subfig:ldisp_mnist} verifies that \texttt{Fair-RERM} can still improve fairness at the PS solution. However, this set of experiments entails non-convexity because of the use of a neural network and there is no guarantee a PS solution will exist. In Fig. \ref{subfig:ldisp_mnist}, slight instability exists for the black and red lines ($\rho = 0.3$) of \texttt{Fair-RERM-RW} and \texttt{Fair-RERM-GLP}. \textbf{Notably}, Fig. \ref{subfig:tradeoff_mnist} demonstrates that the tradeoff between fairness and performative loss does not necessarily exist. The Fair-PS solution may converge to a local stationary point better for both fairness and performative loss. Although it is challenging to theoretically explain this phenomenon, the results suggest that our fairness mechanisms may still be useful in practice with deep learning models.

Firstly, provide visualizations of the convergence of performative loss for all experiments of the main paper in Fig. \ref{fig:converge}. Importantly, Fig. \ref{subfig:mnist_nonconverge} demonstrates the unconvergence of \texttt{Fair-RERM-RW} and \texttt{Fair-RERM-GLP} when $\rho = 0.3$. Secondly, we provide the evolution of minority group fractions for all three datasets in Fig. \ref{fig:prate}. These results clearly demonstrate our fairness mechanisms are effective in relieving polarization effects. Finally, we visualize the group-wise loss at Fair-PS solutions when $\rho$ increases in Fig. \ref{fig:groupwise}. It is worth noting that the fairness mechanisms do not always result in the majority group having higher loss and the minority group having lower loss. It is possible that the Fair-PS solutions are better for both groups (the right plot of Credit data and the middle/right plots of MNIST data). 

\begin{figure}[h]
    \centering
    \includegraphics[width=0.7\textwidth]{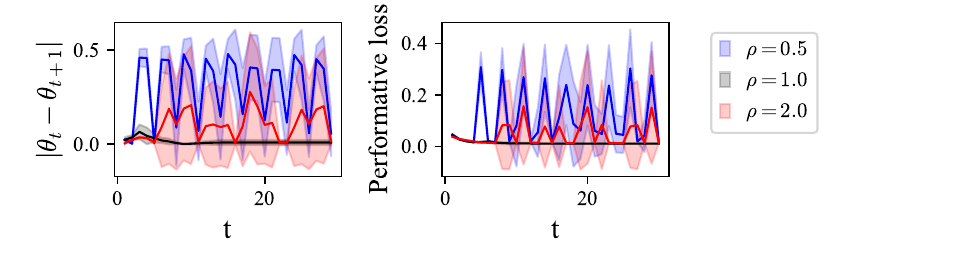}
    \caption{Dynamics of $\theta$ and performative loss in Gaussian mean estimation setting while using group loss variance as the fairness mechanism and $\rho \in \{0.5, 1.0, 2.0\}$. The dynamics demonstrate non-convergence of this mechanism.}
    \label{fig:e5}
\end{figure}

\begin{figure}[h]
    \centering
    \vspace{0.2cm}
    
    \begin{subfigure}[t]{0.245\textwidth}
        \includegraphics[width=\textwidth]{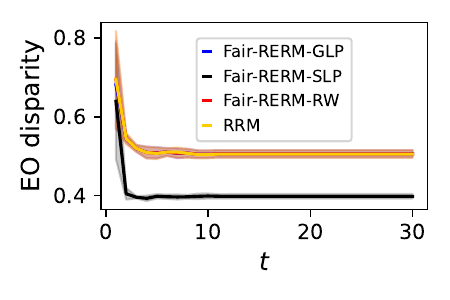}
        \caption{\textbf{Credit} (EO)}
        \label{subfig:eostrat}
    \end{subfigure}
    \hfill
    \begin{subfigure}[t]{0.245\textwidth}
        \includegraphics[width=\textwidth]{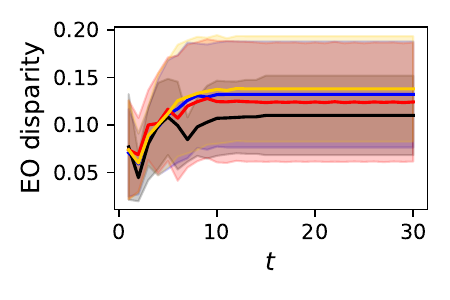}
        \caption{\textbf{Gaussian} (EO)}
        \label{subfig:eogclf}
    \end{subfigure}
    \hfill
    \begin{subfigure}[t]{0.245\textwidth}
        \includegraphics[width=\textwidth]{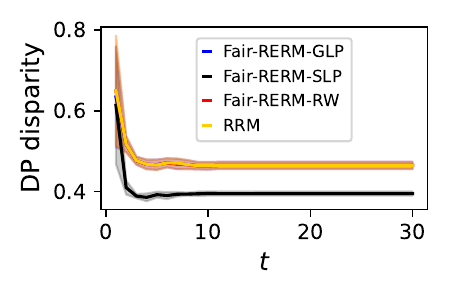}
        \caption{\textbf{Credit} (DP)}
        \label{subfig:dpstrat}
    \end{subfigure}
    \hfill
    \begin{subfigure}[t]{0.245\textwidth}
        \includegraphics[width=\textwidth]{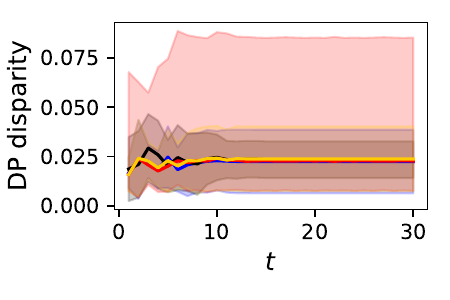}
        \caption{\textbf{Gaussian} (DP)}
        \label{subfig:dpgclf}
    \end{subfigure}
    
    \caption{Equal opportunity (EO) and demographic parity (DP) disparities on \textbf{Credit} and \textbf{Gaussian classification} datasets with $\rho=0.3$.}
    \label{fig:eo_dp_combined}
\end{figure}

\begin{figure}[h]
    \centering
    \begin{subfigure}[b]{0.75\textwidth}
        \includegraphics[width=\textwidth]{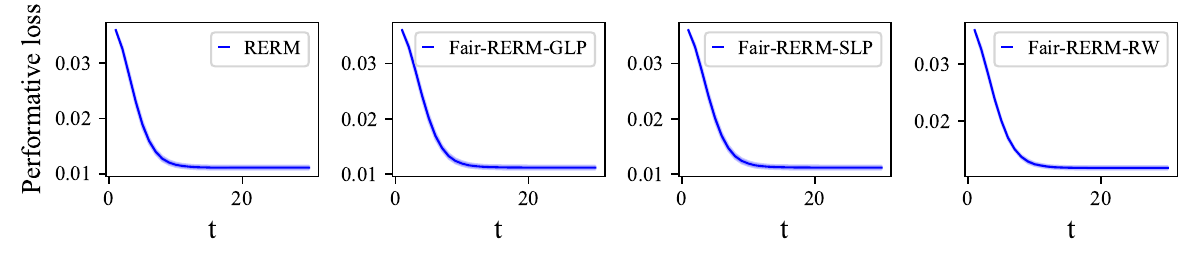}
        \caption{Dynamics of the performative loss for \textbf{Gaussian mean estimation task} when $\rho = 0.1$: \texttt{Fair-RERM-RW} (left plot), \texttt{Fair-RERM-GLP} (middle plot), \texttt{Fair-RERM-SLP} (right plot).}
        \label{subfig:gm_converge}
    \end{subfigure}
    \vspace{0.2cm}
    \begin{subfigure}[b]{0.75\textwidth}
        \includegraphics[width=\textwidth]{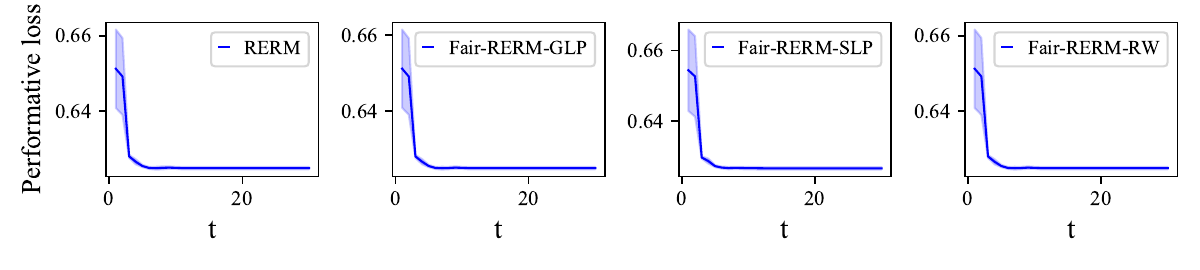}
        \caption{Dynamics of the performative loss for \textbf{Credit data task} when $\rho = 0.1$: \texttt{Fair-RERM-RW} (left plot), \texttt{Fair-RERM-GLP} (middle plot), \texttt{Fair-RERM-SLP} (right plot).}
        \label{subfig:credit_converge}
    \end{subfigure}
    \vspace{0.2cm}
    \begin{subfigure}[b]{0.75\textwidth}
        \includegraphics[width=\textwidth]{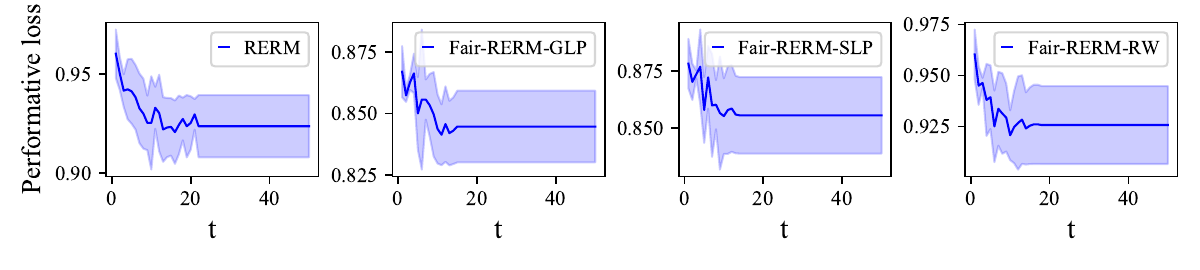}
        \caption{Dynamics of the performative loss for \textbf{MNIST classification task} when $\rho = 0.1$: \texttt{Fair-RERM-RW} (left plot), \texttt{Fair-RERM-GLP} (middle plot), \texttt{Fair-RERM-SLP} (right plot).}
        \label{subfig:mnist_converge}
    \end{subfigure}
    \vspace{0.2cm}
    \begin{subfigure}[b]{0.75\textwidth}
        \includegraphics[width=\textwidth]{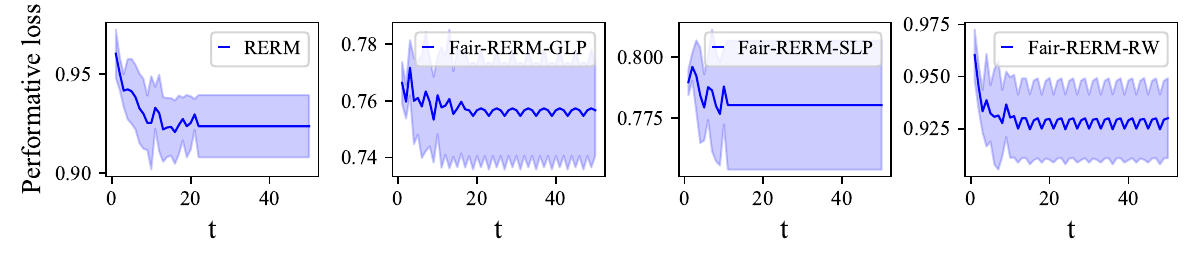}
        \caption{Dynamics of the performative loss for \textbf{MNIST classification task} when $\rho = 0.3$: \texttt{Fair-RERM-RW} (left plot), \texttt{Fair-RERM-GLP} (middle plot), \texttt{Fair-RERM-SLP} (right plot).}
        \label{subfig:mnist_nonconverge}
    \end{subfigure}
    \caption{Peformative loss dynamics under different $\rho$ }
    \label{fig:converge}
\end{figure}

\begin{figure}[h]
    \centering
    \includegraphics[width=0.6\linewidth]{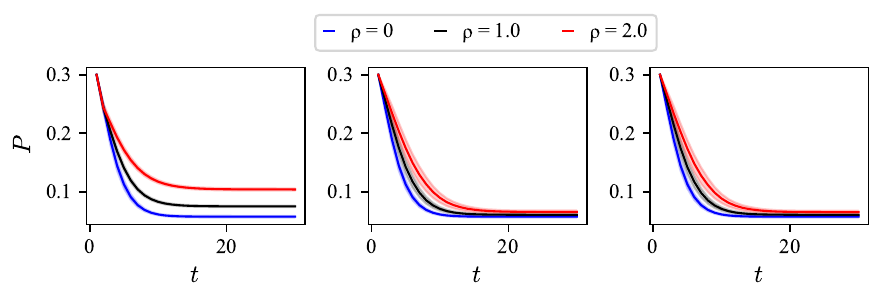}
    \includegraphics[width=0.6\linewidth]{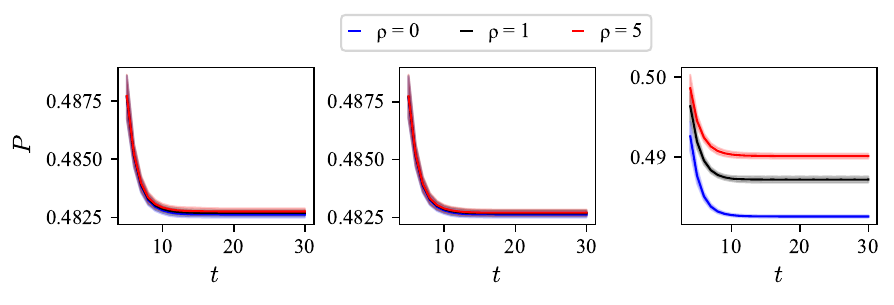}
    \includegraphics[width=0.6\linewidth]{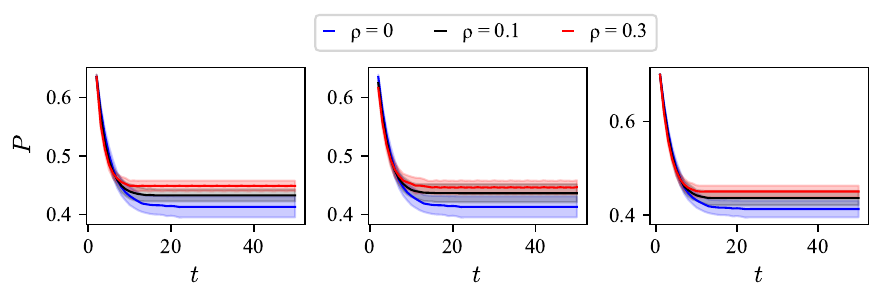}
    \caption{The evolutions of minority group fractions: top (Gaussian mean estimation); middle (Credit data); bottom (MNIST data).}
    \label{fig:prate}
\end{figure}

\begin{figure}[h]
    \centering
    \includegraphics[width=0.6\linewidth]{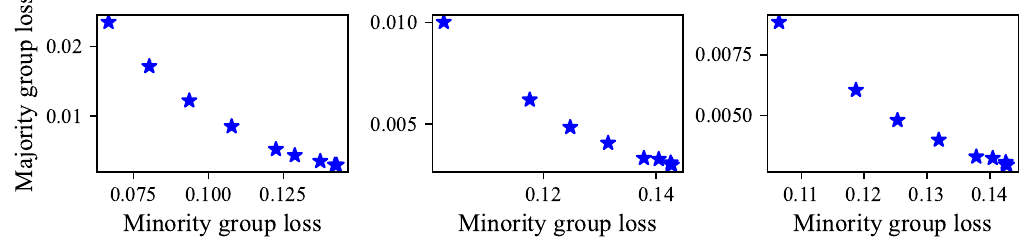}
    \includegraphics[width=0.6\linewidth]{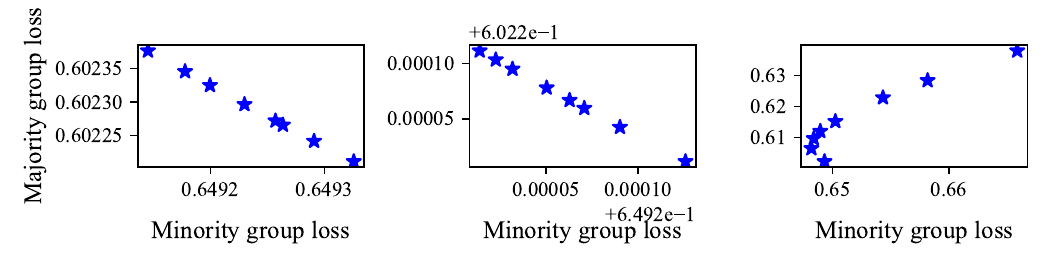}
    \includegraphics[width=0.6\linewidth]{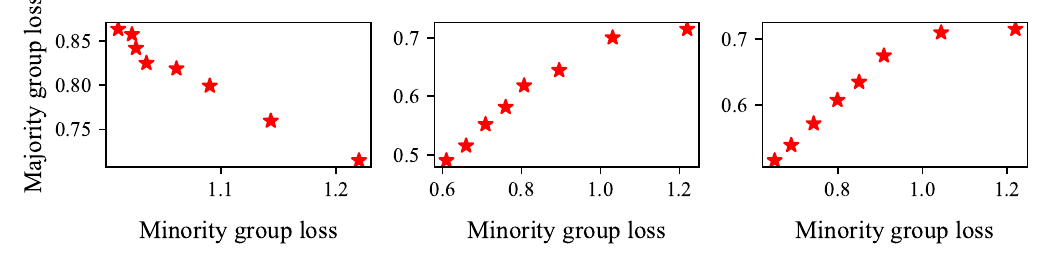}
    \caption{Group-wise loss at Fair-PS solutions while varying $\rho$: top (Gaussian mean estimation); middle (Credit data); bottom (MNIST data).}
    \label{fig:groupwise}
\end{figure}

\begin{figure}[h]
\centering
\includegraphics[width=0.8\textwidth]{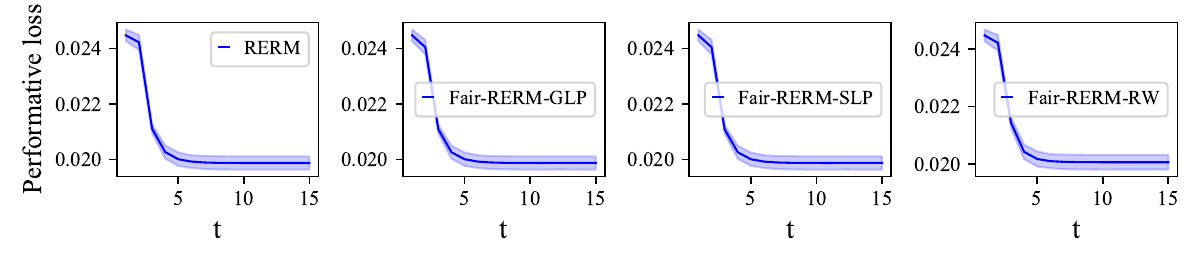}
\vspace{-0.5cm}
\caption{Dynamics of the performative loss for \textbf{Gaussian mean estimation task with multiple groups} when $\rho = 0.3$: \texttt{Fair-RERM-RW} (left plot), \texttt{Fair-RERM-GLP} (middle plot), \texttt{Fair-RERM-SLP} (right plot).}
\label{fig:mgm_converge}
\end{figure}

\begin{figure}[h]
    \centering
        \includegraphics[width=0.6\textwidth]{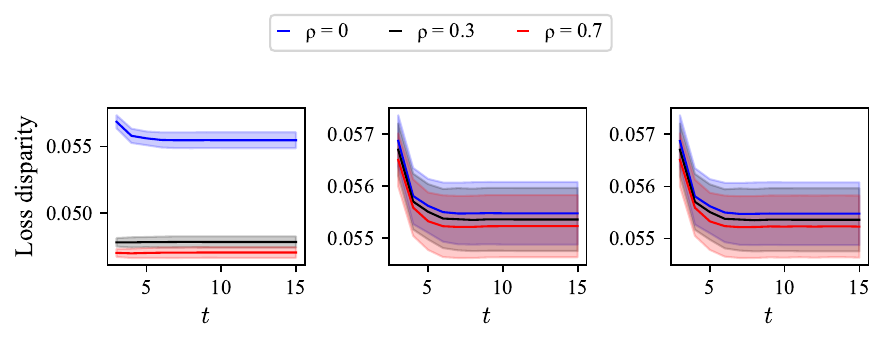}
        \caption{Dynamics of the performative loss disparity under different $\rho$: \texttt{Fair-RERM-RW} (left plot), \texttt{Fair-RERM-GLP} (middle plot), \texttt{Fair-RERM-SLP} (right plot).}
        \label{fig:ldisp_mgm}
\end{figure}

\subsection{Gaussian mean estimation with multiple groups}\label{app:multiple_exp}

In this section, we present the experimental results on the Gaussian mean estimation task with 3 groups $s \in \{a, b, c\}$ where $p_a^{(0)} = 0.15, p_b^{(0)} = 0.25, p_c^{(0)} = 0.6$ with different target values. We let $y_a = 0.3 + \epsilon, y_b = 0.5 + \epsilon, y_c = 0.7 + \epsilon$ where $\epsilon \sim \mathcal{N}(0,0.05)$.  The minimum group fraction for each group is $p^{min} = \{0.1,0.1,0.1\}$. At each time $t$, denote $\mathcal{R}(s,t) = \left(1 - \sum_{s'\in\{a,b\}} p^{min}_{s'}\right) \times \frac{1}{2}\left(p_s^t + \frac{\mathcal{L}(\boldsymbol{\theta}^{(t)};\mathcal{D}_{i_s}^{(t)})}{\sum_{s'} \mathcal{L}(\boldsymbol{\theta}^{(t)};\mathcal{D}_{s'}^{(t)})}\right) + p_s^{min}$. $-s$ is the group other than $s$, while $i_s$ is found by first ranking all group-wise losses in ascending order and then selecting the $(n+1-i)-th$ largest loss if group $s$ has the $i-th$ largest loss. This retention dynamic is a direct generalization of the two-group case. Next, we show the retention of group $p^{(t+1)}_s$ at time $t+1$ is shown as Eqn.~\eqref{eq:exp_retention_multi}, resulting in the new data distribution $D^{(t+1)}$.

\begin{equation}\label{eq:exp_retention_multi}
        p^{(t+1)}_s := \frac{\mathcal{R}(s,t)}{\sum_{s' \in \{a,b\}}\mathcal{R}(s,t)}
\end{equation}

We perform the mean estimation task for the whole data distribution with both \texttt{RERM} and \texttt{Fair-RERM} using the linear regression model. For \texttt{Fair-RERM}, we use \texttt{RERM}, \texttt{Fair-RERM-RW} and \texttt{Fair-RERM} to perform $30$ rounds of risk minimization where the loss is mean squared error (MSE), and name them as \texttt{Fair-RERM-GLP} (group-level penalty), \texttt{Fair-RERM-SLP} (sample-level penalty) and \texttt{Fair-RERM-RW} (reweighting). We first verify the convergence of performative loss of the above methods in Fig. \ref{fig:mgm_converge} where all \texttt{Fair-RERM} methods have $\rho=0.1$. The results demonstrate all methods successfully converge to the PS point.

Next, we verify the effectiveness of \texttt{Fair-RERM} by visualizing the dynamics of performative loss disparity between the groups while varying $\rho$ in Fig. \ref{fig:ldisp_mgm}, where the higher $\rho$ still results in lower loss disparity at the stable point, revealing that \texttt{Fair-RERM-RW} and \texttt{Fair-RERM} are effective.

\subsection{ACSIncome data classification.}\label{app:ACSIncome}

we conduct an additional set of experiments on the ACSIncome-CA dataset \cite{ding2022retiringadultnewdatasets}, a pre-processed version of the widely used Adult dataset from recent fairness literature. Specifically, the prediction target is whether an individual's annual income exceeds $50,000$ USD. We use all numerical features (AGEP, SCHL, WKHP) for RERM and Fair-RERM, with AGEP serving as the sensitive attribute (Age > 35). All other settings align exactly with the Credit data experiments in Section \ref{section:numerical}. We show the dynamics of group-wise loss disparity and evolution of the majority group fraction when $\rho = \{0,1.0,3.0\}$.

\begin{figure}[h]
    \centering
        \includegraphics[trim=0.21cm 0.35cm 0.25cm 0.2cm,clip,width=0.8\textwidth]{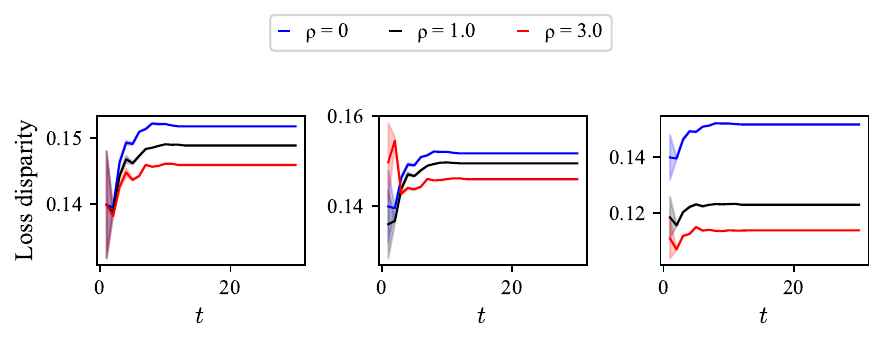}
        \caption{Dynamics of group-wise loss disparity for \textbf{ACSIncome-CA} data when $\rho \in \{0,1.0,3.0\}$. When $\rho = 0$, the policy is just \texttt{RERM} It is clear that when $\rho$ becomes larger, the group-wise loss disparity becomes lower.}
        \includegraphics[trim=0.21cm 0.35cm 0.25cm 0.2cm,clip,width=0.8\textwidth]{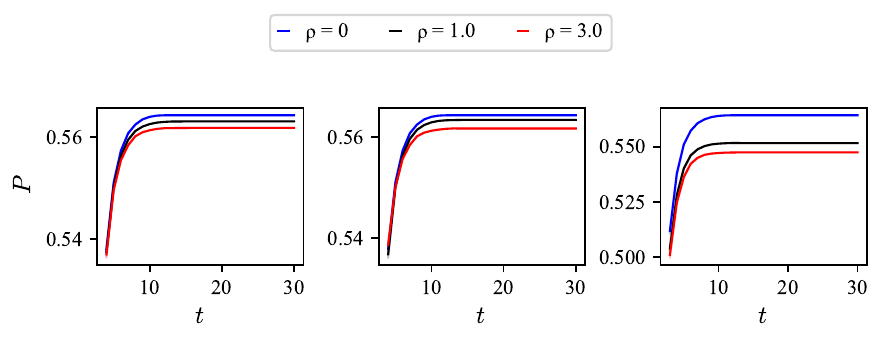}
        \caption{The evolution of majority group for \textbf{ACSIncome-CA} data when $\rho \in \{0,1.0,3.0\}$. When $\rho = 0$, the policy is just \texttt{RERM}. It is clear that when $\rho$ becomes larger, the group fraction becomes lower.}
    \label{fig:income_ldisp}
\end{figure}

\subsection{Performative Gaussian data classification.}\label{app:gcls}

We use a synthetic dataset consisting of $1000$ samples from 2 demographic groups $s \in \{a,b\}$ with features $X = \{X_1, X_2\}$ and labels $Y \in \{0,1\}$. For group $a$, $Y = \mathbf{1}\{x_1 - 0.5x_2 \ge 0.5\}$. For group $b$, $Y = \mathbf{1}\{0.5x_1 + 0.5x_2 \ge 0.5\}$. The initial group fraction and the retention mapping are the same as in the previous experiment. Using a logistic classification model for risk minimization, we first verify the convergence of performative loss in Fig. \ref{fig:gclf_converge} and the effectiveness of \texttt{Fair-RERM-RW} and \texttt{Fair-RERM} in Fig.\ref{subfig:ldisp_gclf} and Fig. \ref{subfig:prate_gclf}. All plots demonstrate the same trends as the previous experiment.
\begin{figure}[h]
    \centering
    \begin{subfigure}[b]{0.47\textwidth}
        \includegraphics[width=\textwidth]{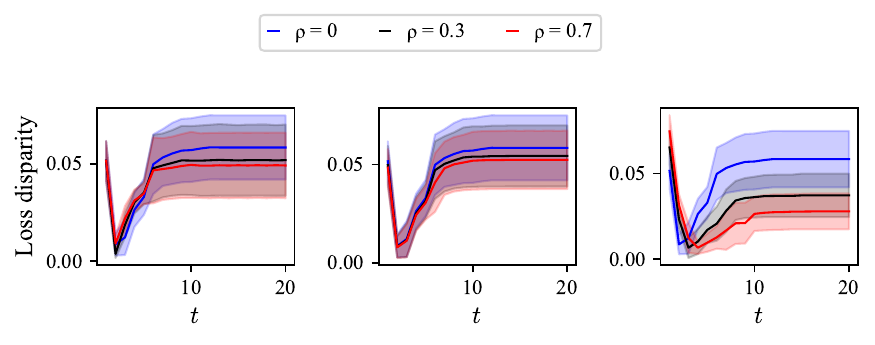}
        \caption{Dynamics of the loss disparity under different $\rho$: \texttt{Fair-RERM-RW} (left plot), \texttt{Fair-RERM-GLP} (middle plot), \texttt{Fair-RERM-SLP} (right plot).}
        \label{subfig:ldisp_gclf}
    \end{subfigure}
    \hfill
    \begin{subfigure}[b]{0.47\textwidth}
        \includegraphics[width=\textwidth]{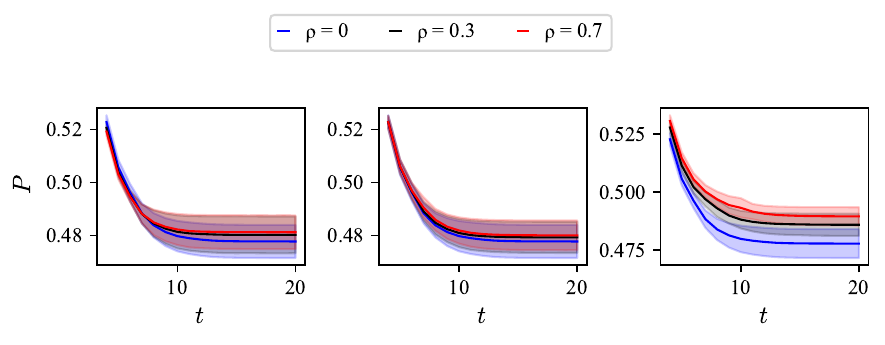}
        \caption{Dynamics of the minority group fraction under different $\rho$: \texttt{Fair-RERM-RW} (left plot), \texttt{Fair-RERM-GLP} (middle plot), \texttt{Fair-RERM-SLP} (right plot).}
        \label{subfig:prate_gclf}
    \end{subfigure}
    \caption{Fairness dynamics of Performative Gaussian Classification where $\rho \in \{0,0.3,0.7\}$ ($\rho = 0$ is equivalent to \texttt{RERM})}
    \label{fig:gclf}
\end{figure}

\begin{figure}[h]
    \centering
    \includegraphics[width=0.9\textwidth]{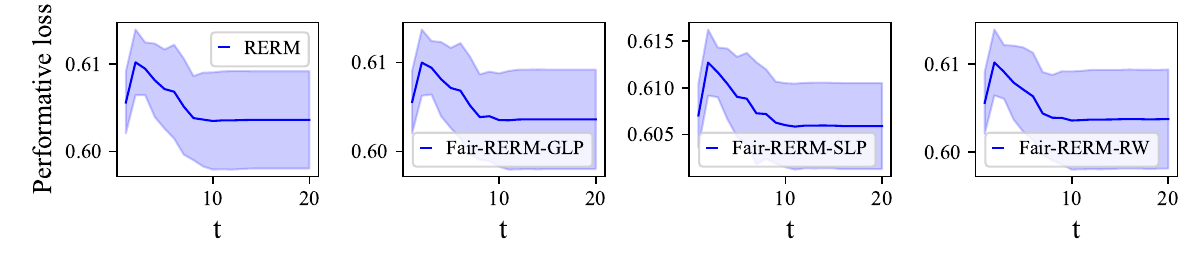}
    \vspace{-0.5cm}
        \caption{Dynamics of the performative loss for \textbf{Gaussian data classification task} when $\rho = 0.3$: \texttt{Fair-RERM-RW} (left plot), \texttt{Fair-RERM-GLP} (middle plot), \texttt{Fair-RERM-SLP} (right plot).}
    \label{fig:gclf_converge}
\end{figure}

\begin{figure}[h]
    \centering
    \includegraphics[width=0.9\textwidth]{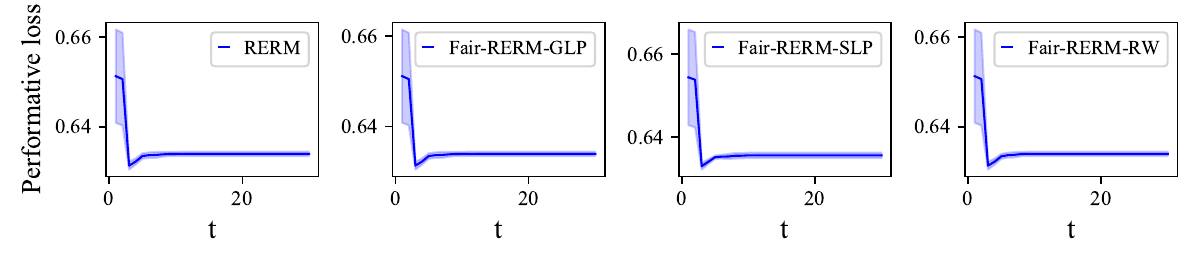}
    \vspace{-0.5cm}
        \caption{Dynamics of the performative loss for \textbf{k-delayed Credit task} when $\rho = 0.3$: \texttt{Fair-RERM-RW} (left plot), \texttt{Fair-RERM-GLP} (middle plot), \texttt{Fair-RERM-SLP} (right plot).}
    \label{fig:kcredit_converge}
\end{figure}

\subsection{Gaussian mean estimation Using group loss variance}\label{app:e5}

In Example \ref{example:group_loss_var}, we already prove that group loss variance as a fair penalty term violates the strong convexity assumption. In this section, we show its unconvergence while performing the Gaussian mean estimation task under the same setting mentioned in Section \ref{section:numerical} when $\rho = \{0.5, 1.5, 2.0\}$. The dynamics of $\|\theta_t - \theta_{t-1}\|$ as well as performative loss are shown as follows. Fig. \ref{fig:e5} shows the failure of convergence, which is contradicted to Fig. \ref{subfig:gm_converge} where we show the convergence of \texttt{Fair-RERM}.

\subsection{K-delayed RERM on Credit Data}\label{app:k_delayed}

We examine the effectiveness of k-delayed RERM schema on Credit Data \citep{creditdata} which is the same dataset used by \citet{brown2020performative}. Specifically, we use $k = 3$ and $\epsilon = 1$ to form equal-sized agent groups uniformly in the whole population to let each subgroup best respond with different speed (Example 1 in \citet{brown2020performative}). We also let the retention dynamic be k-delayed, adding additional challenges compared to the previous work. We examine $\rho \in \{0.3, 0.7\}$ while all other settings are the same as the Credit Data experiment in Section \ref{section:numerical}. 

Then we first verify the convergence of performative loss in Fig. \ref{fig:gclf_converge} and the effectiveness of k-delayed \texttt{Fair-RERM-RW} and \texttt{Fair-RERM} in Fig.\ref{subfig:ldisp_gclf} and Fig. \ref{subfig:prate_gclf}. All plots demonstrate the same trends as the previous experiment with RRM schema.

\subsection{Complementary results on Equal Opportunity and Demographic Parity}\label{app:eqopt}

Though group-wise loss disparity has been a widely used fairness metric for both regression and classification tasks, we also provide a complementary experimental result on how our fairness mechanism influences \texttt{Equal opportunity} \citep{hardt2016equality}. Specifically, we visualize the evolution of the true positive rate disparity between two groups until convergence to the Fair-PS point for the two classification tasks, and we visualize the evolution of the positive rate disparity. The results are in Fig. \ref{fig:eo_dp_combined}, which empirically demonstrate the effectiveness of all fairness mechanisms in improving equal opportunity and demographic parity.

\FloatBarrier

\newpage
\section{Proofs for the theoretical results}

\subsection{Proof of Lemma \ref{lemma:SDPP}}\label{app:proof_SDPP}

\begin{proof}
    The first part of Lemma \ref{lemma:SDPP} is just Thm. 8 in \citet{brown2020performative}. For the second part, we prove that when (ii) and (iii) are satisfied, there exists a non-convex loss function $\ell$ where the iterative algorithms fail to converge by constructing an example:

    Consider the cubic loss $\ell((x, y); \theta) = \beta y \theta^3$, for $\theta \in [-1, 1]$. 
    This objective is $\beta$-jointly smooth but not convex. Let the distribution of $Y$ according to $\text{T}(\boldsymbol{\theta}, \mathcal{D})$ be a point mass at $\epsilon \theta$, and let the distribution of $X$ be invariant. Then this distribution is $\epsilon$-sensitive.
    
    Then the decoupled performative risk has the following form: 
    $\text{DPR}(\theta, \varphi) = \epsilon \beta \theta^3 \varphi$. Then if we initialize RRM at any point other than $0$ with any starting distribution and $\boldsymbol{\theta}^{(0)} > 0$, the procedure generates the sequence of iterates $\dots, -1, 1, -1, 1, \dots$, thus failing to converge.
\end{proof}

\subsection{Propositions \ref{prop1} and \ref{prop2}}\label{example:ps} 

We can prove Prop. \ref{prop1} and Prop. \ref{prop2} by giving two examples where the polarization effect happens or group-wise loss disparity increases.

\begin{example}[Polarization effects of $\boldsymbol{\theta}^{\text{PS}}$] \label{example:ps_retention}
    Consider individuals from two groups $a,b$ whose participation in an ML system depends on their perceived loss, i.e., group fraction $p_s^{(t)}$ changes based on the deployed ML model. Suppose individuals from group $a$ and $b$ have fixed data $Z = z_a = 1$ and $Z = z_b = 0$, respectively. Their initial group fractions are $p_a^{(0)} = 0.7, p_b^{(0)} = 0.3$. Consider conventional RRM with mean squared error $\mathcal{L}(\boldsymbol{\theta}; \mathcal{D}^{(t)})=\sum_{s\in\{a,b\}}p_s^{(t)}(\boldsymbol{\theta} - z_s)^2$ and transition $\text{T}$ satisfies $p_a^{(t+1)} = p_a^{(t)} + 0.1 \cdot \left[\mathcal{L}(\boldsymbol{\theta}^{(t)}; \mathcal{D}_b^{(t)})-\mathcal{L}(\boldsymbol{\theta}^{(t)}; \mathcal{D}_a^{(t)})\right], p_b^{(t+1)} = 1 - p_a^{(t+1)}$ (i.e., group fraction decreases if the group has higher loss). Next, consider $\mathcal{L}(\boldsymbol{\theta}; \mathcal{D}^{(t)})= p_a^{(t)}(\boldsymbol{\theta}-1)^2 + p_b^{(t)}\boldsymbol{\theta}^2$. Minimizing the loss results in $\boldsymbol{\theta}^{(t)} = \frac{p_a^{(t)}}{p_a^{(t)} + p_b^{(t)}}$. This means when $p_a^{(t)} > 0.5$, $\boldsymbol{\theta}^{(t)} > 0.5$ and $\mathcal{L}(\boldsymbol{\theta}^{(t)}; \mathcal{D}_b^{(t)}) > \mathcal{L}(\boldsymbol{\theta}^{(t)}; \mathcal{D}_a^{(t)})$, which further results in the increase of $p_a^{(t+1)}$. Suppose $\boldsymbol{\theta}^{(0)} = 0.7$ is the risk minimizer with respect to the initial distribution. Thus, $p_a^{(t)}$ (resp. $ p_b^{(t)}$) monotonically increases (resp. decreases) in $t$, and the PS solution solely contains people from group $a$ (see Fig. \ref{fig:e34}). 
\end{example}

\begin{example}[Exacerbated group-wise loss disparity]\label{example:ps_sc}
    Consider individuals from two groups $a,b$ who are subject to certain ML decisions; each has an initial feature $X\sim \text{Uniform}(0,1)$ and a binary label $Y = \mathbf{1}(X \ge 0.5)\in \{0,1\}$. Suppose the population has fixed group fractions $p_a = 0.7, p_b = 0.3$, but individuals may strategically manipulate their features to increase their chances of receiving favorable ML decisions, i.e., distribution of each group $\mathcal{D}_s^{(t)}$ changes based on ML model. Following the individual response model in \citep{perdomo_performative_2021}, we suppose individuals from two groups have certain effort budgets $\eta_a = 0.2, \eta_b = 0.01$ that can be used to manipulate their features $X_s^{(t+1)} = X_s + \eta_s \cdot \boldsymbol{\theta}^{(t)}$ without changing $Y$. Consider RRM iterative algorithm where the model parameter $\boldsymbol{\theta}^{(t)}$ is updated to maximize prediction accuracy  $\Pr(\mathbf{1}(X^{(t)} \ge \boldsymbol{\theta}^{(t)})= Y)$. The initial model $\boldsymbol{\theta}^{(0)} = 0.5$ which is the risk minimizer with respect to the initial distribution. Then at $t = 1$, all individuals from group $a$ has features increasing $0.1$ but their labels are unchanged. Because group $a$ is the majority group, the risk minimizer at next round $\boldsymbol{\theta}^{(1)}$ increases to $0.6$. $\boldsymbol{\theta}^{(t)}$ will then keep increasing over time until its convergence to PS solution $\boldsymbol{\theta}^{\text{PS}} = 0.625$ (see Fig. \ref{fig:e34}). However, at the PS solution, the accuracy for group $a$ is 1 but is only $0.88$ for group $b$. 
\end{example}
\subsection{Proof of Proposition \ref{prop:unique_fair_PS}}

We will first present lemmas for this proposition.

\begin{lemma} (First order optimality condition of convex functions \citep{Bubeck2015book}).

    If $f$ is convex and $\Omega$ is a closed convex set on which $f$ is differentiable, then
    \begin{equation}
        x^* \in \arg \min_{x \in \Omega} f(x),
    \end{equation}
    if and only if $(y - x^*)^\top \nabla g(x^*) \geq 0$ for all $y \in \Omega$.
\end{lemma}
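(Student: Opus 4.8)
The plan is to prove the two implications of the biconditional separately. The sufficiency direction will lean on the gradient inequality that characterizes differentiable convex functions, while the necessity direction will exploit the convexity of $\Omega$ to guarantee that feasible line segments emanating from $x^*$ remain inside $\Omega$. Throughout I read the gradient in the displayed condition as $\nabla f(x^*)$ (the statement's $\nabla g$ being a typo).

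For sufficiency ($\Leftarrow$), I would assume $(y-x^*)^\top \nabla f(x^*) \ge 0$ for every $y \in \Omega$ and invoke the first-order inequality $f(y) \ge f(x^*) + (y-x^*)^\top \nabla f(x^*)$, valid for convex differentiable $f$. Chaining the two inequalities gives $f(y) \ge f(x^*)$ on all of $\Omega$, which is precisely the assertion $x^* \in \arg\min_{x\in\Omega} f(x)$.

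For necessity ($\Rightarrow$), I would fix an arbitrary $y \in \Omega$ and, using convexity of $\Omega$, observe that $x_t := (1-t)x^* + t y \in \Omega$ for all $t \in [0,1]$. Setting $\phi(t) := f(x_t)$, minimality of $x^*$ forces $\phi(t) \ge \phi(0)$ on $[0,1]$, so the one-sided derivative satisfies $\phi'(0^+) = \lim_{t\downarrow 0}\frac{\phi(t)-\phi(0)}{t} \ge 0$. The chain rule together with differentiability of $f$ at $x^*$ identifies $\phi'(0^+) = (y-x^*)^\top \nabla f(x^*)$, which yields the claimed inequality; since $y$ was arbitrary it holds throughout $\Omega$.

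The step I expect to require the most care is the necessity direction: global minimality of $x^*$ over $\Omega$ only constrains $f$ along \emph{admissible} directions, so it is essential that convexity of $\Omega$ makes the entire segment $x_t$ feasible — otherwise the one-sided directional derivative need not be controlled, and indeed one cannot strengthen the conclusion to $\nabla f(x^*)=0$ precisely because boundary minimizers are permitted. The sufficiency direction is essentially immediate once the convex gradient inequality is available; should that inequality not be taken as the working first-order characterization of convexity, I would derive it by restricting $f$ to the segment through $x^*$ and $y$ and invoking monotonicity of the derivative of the resulting one-dimensional convex function.
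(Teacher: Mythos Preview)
Your proof is correct and is the standard argument for this classical result. The paper itself does not supply a proof: the lemma is simply cited from \cite{Bubeck2015book} as a known fact and used as a tool in later arguments, so there is no paper proof to compare against.
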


\begin{lemma} \label{lemma:fair_loss_gamma_convex} ($\mathcal{L}_{\textsf{fair}}$ under fair penalty mechanisms is $\gamma$-strongly convex)

Under the conditions in Lemma \ref{lemma:SDPP}, the loss aware objective $\mathcal{L}_{\textsf{fair}}$ in Eqn \eqref{eq:squared_group_loss_penalty} and \eqref{eq:fair_loss_sample_level} are $\gamma$-strongly convex.
\end{lemma}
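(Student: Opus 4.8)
The plan is to decompose $\mathcal{L}_{\textsf{fair}}$ into the original risk $\mathcal{L}(\boldsymbol{\theta};\mathcal{D})$ plus the fairness penalty, show the first summand is $\gamma$-strongly convex and the second is convex, and then invoke the elementary fact that the sum of a $\gamma$-strongly convex function and a convex function is again $\gamma$-strongly convex.

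First I would observe that $\mathcal{L}(\boldsymbol{\theta};\mathcal{D}) = \mathbb{E}_{Z\sim\mathcal{D}}[\ell(\boldsymbol{\theta};Z)]$ is $\gamma$-strongly convex: by the strong-convexity hypothesis of Lemma \ref{lemma:SDPP}, each $\ell(\cdot;Z)$ satisfies the defining inequality with modulus $\gamma$, and integrating that inequality against $\mathcal{D}$ (a nonnegative mixture, here $\sum_{s}p_s\mathcal{D}_s$) preserves it term by term, since the quadratic remainder $\tfrac{\gamma}{2}\|\boldsymbol{\theta}-\boldsymbol{\theta}'\|_2^2$ does not depend on $Z$. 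Hence it suffices to prove that the penalty terms $\rho\sum_{s\in\mathcal{S}}p_s[\mathcal{L}(\boldsymbol{\theta};\mathcal{D}_s)]^2$ (for Eqn.~\eqref{eq:squared_group_loss_penalty}) and $\rho\,\mathbb{E}_{Z\sim\mathcal{D}}\big[[\ell(\boldsymbol{\theta};Z)]^2\big]$ (for Eqn.~\eqref{eq:fair_loss_sample_level}) are convex in $\boldsymbol{\theta}$.

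For convexity of the penalty, the key fact is that $t\mapsto t^2$ is convex and nondecreasing on $[0,\infty)$, so its composition with a convex, \emph{nonnegative} function is convex. Taking the loss to be nonnegative (standard for squared loss, cross-entropy, etc.), each $\mathcal{L}(\boldsymbol{\theta};\mathcal{D}_s)$ and each $\ell(\boldsymbol{\theta};Z)$ is nonnegative and convex in $\boldsymbol{\theta}$ (the former even $\gamma$-strongly convex by the argument above), so $[\mathcal{L}(\boldsymbol{\theta};\mathcal{D}_s)]^2$ and $[\ell(\boldsymbol{\theta};Z)]^2$ are convex. Convexity then survives the nonnegative weightings $\sum_s p_s(\cdot)$ and $\mathbb{E}_{Z\sim\mathcal{D}}[\cdot]$ and multiplication by $\rho\ge 0$. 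Adding this convex penalty to the $\gamma$-strongly convex $\mathcal{L}(\boldsymbol{\theta};\mathcal{D})$ gives a $\gamma$-strongly convex $\mathcal{L}_{\textsf{fair}}$. If one prefers a second-order derivation when $\ell$ is twice differentiable, the same point appears as $\nabla^2_{\boldsymbol{\theta}}[\ell^2] = 2\,\nabla_{\boldsymbol{\theta}}\ell\,\nabla_{\boldsymbol{\theta}}\ell^\top + 2\,\ell\,\nabla^2_{\boldsymbol{\theta}}\ell \succeq 0$, which holds precisely because $\ell\ge 0$ and $\nabla^2_{\boldsymbol{\theta}}\ell\succeq 0$.

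The main obstacle — and, I would argue, the conceptual point worth stating explicitly in the proof — is the nonnegativity requirement: $g^2$ is convex for convex $g$ only when $g\ge 0$. This is exactly why the proposed penalties work while the group-loss-variance penalty of Example \ref{example:group_loss_var} does not: the latter squares a difference $\mathcal{L}(\boldsymbol{\theta};\mathcal{D}_s)-\mathcal{L}(\boldsymbol{\theta};\mathcal{D})$ that can take either sign, so convexity is lost. I would therefore record the standing assumption that $\ell$ (hence each $\mathcal{L}(\boldsymbol{\theta};\mathcal{D}_s)$) is nonnegative, and note that squaring the loss itself rather than a loss \emph{gap} is the design choice that preserves strong convexity and thus keeps the conditions of Lemma \ref{lemma:SDPP} intact.
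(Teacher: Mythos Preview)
Your proposal is correct and follows essentially the same approach as the paper: decompose $\mathcal{L}_{\textsf{fair}}$ as a $\gamma$-strongly convex piece plus a convex penalty, use nonnegativity of $\ell$ to conclude that $\ell^2$ (and $\mathcal{L}_s^2$) is convex, and then sum. The only cosmetic difference is that the paper verifies convexity of $\ell^2$ by squaring the Jensen inequality for $\ell$ and rearranging, whereas you invoke the composition rule ``nondecreasing convex of nonnegative convex is convex'' (and offer the Hessian identity as an alternative); both establish the same sub-claim, and your explicit remark tying the nonnegativity hypothesis to the failure of the group-loss-variance penalty is a useful addition.
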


\begin{proof}
    We will show this for the sample level fair penalty Eqn \eqref{eq:fair_loss_sample_level} first. 
    
    From conditions in Lemma \ref{lemma:SDPP}, we know that $\ell$ is $\gamma$-strongly convex, and by definition $\ell \geq 0$, we will show that $\ell^2(\boldsymbol{\theta}; Z)$ is also convex in $\boldsymbol{\theta}$. For simplicity of notation, we will derive for an arbitrary fixed $Z$ and omit $Z$ in the following derivation and use $\ell(\boldsymbol{\theta})$.

    From the definition of convexity, we have $\forall \alpha \in (0,1), \forall \boldsymbol{\theta}_1, \boldsymbol{\theta}_2 \in \Theta$,
    \begin{equation}
        \ell((1-\alpha) \boldsymbol{\theta}_1 + \alpha \boldsymbol{\theta}_2) \leq (1 - \alpha) \ell(\boldsymbol{\theta}_1) + \alpha \ell(\boldsymbol{\theta}_2).
    \end{equation}
    Squaring both sides in the above gives us
    \begin{equation}
    \begin{aligned}
        \ell^2((1-\alpha) \boldsymbol{\theta}_1 + \alpha \boldsymbol{\theta}_2) \leq &~ (1 - \alpha)^2 \ell^2(\boldsymbol{\theta}_1) + \alpha^2 \ell^2(\boldsymbol{\theta}_2) + 2 \alpha (1 - \alpha) \ell(\boldsymbol{\theta}_1) \ell(\boldsymbol{\theta}_2) \\
        = &~ (1 - \alpha)^2 \ell^2(\boldsymbol{\theta}_1) + \alpha^2 \ell^2(\boldsymbol{\theta}_2) + 2 \alpha (1 - \alpha) \ell(\boldsymbol{\theta}_1) \ell(\boldsymbol{\theta}_2) \\
        &~ - (1 - \alpha) \ell^2(\boldsymbol{\theta}_1) - \alpha \ell^2(\boldsymbol{\theta}_2) + (1 - \alpha) \ell^2(\boldsymbol{\theta}_1) + \alpha \ell^2(\boldsymbol{\theta}_2) \\
        = &~ -\alpha(1-\alpha)[\ell(\boldsymbol{\theta}_1) - \ell(\boldsymbol{\theta}_2)]^2 + (1 - \alpha) \ell^2(\boldsymbol{\theta}_1) + \alpha \ell^2(\boldsymbol{\theta}_2) \\
        \leq &~ (1 - \alpha)\ell^2(\boldsymbol{\theta}_1) + \alpha \ell^2(\boldsymbol{\theta}_2)
    \end{aligned}
    \end{equation}
    which shows the convexity of $\ell^2$. 
    
    Therefore, $\ell(\boldsymbol{\theta}) + \rho \ell^2(\boldsymbol{\theta})$ is a sum of a $\gamma$-strongly convex function and a convex function and thus is $\gamma$-strongly convex. Then since $\mathcal{L}_{\textsf{fair}}$ is an affine combination of $\gamma$-strongly convex functions, it is also $\gamma$-strongly convex.

    For Eqn \eqref{eq:squared_group_loss_penalty}, since $\mathcal{L}_s$ is $\gamma$-strongly convex for $\forall s$, we can similarly show $\mathcal{L}_{\textsf{fair}}$ is $\gamma$-strongly convex.
\end{proof}

\begin{lemma} \label{lemma:fair_loss_tilde_beta_smooth} ($\mathcal{L}_{\textsf{fair}}$ under fair penalty mechanisms is $\Tilde{\beta}$-jointly smooth)

Under conditions in Lemma \ref{lemma:SDPP}, the loss aware objective $\mathcal{L}_{\textsf{fair}}$ in Eqn \eqref{eq:squared_group_loss_penalty} and \eqref{eq:fair_loss_sample_level} are $\Tilde{\beta}$-jointly smooth, where $\Tilde{\beta} = (2 \rho \overline{\ell} + 1) \beta + 2\rho\tilde{\ell}^2$.
\end{lemma}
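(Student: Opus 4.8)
The plan is to reduce the joint smoothness of $\mathcal{L}_{\textsf{fair}}$ to a per-sample (for \eqref{eq:fair_loss_sample_level}) or per-group (for \eqref{eq:squared_group_loss_penalty}) gradient computation, and then control gradient differences by the triangle inequality together with the $\beta$-joint smoothness of $\ell$ and the uniform bound $\ell\le\overline{\ell}$ (which also gives $\mathcal{L}(\boldsymbol{\theta};\mathcal{D}_s)\le\overline{\ell}$). Strong convexity of $\mathcal{L}_{\textsf{fair}}$ is already in hand from Lemma \ref{lemma:fair_loss_gamma_convex}, so only the $\widetilde{\beta}$-Lipschitzness of the gradient — in $\boldsymbol{\theta}$ and in the sample/distribution argument — remains to be shown.

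I would begin with \eqref{eq:fair_loss_sample_level}, which has the convenient form $\mathcal{L}_{\textsf{fair}}(\boldsymbol{\theta};\mathcal{D},\rho)=\mathbb{E}_{Z\sim\mathcal{D}}[\ell_{\textsf{fair}}(\boldsymbol{\theta};Z)]$ with per-sample loss $\ell_{\textsf{fair}}(\boldsymbol{\theta};Z):=\ell(\boldsymbol{\theta};Z)+\rho\,\ell(\boldsymbol{\theta};Z)^2$, so it suffices to verify $\ell_{\textsf{fair}}$ is $\widetilde{\beta}$-jointly smooth in the per-sample sense: expectation preserves a gradient-Lipschitz bound in $\boldsymbol{\theta}$, and (exactly as in the original RRM analysis) an $L$-Lipschitz-in-$Z$ bound on $Z\mapsto\nabla_{\boldsymbol{\theta}}\ell_{\textsf{fair}}(\boldsymbol{\theta};Z)$ upgrades to $\|\nabla_{\boldsymbol{\theta}}\mathcal{L}_{\textsf{fair}}(\boldsymbol{\theta};\mathcal{D})-\nabla_{\boldsymbol{\theta}}\mathcal{L}_{\textsf{fair}}(\boldsymbol{\theta};\mathcal{D}')\|\le L\,\mathcal{W}_1(\mathcal{D},\mathcal{D}')$. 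Since $\nabla_{\boldsymbol{\theta}}\ell_{\textsf{fair}}(\boldsymbol{\theta};Z)=\bigl(1+2\rho\,\ell(\boldsymbol{\theta};Z)\bigr)\nabla_{\boldsymbol{\theta}}\ell(\boldsymbol{\theta};Z)$, I would bound the gradient difference by a product-rule split — one term carrying the prefactor $\bigl(1+2\rho\,\ell(\boldsymbol{\theta};Z)\bigr)$ times $\nabla_{\boldsymbol{\theta}}\ell(\boldsymbol{\theta};Z)-\nabla_{\boldsymbol{\theta}}\ell(\boldsymbol{\theta}';Z)$, the other carrying $\ell(\boldsymbol{\theta};Z)-\ell(\boldsymbol{\theta}';Z)$ — using $\ell\le\overline{\ell}$ and $\beta$-smoothness, whose leading contribution is $(2\rho\overline{\ell}+1)\beta\|\boldsymbol{\theta}-\boldsymbol{\theta}'\|$. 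The identical split with $Z,Z'$ in place of $\boldsymbol{\theta},\boldsymbol{\theta}'$, now invoking the $Z$-Lipschitz half of joint smoothness, handles smoothness in the sample argument.

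For \eqref{eq:squared_group_loss_penalty} I would repeat the argument one level up. Writing $\mathcal{L}_{\textsf{fair}}(\boldsymbol{\theta};\mathcal{D},\rho)=\sum_{s\in\mathcal{S}}p_s\bigl(\mathcal{L}(\boldsymbol{\theta};\mathcal{D}_s)+\rho\,\mathcal{L}(\boldsymbol{\theta};\mathcal{D}_s)^2\bigr)$, its gradient is $\sum_s p_s\bigl(1+2\rho\,\mathcal{L}(\boldsymbol{\theta};\mathcal{D}_s)\bigr)\nabla_{\boldsymbol{\theta}}\mathcal{L}(\boldsymbol{\theta};\mathcal{D}_s)$, and each $\mathcal{L}(\cdot;\mathcal{D}_s)=\mathbb{E}_{Z\sim\mathcal{D}_s}[\ell(\cdot;Z)]$ inherits a $\beta$-Lipschitz gradient in $\boldsymbol{\theta}$, is $\beta$-Lipschitz in $\mathcal{D}_s$ with respect to $\mathcal{W}_1$, and satisfies $\mathcal{L}(\boldsymbol{\theta};\mathcal{D}_s)\le\overline{\ell}$. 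Applying the same product-rule split groupwise and averaging with the weights $p_s$ (so that $\sum_s p_s=1$ keeps the constant from inflating) gives the $\boldsymbol{\theta}$-Lipschitz constant $(2\rho\overline{\ell}+1)\beta$; for the dependence on $\mathcal{D}=\sum_s p_s\mathcal{D}_s$ one uses that a $\mathcal{W}_1$ perturbation of $\mathcal{D}$ controls the perturbations of the groupwise distributions and fractions, so the same scalar prefactor $1+2\rho\overline{\ell}$ multiplies $\beta$.

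The step I expect to be the main obstacle is the cross-term — $\bigl(\ell(\boldsymbol{\theta};Z)-\ell(\boldsymbol{\theta}';Z)\bigr)\nabla_{\boldsymbol{\theta}}\ell(\boldsymbol{\theta}';Z)$ and its $Z$- and groupwise analogues — because it is governed by a Lipschitz constant of $\ell$ itself and the magnitude $\|\nabla_{\boldsymbol{\theta}}\ell\|$, not merely by the Lipschitz constant of $\nabla_{\boldsymbol{\theta}}\ell$. Under the standing conditions of Lemma \ref{lemma:SDPP} these quantities are finite on the parameter space, and with $\ell\ge 0$ the $\beta$-smoothness yields $\|\nabla_{\boldsymbol{\theta}}\ell\|^2\le 2\beta\overline{\ell}$, so the cross-term is of the same $\rho\overline{\ell}\beta$ order; the care lies in the bookkeeping that folds it into the claimed $\widetilde{\beta}=(2\rho\overline{\ell}+1)\beta$ rather than a slightly larger multiple of $\rho\overline{\ell}\beta$, which is the only part that matters for the convergence statements of Section \ref{section:convergence}.
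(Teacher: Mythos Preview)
Your plan coincides with the paper's own argument: reduce to the per-sample fair loss $\ell_{\textsf{fair}}=\ell+\rho\,\ell^{2}$, compute $\nabla_{\boldsymbol{\theta}}(\ell^{2})=2\ell\,\nabla_{\boldsymbol{\theta}}\ell$, and bound gradient differences using the product-rule split together with $\ell\le\overline{\ell}$ and the $\beta$-joint smoothness of $\ell$. The paper's proof is a three-line sketch that simply asserts
\[
\|\nabla_{\boldsymbol{\theta}}\ell^{2}(\boldsymbol{\theta};Z)-\nabla_{\boldsymbol{\theta}}\ell^{2}(\boldsymbol{\theta}';Z)\|_{2}\le 2\beta\max\{\ell(\boldsymbol{\theta};Z),\ell(\boldsymbol{\theta}';Z)\}\,\|\boldsymbol{\theta}-\boldsymbol{\theta}'\|_{2}
\]
(and the analogous bound in $Z$), then reads off $\widetilde{\beta}=(2\rho\overline{\ell}+1)\beta$. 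It does not isolate or treat the cross-term you flag.

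Your caution about that cross-term is well placed: the displayed inequality the paper asserts is not true in general. With $\ell(\theta)=\theta^{2}/2$ (so $\beta=1$, $\ell\ge 0$, strongly convex) and $\theta=2,\ \theta'=1$, the left-hand side is $|8-1|=7$ while the right-hand side is $2\cdot 2\cdot 1=4$. Your route through $\|\nabla_{\boldsymbol{\theta}}\ell\|^{2}\le 2\beta\overline{\ell}$ does deliver a valid bound of the form $(c\rho\overline{\ell}+1)\beta$ with some $c>2$ (the split you describe gives $c=6$), which is all Section~\ref{section:convergence} actually needs since $\widetilde{\beta}$ enters the convergence conditions only as a threshold on $\epsilon(1+\widetilde{\beta}/\gamma)$. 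But the \emph{exact} constant $(2\rho\overline{\ell}+1)\beta$ is not recovered by either your bookkeeping or the paper's; you were right to anticipate that as the sticking point.
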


\begin{proof}
    Denote $g := \ell^2$, then we have $\nabla_{\boldsymbol{\theta}} g = 2 \ell \cdot \nabla_{\boldsymbol{\theta}} \ell$, and thus we have
\begin{equation}
\begin{aligned}
&~ \|\nabla_{\boldsymbol{\theta}} g (\boldsymbol{\theta}; Z) - \nabla_{\boldsymbol{\theta}} g (\boldsymbol{\theta}'; Z)\|_2 
\;\leq\; 2 \beta \max\{ \|\ell (\boldsymbol{\theta}; Z)\|, \|\ell (\boldsymbol{\theta}'; Z)\| \} \| \boldsymbol{\theta} - \boldsymbol{\theta}'\|_2 
\;+\; 2 \,\tilde{\ell}^{\,2}\;\| \boldsymbol{\theta} - \boldsymbol{\theta}'\|_2, \\
&~ \|\nabla_{\boldsymbol{\theta}} g (\boldsymbol{\theta}; Z) - \nabla_{\boldsymbol{\theta}} g (\boldsymbol{\theta}; Z')\|_2 
\;\leq\;  2 \beta \max\{ \|\ell (\boldsymbol{\theta}; Z)\|, \|\ell (\boldsymbol{\theta}; Z') \|\} \| Z - Z'\|_2
\;+\;2 \,\tilde{\ell}^2\;\| Z - Z'\|_2 .
\end{aligned}
\end{equation}
    which implies $\left((2 \rho \overline{\ell} + 1) \beta + 2\rho\tilde{\ell}^2\right)$-joint smoothness of $\mathcal{L}_{\textsf{fair}}$ in Eqn \eqref{eq:squared_group_loss_penalty} and \eqref{eq:fair_loss_sample_level}.
\end{proof}

\begin{lemma} \label{lemma:fair_reweight_objective_smoothness}
    For fair re-weighting mechanisms,
    \begin{equation}
        \boldsymbol{\theta}^{(t+1)} = \arg \min_{\boldsymbol{\theta}} \mathcal{L}_{\textsf{fair}} = \arg\min_{\boldsymbol{\theta}} \sum_{s} p_s^{(t)} ( 1 + \rho \mathcal{L}_s(\boldsymbol{\theta}^{(t)}; \mathcal{D}^{(t-1)}_s) \mathcal{L}_s(\boldsymbol{\theta}); \mathcal{D}^{(t)}_s,
    \end{equation}
    where $\sum_{s} p_s^{(t)} ( 1 + \rho \mathcal{L}_s(\boldsymbol{\theta}^{(t)}; \mathcal{D}^{(t-1)}_s) \mathcal{L}_s(\boldsymbol{\theta}; \mathcal{D}^{(t)}_s)$ is $\gamma$-strongly convex and $(1+\rho \overline{\ell})$-jointly smooth.
\end{lemma}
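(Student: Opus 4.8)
The plan is to prove Lemma~\ref{lemma:fair_reweight_objective_smoothness} in two stages: first justify the displayed rewriting of the update rule, then verify the two analytic properties (strong convexity, joint smoothness) of the resulting objective.

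\textbf{Stage 1: the argmin rewriting.} I would start from Eqn.~\eqref{eq:fair_reweight}, reindexed by $t\mapsto t+1$, so that $\boldsymbol{\theta}^{(t+1)}=\arg\min_{\boldsymbol{\theta}}\sum_{s}q^{(t)}_s\,\mathcal{L}(\boldsymbol{\theta};\mathcal{D}^{(t)}_s)$ with $\boldsymbol{q}^{(t)}=(\boldsymbol{p}^{(t)}+\rho\boldsymbol{l}^{(t)})/\|\boldsymbol{p}^{(t)}+\rho\boldsymbol{l}^{(t)}\|_1$ and $l^{(t)}_s=p^{(t)}_s\mathcal{L}(\boldsymbol{\theta}^{(t)};\mathcal{D}^{(t-1)}_s)$. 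Substituting the expression for $l^{(t)}_s$ gives $p^{(t)}_s+\rho l^{(t)}_s=p^{(t)}_s\bigl(1+\rho\mathcal{L}(\boldsymbol{\theta}^{(t)};\mathcal{D}^{(t-1)}_s)\bigr)$. The key observation is that the normalizing scalar $\|\boldsymbol{p}^{(t)}+\rho\boldsymbol{l}^{(t)}\|_1$ is strictly positive and does not depend on the optimization variable $\boldsymbol{\theta}$ — it is built only from quantities fixed before the round-$(t+1)$ update (the fractions $\boldsymbol{p}^{(t)}$, the previous iterate $\boldsymbol{\theta}^{(t)}$, and the previous distributions $\mathcal{D}^{(t-1)}_s$). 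Hence it can be pulled out of the $\arg\min$, giving exactly the stated form $\sum_{s}p^{(t)}_s\bigl(1+\rho\mathcal{L}(\boldsymbol{\theta}^{(t)};\mathcal{D}^{(t-1)}_s)\bigr)\mathcal{L}(\boldsymbol{\theta};\mathcal{D}^{(t)}_s)$.

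\textbf{Stage 2a: $\gamma$-strong convexity.} Write the objective as $F(\boldsymbol{\theta})=\sum_s c_s\,\mathcal{L}(\boldsymbol{\theta};\mathcal{D}^{(t)}_s)$ with constants $c_s:=p^{(t)}_s\bigl(1+\rho\mathcal{L}(\boldsymbol{\theta}^{(t)};\mathcal{D}^{(t-1)}_s)\bigr)\ge 0$. First, $\mathcal{L}(\boldsymbol{\theta};\mathcal{D}^{(t)}_s)=\mathbb{E}_{Z\sim\mathcal{D}^{(t)}_s}[\ell(\boldsymbol{\theta};Z)]$ is $\gamma$-strongly convex, since taking expectations of the $\gamma$-strong-convexity inequality for $\ell$ (valid by condition (i) of Lemma~\ref{lemma:SDPP}) preserves it with the same constant. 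Second, I would note that $\sum_s c_s=1+\rho\sum_s p^{(t)}_s\mathcal{L}(\boldsymbol{\theta}^{(t)};\mathcal{D}^{(t-1)}_s)\ge 1$ because $\rho\ge 0$, $\ell\ge 0$, and $\sum_s p^{(t)}_s=1$. Multiplying the $\gamma$-strong-convexity inequality for $\mathcal{L}(\cdot;\mathcal{D}^{(t)}_s)$ by $c_s$ and summing over $s$ yields $F(\boldsymbol{\theta})\ge F(\boldsymbol{\theta}')+\langle\nabla F(\boldsymbol{\theta}),\boldsymbol{\theta}-\boldsymbol{\theta}'\rangle+\tfrac{\gamma}{2}(\textstyle\sum_s c_s)\|\boldsymbol{\theta}-\boldsymbol{\theta}'\|_2^2$, and since $\sum_s c_s\ge 1$ the quadratic term is at least $\tfrac{\gamma}{2}\|\boldsymbol{\theta}-\boldsymbol{\theta}'\|_2^2$, establishing $\gamma$-strong convexity of $F$.

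\textbf{Stage 2b: joint smoothness, and the main obstacle.} Here I would view the objective per sample: within group $s$ the effective loss is $\tilde\ell_s(\boldsymbol{\theta};Z)=\bigl(1+\rho\mathcal{L}(\boldsymbol{\theta}^{(t)};\mathcal{D}^{(t-1)}_s)\bigr)\ell(\boldsymbol{\theta};Z)$, i.e.\ $\ell$ scaled by a fixed factor lying in $[1,\,1+\rho\overline{\ell}]$ (using $\mathcal{L}\le\overline{\ell}$), and $F(\boldsymbol{\theta})=\sum_s p^{(t)}_s\,\mathbb{E}_{Z\sim\mathcal{D}^{(t)}_s}[\tilde\ell_s(\boldsymbol{\theta};Z)]$ with total weight $\sum_s c_s\le 1+\rho\overline{\ell}$. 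Differentiating through the expectations, $\nabla_{\boldsymbol{\theta}}F=\sum_s c_s\,\mathbb{E}_{Z\sim\mathcal{D}^{(t)}_s}[\nabla_{\boldsymbol{\theta}}\ell(\boldsymbol{\theta};Z)]$; applying the triangle inequality, Jensen's inequality, and the $\beta$-Lipschitzness of $\nabla_{\boldsymbol{\theta}}\ell$ in $\boldsymbol{\theta}$ (condition (ii) of Lemma~\ref{lemma:SDPP}) gives a Lipschitz constant $\beta\sum_s c_s\le(1+\rho\overline{\ell})\beta=\Tilde{\beta}$ for the dependence on $\boldsymbol{\theta}$; the analogous bound for the dependence on the data distributions follows from the $Z$-side of $\beta$-joint smoothness, which lifts in the standard way (as in \cite{perdomo_performative_2021,brown2020performative}) to $\|\mathbb{E}_{Z\sim\mathcal{D}}[\nabla_{\boldsymbol{\theta}}\ell(\boldsymbol{\theta};Z)]-\mathbb{E}_{Z\sim\mathcal{D}'}[\nabla_{\boldsymbol{\theta}}\ell(\boldsymbol{\theta};Z)]\|_2\le\beta\,\mathcal{W}_1(\mathcal{D},\mathcal{D}')$, again scaled by $\sum_s c_s\le 1+\rho\overline{\ell}$. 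I expect the only delicate point — the ``main obstacle'' — to be bookkeeping rather than substance: one must keep track of the fact that the reweighting coefficients $c_s$ are genuine constants with respect to the current round's variables (so they commute harmlessly with $\nabla_{\boldsymbol{\theta}}$ and with the expectations), and one must be careful to apply the $\mathcal{W}_1$ lift group-by-group since the effective per-sample loss $\tilde\ell_s$ jumps between groups; uniformly bounding each group's scaling factor by $1+\rho\overline{\ell}$ is what makes this clean.
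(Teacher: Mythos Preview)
Your proposal is correct and follows essentially the same approach as the paper: both recognize that the reweighting coefficients $c_s = p_s^{(t)}(1+\rho\mathcal{L}(\boldsymbol{\theta}^{(t)};\mathcal{D}^{(t-1)}_s))$ are fixed scalars in $[1,1+\rho\overline{\ell}]$ with respect to the optimization variable, so they scale up the strong-convexity constant (by $\sum_s c_s\ge 1$) and the smoothness constant (by at most $1+\rho\overline{\ell}$). The paper's proof is a two-line sketch of exactly this idea; your version is simply a more careful expansion, including the explicit justification of the $\arg\min$ rewriting that the paper leaves implicit.
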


\begin{proof}
    At a high level, $( 1 + \rho \mathcal{L}_s(\boldsymbol{\theta}^{(t)}; \mathcal{D}^{(t-1)}_s)$ acts like a scalar that may scale the strong convexity and joint smoothness coefficient.
    
    Since $\mathcal{L}_s(\boldsymbol{\theta}^{(t)}; \mathcal{D}^{(t-1)}_s) \geq 0$, we know the $\gamma$-strong convexity part holds.

    Similarly, since $\mathcal{L}_s(\boldsymbol{\theta}^{(t)}; \mathcal{D}^{(t-1)}_s) \leq \overline{\ell}$, we know the $(1+\rho \overline{\ell})$-joint smoothness holds.
\end{proof}

This lemma shows that we can treat $\mathcal{L}_s(\boldsymbol{\theta}^{(t)}; \mathcal{D}^{(t-1)}_s) \mathcal{L}_s(\boldsymbol{\theta}; \mathcal{D}^{(t)}_s)$ as the fair objective in fair er-weighting mechanisms, where this objective has very similar properties as the fair objectives using the fair penalty mechanism.

Based on the above lemmas, we present a result that bounds the fair-aware loss minimization solution distance from two different input distributions, where \citep{perdomo_performative_2021} and \citep{brown2020performative} derived similar bounds in the PP cases without fairness penalties. 

\begin{lemma} \label{lemma:minimizers_bound} (Bounding the fair objective minimizers)

    Given fair-aware loss function $\mathcal{L}_{\textsf{fair}}$ that is $\gamma$-strongly convex and $\Tilde{\beta}$-jointly smooth. Then for two distributions $\mathcal{D}, \Tilde{\mathcal{D}} \in \triangle(\mathcal{Z})$, denote
    \begin{equation}
        \boldsymbol{\theta}_{\textsf{fair}}^* = G(\mathcal{D}; \mathcal{L}_{\textsf{fair}}) := \arg \min_{\boldsymbol{\theta}} \mathcal{L}_{\textsf{fair}} (\boldsymbol{\theta}; \mathcal{D}), ~~~ \Tilde{\boldsymbol{\theta}}_{\textsf{fair}}^* = G(\Tilde{\mathcal{D}}; \mathcal{L}_{\textsf{fair}}) := \arg \min_{\boldsymbol{\theta}} \mathcal{L}_{\textsf{fair}} (\boldsymbol{\theta}; \Tilde{\mathcal{D}}),
    \end{equation}
    we have
    \begin{equation}
        \| \boldsymbol{\theta}_{\textsf{fair}}^* - \Tilde{\boldsymbol{\theta}}_{\textsf{fair}}^* \|_2 \leq \frac{\Tilde{\beta}}{\gamma} \mathcal{W}_1(\mathcal{D}, \Tilde{\mathcal{D}}).
    \end{equation}
\end{lemma}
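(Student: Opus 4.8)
The plan is to run the standard perturbation argument for minimizers of strongly convex functions (as in \cite{perdomo_performative_2021,brown2020performative}), now applied to the fair objective. First I would invoke Lemmas \ref{lemma:fair_loss_gamma_convex}, \ref{lemma:fair_loss_tilde_beta_smooth}, and \ref{lemma:fair_reweight_objective_smoothness} so that, for each of the three proposed mechanisms, $\mathcal{L}_{\textsf{fair}}(\,\cdot\,;\mathcal{D})$ is $\gamma$-strongly convex in $\boldsymbol{\theta}$ and is the expectation of an effective per-sample loss $\tilde{\ell}$ whose $\boldsymbol{\theta}$-gradient is $\tilde{\beta}$-Lipschitz in the data argument; these are precisely the hypotheses of the lemma, so the remaining work concerns only how $\boldsymbol{\theta}^*_{\textsf{fair}} = G(\mathcal{D};\mathcal{L}_{\textsf{fair}})$ and $\tilde{\boldsymbol{\theta}}^*_{\textsf{fair}} = G(\tilde{\mathcal{D}};\mathcal{L}_{\textsf{fair}})$ relate.

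Next I would write the two strong-convexity inequalities. Applying $\gamma$-strong convexity of $\mathcal{L}_{\textsf{fair}}(\,\cdot\,;\mathcal{D})$ at its minimizer $\boldsymbol{\theta}^*_{\textsf{fair}}$ (the first-order optimality lemma lets us drop, or lower-bound by zero, the linear term) gives $\mathcal{L}_{\textsf{fair}}(\tilde{\boldsymbol{\theta}}^*_{\textsf{fair}};\mathcal{D}) - \mathcal{L}_{\textsf{fair}}(\boldsymbol{\theta}^*_{\textsf{fair}};\mathcal{D}) \ge \tfrac{\gamma}{2}\|\tilde{\boldsymbol{\theta}}^*_{\textsf{fair}} - \boldsymbol{\theta}^*_{\textsf{fair}}\|_2^2$, and symmetrically $\mathcal{L}_{\textsf{fair}}(\boldsymbol{\theta}^*_{\textsf{fair}};\tilde{\mathcal{D}}) - \mathcal{L}_{\textsf{fair}}(\tilde{\boldsymbol{\theta}}^*_{\textsf{fair}};\tilde{\mathcal{D}}) \ge \tfrac{\gamma}{2}\|\tilde{\boldsymbol{\theta}}^*_{\textsf{fair}} - \boldsymbol{\theta}^*_{\textsf{fair}}\|_2^2$. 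Adding and regrouping, the left side collapses to $h(\tilde{\boldsymbol{\theta}}^*_{\textsf{fair}}) - h(\boldsymbol{\theta}^*_{\textsf{fair}})$ where $h(\boldsymbol{\theta}) := \mathcal{L}_{\textsf{fair}}(\boldsymbol{\theta};\mathcal{D}) - \mathcal{L}_{\textsf{fair}}(\boldsymbol{\theta};\tilde{\mathcal{D}})$, so $\gamma\|\tilde{\boldsymbol{\theta}}^*_{\textsf{fair}} - \boldsymbol{\theta}^*_{\textsf{fair}}\|_2^2 \le h(\tilde{\boldsymbol{\theta}}^*_{\textsf{fair}}) - h(\boldsymbol{\theta}^*_{\textsf{fair}})$. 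By the mean value inequality along the segment joining the two minimizers, the right side is at most $\|\tilde{\boldsymbol{\theta}}^*_{\textsf{fair}} - \boldsymbol{\theta}^*_{\textsf{fair}}\|_2 \cdot \sup_{\boldsymbol{\theta}} \|\nabla_{\boldsymbol{\theta}} h(\boldsymbol{\theta})\|_2$.

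The last ingredient is $\|\nabla_{\boldsymbol{\theta}}\mathcal{L}_{\textsf{fair}}(\boldsymbol{\theta};\mathcal{D}) - \nabla_{\boldsymbol{\theta}}\mathcal{L}_{\textsf{fair}}(\boldsymbol{\theta};\tilde{\mathcal{D}})\|_2 \le \tilde{\beta}\,\mathcal{W}_1(\mathcal{D},\tilde{\mathcal{D}})$. Writing $\mathcal{L}_{\textsf{fair}}(\boldsymbol{\theta};\cdot)$ as $\mathbb{E}[\tilde{\ell}(\boldsymbol{\theta};\cdot)]$, I would fix an optimal Wasserstein coupling $\pi$ of $(\mathcal{D},\tilde{\mathcal{D}})$, bound the difference of expectations by $\mathbb{E}_{(Z,Z')\sim\pi}\|\nabla_{\boldsymbol{\theta}}\tilde{\ell}(\boldsymbol{\theta};Z) - \nabla_{\boldsymbol{\theta}}\tilde{\ell}(\boldsymbol{\theta};Z')\|_2$ via Jensen, then use the $Z$-Lipschitzness of the gradient to get $\le \tilde{\beta}\,\mathbb{E}_{(Z,Z')\sim\pi}\|Z-Z'\|_2 = \tilde{\beta}\,\mathcal{W}_1(\mathcal{D},\tilde{\mathcal{D}})$. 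Substituting into the previous display and cancelling one factor of $\|\tilde{\boldsymbol{\theta}}^*_{\textsf{fair}} - \boldsymbol{\theta}^*_{\textsf{fair}}\|_2$ (the zero case being trivial) yields the claim. I expect the only genuine subtlety to be the group-level penalty \eqref{eq:squared_group_loss_penalty}, which is not literally an expectation of a per-sample loss: there the gradient-difference step should be carried out group by group, bounding $\|\nabla_{\boldsymbol{\theta}}\mathcal{L}(\boldsymbol{\theta};\mathcal{D}_s) - \nabla_{\boldsymbol{\theta}}\mathcal{L}(\boldsymbol{\theta};\tilde{\mathcal{D}}_s)\|_2 \le \beta\,\mathcal{W}_1(\mathcal{D}_s,\tilde{\mathcal{D}}_s)$ and using $\overline{\ell}$ to absorb the extra factor from the squared term, exactly the accounting already performed in Lemma \ref{lemma:fair_loss_tilde_beta_smooth}; everything else is routine.
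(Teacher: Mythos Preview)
Your proposal is correct and follows essentially the same perturbation argument as the paper (strong convexity $+$ first-order optimality $+$ $\tilde\beta$-Lipschitzness of $\nabla_{\boldsymbol{\theta}}\mathcal{L}_{\textsf{fair}}$ in the data argument, transported via $\mathcal{W}_1$). The only cosmetic difference is that the paper works directly in the inner-product form---combining optimality and strong convexity to obtain $-\gamma\|G(\mathcal{D})-G(\tilde{\mathcal{D}})\|_2^2 \ge (G(\mathcal{D})-G(\tilde{\mathcal{D}}))^\top \nabla_{\boldsymbol{\theta}}\mathcal{L}_{\textsf{fair}}$ and then invoking Kantorovich--Rubinstein on the Lipschitz map $Z\mapsto (G(\mathcal{D})-G(\tilde{\mathcal{D}}))^\top \nabla_{\boldsymbol{\theta}}\tilde\ell(\boldsymbol{\theta};Z)$---so it never needs your mean-value step on $h$; your detour through function values is equally valid and yields the same bound.
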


\begin{proof}
    Using the strong convexity property and the fact that $G(\mathcal{D}; \mathcal{L}_{\textsf{fair}})$ is the unique minimizer of $\mathcal{L}_{\textsf{fair}}(\boldsymbol{\theta}; \mathcal{D})$, we can derive that 
    \begin{equation}
        - \gamma \| G(\mathcal{D}; \mathcal{L}_{\textsf{fair}}) - G(\Tilde{\mathcal{D}}; \mathcal{L}_{\textsf{fair}}) \|_2^2 \geq (G(\mathcal{D}; \mathcal{L}_{\textsf{fair}}) - G(\Tilde{\mathcal{D}}; \mathcal{L}_{\textsf{fair}}))^\top \nabla_{\boldsymbol{\theta}} \mathcal{L}_{\textsf{fair}}(\boldsymbol{\theta}; \mathcal{D}).
    \end{equation}

    Then for Eqn \eqref{eq:fair_loss_sample_level}, we observe that 
    \begin{equation}
        (G(\mathcal{D}; \mathcal{L}_{\textsf{fair}}) - G(\Tilde{\mathcal{D}}; \mathcal{L}_{\textsf{fair}}))^\top \nabla_{\boldsymbol{\theta}} \ell(\boldsymbol{\theta}; Z)
    \end{equation}
    is $ \| G(\mathcal{D}; \mathcal{L}_{\textsf{fair}}) - G(\Tilde{\mathcal{D}}; \mathcal{L}_{\textsf{fair}}) \|_2 \beta $-Lipschitz in $Z$, and \begin{equation}
        (G(\mathcal{D}; \mathcal{L}_{\textsf{fair}}) - G(\Tilde{\mathcal{D}}; \mathcal{L}_{\textsf{fair}}))^\top \nabla_{\boldsymbol{\theta}} \ell^2(\boldsymbol{\theta}; Z)
    \end{equation}
    is $2 \overline{\ell}  \| G(\mathcal{D}; \mathcal{L}_{\textsf{fair}}) - G(\Tilde{\mathcal{D}}; \mathcal{L}_{\textsf{fair}}) \|_2 \beta $-Lipschitz in $Z$, which follows from applying the Cauchy Schwartz inequality and the fact that $\ell$ is $\beta$-jointly smooth.

    Then we can derive that 
    \begin{equation}
        (G(\mathcal{D}; \mathcal{L}_{\textsf{fair}}) - G(\Tilde{\mathcal{D}}; \mathcal{L}_{\textsf{fair}}))^\top \nabla_{\boldsymbol{\theta}} \mathcal{L}_{\textsf{fair}}(\boldsymbol{\theta}; \mathcal{D}) \geq - (2 \rho \overline{\ell} + 1)\beta \mathcal{W}_1(\mathcal{D}, \Tilde{\mathcal{D}}),
    \end{equation}
    and thus
    \begin{equation}
        - \gamma \| G(\mathcal{D}; \mathcal{L}_{\textsf{fair}}) - G(\Tilde{\mathcal{D}}; \mathcal{L}_{\textsf{fair}}) \|_2^2 \geq - (2 \rho \overline{\ell} + 1)\beta \mathcal{W}_1(\mathcal{D}, \Tilde{\mathcal{D}}),
    \end{equation}
    we get the proof for Eqn \eqref{eq:fair_loss_sample_level}, and we can similarly prove this for Eqn \eqref{eq:squared_group_loss_penalty} and \eqref{eq:fair_reweight}.
\end{proof}

\begin{lemma} \label{lemma:fixed_point_ditribution_dist} (Bounding the fixed point distribution distance using parameter distance \citep{brown2020performative}) 

    Suppose the transition mapping $T$ is $\epsilon$-jointly sensitive with $\epsilon \in (0,1)$, then for $\boldsymbol{\theta}_1, \boldsymbol{\theta}_2$ and their corresponding fixed point distributions $\mathcal{D}_1, \mathcal{D}_2$, it holds that 
    \begin{equation}
        \mathcal{W}_1(\mathcal{D}_1, \mathcal{D}_2) \leq \frac{\epsilon}{1 - \epsilon} \| \boldsymbol{\theta}_1 - \boldsymbol{\theta}_2 \|_2
    \end{equation}
\end{lemma}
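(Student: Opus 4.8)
The plan is to combine the two fixed-point identities $\mathcal{D}_1 = \text{T}(\boldsymbol{\theta}_1;\mathcal{D}_1)$ and $\mathcal{D}_2 = \text{T}(\boldsymbol{\theta}_2;\mathcal{D}_2)$ with the two halves of the $\epsilon$-joint-sensitivity definition, via a single application of the triangle inequality for $\mathcal{W}_1$. First I would rewrite the quantity of interest using both fixed-point equations,
\begin{equation*}
\mathcal{W}_1(\mathcal{D}_1,\mathcal{D}_2) = \mathcal{W}_1\big(\text{T}(\boldsymbol{\theta}_1;\mathcal{D}_1),\,\text{T}(\boldsymbol{\theta}_2;\mathcal{D}_2)\big),
\end{equation*}
and then insert the intermediate distribution $\text{T}(\boldsymbol{\theta}_2;\mathcal{D}_1)$:
\begin{equation*}
\mathcal{W}_1(\mathcal{D}_1,\mathcal{D}_2) \leq \mathcal{W}_1\big(\text{T}(\boldsymbol{\theta}_1;\mathcal{D}_1),\,\text{T}(\boldsymbol{\theta}_2;\mathcal{D}_1)\big) + \mathcal{W}_1\big(\text{T}(\boldsymbol{\theta}_2;\mathcal{D}_1),\,\text{T}(\boldsymbol{\theta}_2;\mathcal{D}_2)\big).
\end{equation*}

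Next I would bound each term separately. The first term has a common base distribution $\mathcal{D}_1$ and differing parameters, so the first inequality in the definition of $\epsilon$-joint sensitivity gives $\mathcal{W}_1(\text{T}(\boldsymbol{\theta}_1;\mathcal{D}_1),\text{T}(\boldsymbol{\theta}_2;\mathcal{D}_1)) \leq \epsilon\|\boldsymbol{\theta}_1-\boldsymbol{\theta}_2\|_2$. The second term has a common parameter $\boldsymbol{\theta}_2$ and differing base distributions, so the second inequality gives $\mathcal{W}_1(\text{T}(\boldsymbol{\theta}_2;\mathcal{D}_1),\text{T}(\boldsymbol{\theta}_2;\mathcal{D}_2)) \leq \epsilon\,\mathcal{W}_1(\mathcal{D}_1,\mathcal{D}_2)$. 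Substituting both bounds yields $\mathcal{W}_1(\mathcal{D}_1,\mathcal{D}_2) \leq \epsilon\|\boldsymbol{\theta}_1-\boldsymbol{\theta}_2\|_2 + \epsilon\,\mathcal{W}_1(\mathcal{D}_1,\mathcal{D}_2)$, and since $\epsilon\in(0,1)$ we may move the last term to the left and divide by $1-\epsilon>0$ to obtain the claimed bound $\mathcal{W}_1(\mathcal{D}_1,\mathcal{D}_2)\leq \tfrac{\epsilon}{1-\epsilon}\|\boldsymbol{\theta}_1-\boldsymbol{\theta}_2\|_2$.

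The only point requiring care — and the main, albeit minor, obstacle — is justifying that the fixed-point distributions $\mathcal{D}_1,\mathcal{D}_2$ are well defined and lie in a subset of $\triangle(\mathcal{Z})$ on which $\mathcal{W}_1$ is finite, so that the triangle inequality is legitimate and the rearrangement is valid (a priori one must rule out $\mathcal{W}_1(\mathcal{D}_1,\mathcal{D}_2)=\infty$, in which case the inequality $x\le a+\epsilon x$ carries no information). This is handled by the standard observation that for each fixed $\boldsymbol{\theta}$ the map $\mathcal{D}\mapsto\text{T}(\boldsymbol{\theta};\mathcal{D})$ is an $\epsilon$-contraction on the complete metric space of distributions with finite first moment equipped with $\mathcal{W}_1$, so Banach's fixed-point theorem supplies a unique fixed point in that space; I would either invoke this in one sentence or cite \cite{brown2020performative,perdomo_performative_2021}. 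Everything beyond this is the two-line computation above, so no substantive difficulty remains.
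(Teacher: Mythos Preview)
Your argument is correct and is exactly the standard proof of this fact. Note, however, that the paper does not actually supply its own proof of this lemma: it is stated with a citation to \cite{brown2020performative} and used as a black box. Your triangle-inequality-plus-rearrangement argument is precisely the proof that appears in that reference, so there is nothing to compare.
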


\textbf{Proposition \ref{prop:unique_fair_PS}.} (Unique Fair-PS solution)
Under Conditions in Lemma \ref{lemma:SDPP}, if $\epsilon(1 + \Tilde{\beta}/\gamma) < 1$, then for a given combination of (1) initial distribution $\mathcal{D}^{(0)}$, and (2) fair mechanism with strength $\rho$, there is a unique Fair-PS solution.

\begin{proof}
    We define the fixed point transition mapping as 
    \begin{equation}
        \text{T}^{\text{FP}}(\boldsymbol{\theta};\mathcal{D}) := \mathcal{D}_{\boldsymbol{\theta}} = \text{T}^{\text{FP}}(\boldsymbol{\theta})
    \end{equation}
    the mapping returning the model's fixed point distribution. 
    
    Then using the results in Lemma \ref{lemma:fair_loss_gamma_convex}, \ref{lemma:fair_loss_tilde_beta_smooth}, \ref{lemma:minimizers_bound}, 
    \ref{lemma:fair_reweight_objective_smoothness},
    \ref{lemma:fixed_point_ditribution_dist}, we can see that 
    \begin{equation}
        \boldsymbol{\theta}^{(t+1)} = G(\text{T}^{\text{FP}}(\boldsymbol{\theta}^{(t)}) ;\mathcal{L}_{\textsf{fair}})
    \end{equation}
    is a contraction mapping when conditions in Lemma \ref{lemma:SDPP} hold since $\epsilon(1 + \Tilde{\beta}/\gamma) < 1$. Therefore, using the Banach fixed-point theorem, we know that there is a unique Fair-PS pair. 
\end{proof}

\subsection{Proof of Theorem \ref{thm:fair_RRM}}

In this part, we provide the convergence of the class of Fair-RRM algorithms in Algorithm \ref{alg:fair_RRM}.

\begin{lemma} \label{lemma:delay_RRM_one_step_bound} (\citep{brown2020performative}) 
    Under conditions in Lemma \ref{lemma:SDPP}, given a decision model with parameter $\boldsymbol{\theta}$, denote the parameter returned by Delayed Deployment Scheme as $\hat{\mathcal{D}}_{\theta} := \text{T}^{DL}(\boldsymbol{\theta}; \mathcal{D})$ and the fixed point distribution of $\boldsymbol{\theta}$ as $\mathcal{D}_{\boldsymbol{\theta}}$, then 
    \begin{equation}
    \mathcal{W}_1(\hat{\mathcal{D}}_{\boldsymbol{\theta}}, \mathcal{D}_{\boldsymbol{\theta}}) \leq \frac{\epsilon}{1 - \epsilon} \delta
    \end{equation}
\end{lemma}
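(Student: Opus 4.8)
The plan is to exploit the elementary fact that, under the conditions of Lemma~\ref{lemma:SDPP}, the map $\mathcal{D}\mapsto \text{T}(\boldsymbol{\theta};\mathcal{D})$ with $\boldsymbol{\theta}$ held fixed is a strict contraction on $(\triangle(\mathcal{Z}),\mathcal{W}_1)$. Joint sensitivity gives $\mathcal{W}_1(\text{T}(\boldsymbol{\theta};\mathcal{D}),\text{T}(\boldsymbol{\theta};\mathcal{D}')) \le \epsilon\,\mathcal{W}_1(\mathcal{D},\mathcal{D}')$, and condition (iii) together with $\beta\ge\gamma$ (which holds whenever $\ell$ is simultaneously $\gamma$-strongly convex and $\beta$-smooth) forces $2\epsilon \le \epsilon(1+\beta/\gamma) < 1$, hence $\epsilon<1$. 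By the Banach fixed-point theorem this contraction has the unique fixed point $\mathcal{D}_{\boldsymbol{\theta}}$, so $\text{T}^k(\boldsymbol{\theta};\mathcal{D}_{\boldsymbol{\theta}})=\mathcal{D}_{\boldsymbol{\theta}}$ for all $k$, and iterating the contraction bound yields $\mathcal{W}_1(\text{T}^k(\boldsymbol{\theta};\mathcal{D}),\mathcal{D}_{\boldsymbol{\theta}}) = \mathcal{W}_1(\text{T}^k(\boldsymbol{\theta};\mathcal{D}),\text{T}^k(\boldsymbol{\theta};\mathcal{D}_{\boldsymbol{\theta}})) \le \epsilon^k\,\mathcal{W}_1(\mathcal{D},\mathcal{D}_{\boldsymbol{\theta}})$ for every $k$.

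Next I would control $\mathcal{W}_1(\mathcal{D},\mathcal{D}_{\boldsymbol{\theta}})$ by the one-step displacement. Writing $a := \mathcal{W}_1(\mathcal{D},\text{T}(\boldsymbol{\theta};\mathcal{D}))$ and using the triangle inequality followed by the contraction property,
\begin{equation*}
\mathcal{W}_1(\mathcal{D},\mathcal{D}_{\boldsymbol{\theta}}) \le \mathcal{W}_1(\mathcal{D},\text{T}(\boldsymbol{\theta};\mathcal{D})) + \mathcal{W}_1(\text{T}(\boldsymbol{\theta};\mathcal{D}),\text{T}(\boldsymbol{\theta};\mathcal{D}_{\boldsymbol{\theta}})) \le a + \epsilon\,\mathcal{W}_1(\mathcal{D},\mathcal{D}_{\boldsymbol{\theta}}),
\end{equation*}
so $\mathcal{W}_1(\mathcal{D},\mathcal{D}_{\boldsymbol{\theta}}) \le a/(1-\epsilon)$. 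Taking $k=\lceil r\rceil+1$ then gives $\mathcal{W}_1(\hat{\mathcal{D}}_{\boldsymbol{\theta}},\mathcal{D}_{\boldsymbol{\theta}}) = \mathcal{W}_1(\text{T}^{\lceil r\rceil+1}(\boldsymbol{\theta};\mathcal{D}),\mathcal{D}_{\boldsymbol{\theta}}) \le \epsilon^{\lceil r\rceil+1}\,a/(1-\epsilon)$. Substituting the prescribed $r=\log^{-1}(1/\epsilon)\log(\mathcal{W}_1(\mathcal{D}^{(0)},\mathcal{D}^{(1)})/\delta)$ — which is engineered precisely so that $\epsilon^{r}=\delta/\mathcal{W}_1(\mathcal{D}^{(0)},\mathcal{D}^{(1)})$ — and using $\epsilon^{\lceil r\rceil}\le\epsilon^{r}$ (valid since $\epsilon<1$), one gets $\epsilon^{\lceil r\rceil+1}\le \epsilon\,\delta/\mathcal{W}_1(\mathcal{D}^{(0)},\mathcal{D}^{(1)})$, whence $\mathcal{W}_1(\hat{\mathcal{D}}_{\boldsymbol{\theta}},\mathcal{D}_{\boldsymbol{\theta}}) \le \frac{\epsilon}{1-\epsilon}\cdot\frac{a}{\mathcal{W}_1(\mathcal{D}^{(0)},\mathcal{D}^{(1)})}\,\delta$.

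The expression collapses to the claimed $\frac{\epsilon}{1-\epsilon}\delta$ exactly when $a\le\mathcal{W}_1(\mathcal{D}^{(0)},\mathcal{D}^{(1)})$, and I expect this to be the only genuine obstacle: it asks that the one-step displacement $\mathcal{W}_1(\mathcal{D}^{(t-1)},\text{T}(\boldsymbol{\theta}^{(t)};\mathcal{D}^{(t-1)}))$ at the round where the lemma is invoked not exceed the initial displacement $\mathcal{W}_1(\mathcal{D}^{(0)},\mathcal{D}^{(1)})$. This is where the surrounding convergence analysis of \cite{brown2020performative} is needed: one argues inductively that, since each delayed deployment leaves $\mathcal{D}^{(t)}$ within $\delta$ of $\mathcal{D}_{\boldsymbol{\theta}^{(t)}}$ while the composed parameter-update map is itself a contraction (via Lemma~\ref{lemma:minimizers_bound} and Lemma~\ref{lemma:fixed_point_ditribution_dist}), the successive parameter changes decrease, and hence, through joint sensitivity, so do the one-step displacements — or else this monotonicity is simply carried as a standing hypothesis from the context in which the lemma is applied. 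Either route closes the argument, and all remaining steps are the routine manipulations of contraction bounds sketched above.
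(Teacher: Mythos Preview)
Your contraction-and-geometric-sum argument is exactly the paper's approach, and your bound $\mathcal{W}_1(\hat{\mathcal{D}}_{\boldsymbol{\theta}},\mathcal{D}_{\boldsymbol{\theta}}) \le \epsilon^{\lceil r\rceil+1}\,a/(1-\epsilon)$ matches what the paper obtains (the paper writes $\epsilon^{\lceil r\rceil}$ but the extra $\epsilon$ is needed to land on $\frac{\epsilon}{1-\epsilon}\delta$ rather than $\frac{1}{1-\epsilon}\delta$, so your exponent is the right one).

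The only issue is the last paragraph: the ``obstacle'' $a\le\mathcal{W}_1(\mathcal{D}^{(0)},\mathcal{D}^{(1)})$ is not an obstacle at all, and no inductive monotonicity argument is needed. In the delayed deployment schema the quantity $r$ is recomputed at each invocation of $\text{T}^{DL}$ using the current input distribution; the paper's proof makes this explicit by writing ``denote $\mathcal{D}^{1}=\text{T}(\boldsymbol{\theta};\mathcal{D}^{0})$'' with $\mathcal{D}^{0}$ being precisely the $\mathcal{D}$ fed into $\text{T}^{DL}$. Hence the $\mathcal{W}_1(\mathcal{D}^{(0)},\mathcal{D}^{(1)})$ appearing in the formula for $r$ is, by definition, your $a$, so the ratio is identically $1$ and the bound collapses to $\frac{\epsilon}{1-\epsilon}\delta$ immediately. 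You misread $\mathcal{D}^{(0)},\mathcal{D}^{(1)}$ as fixed global initial data from round zero of the outer algorithm; they are local to the delayed-deployment subroutine. Drop the final paragraph and your proof is complete and essentially identical to the paper's.
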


\begin{proof} \footnote{Due to different notations, we present the proof in \citep{brown2020performative} to help the readers. }
    For a fixed $\boldsymbol{\theta}$ and $\epsilon < 1$, the map $\text{T}(\boldsymbol{\theta} ; \cdot )$ is contracting with Lipschitz coefficient $\epsilon$ and has a unique fixed point $\mathcal{D}_{\boldsymbol{\theta}}$. Note that $\hat{\mathcal{D}}_{\boldsymbol{\theta}} = \text{T}^{\lceil r \rceil + 1}$, denote $\mathcal{D}^{1} = T (\boldsymbol{\theta}; \mathcal{D}^0)$, then 
    \begin{equation}
        \mathcal{W}_1(\hat{\mathcal{D}}_{\boldsymbol{\theta}}, \mathcal{D}_{\boldsymbol{\theta}}) \leq \frac{\epsilon^{\lceil r \rceil}}{1 - \epsilon} \mathcal{W}_1(\mathcal{D}^0, \mathcal{D}^1),
    \end{equation}
    then since $r =  \log^{-1}\left( \frac{1}{\epsilon} \right) \log\left( \frac{\mathcal{W}_1(\mathcal{D}^{0}, \mathcal{D}^{1})}{\delta} \right) $, we have $\epsilon^{\lceil r \rceil} < \epsilon^r = \frac{\delta}{\mathcal{W}_1(\mathcal{D}^{0}, \mathcal{D}^{1})}$, which completes the proof.
\end{proof}

\textbf{Theorem \ref{thm:fair_RRM}.}
(Fair-RRM Convergence)
    Under conditions in Lemma \ref{lemma:SDPP}:
    
        (i) If $\epsilon(1 + \Tilde{\beta}/\gamma) < 1$, Algorithm \ref{alg:fair_RRM} under the Conventional Schema converges to the Fair-PS pair at a linear rate.
        
        (ii) If $\epsilon(1 + \Tilde{\beta}/\gamma) < 1 - \epsilon$, Algorithm \ref{alg:fair_RRM} under the $k$-Delayed Schema converges to the Fair-PS pair at a linear rate for any $k$. 
        
        (iii) If $\epsilon(1 + \Tilde{\beta}/\gamma) < 1$, Algorithm \ref{alg:fair_RRM} under the Conventional Schema converges to a $\delta$ neighborhood of the Fair-PS pair in $O(\log^2 \frac{1}{\delta})$ steps.

\begin{proof}[proof of (i)]

    We will define a distance metric
    \begin{equation}
        d_{\text{pair}}( (\boldsymbol{\theta}_1, \mathcal{D}_1), (\boldsymbol{\theta}_2, \mathcal{D}_2) ) := \mathcal{W}_1(\mathcal{D}_1, \mathcal{D}_2) + \| \boldsymbol{\theta}_1 - \boldsymbol{\theta}_2 \|_2.
    \end{equation}

    Denote the Fair-RRM mapping as
    \begin{equation} \label{eq:Fair_RRM_mapping}
        F_{\textsf{fair}}(\boldsymbol{\theta}, \mathcal{D}) := (G_{\textsf{fair}}(\boldsymbol{\theta}, \mathcal{D}), \text{T}(\boldsymbol{\theta}; \mathcal{D})) 
    \end{equation}
    where $G_{\textsf{fair}}(\boldsymbol{\theta}, \mathcal{D}) := G(\text{T}(\boldsymbol{\theta}; \mathcal{D});\mathcal{L}_{\textsf{fair}})$.
    Then 
    \begin{equation}
        d_{\text{pair}}( F_{\textsf{fair}}(\boldsymbol{\theta}, \mathcal{D}), F_{\textsf{fair}}(\boldsymbol{\theta}',\mathcal{D}') ) = \mathcal{W}_1(\text{T}(\boldsymbol{\theta}; \mathcal{D}), \text{T}(\boldsymbol{\theta}'; \mathcal{D}')) + \| G_{\textsf{fair}}(\boldsymbol{\theta}, \mathcal{D}) - G_{\textsf{fair}}(\boldsymbol{\theta}', \mathcal{D}') \|_2.
    \end{equation}
    
    The $\epsilon$-jointly sensitivity of the transition map $\text{T}$ yields
    \begin{equation}
        \mathcal{W}_1(\text{T}(\boldsymbol{\theta}; \mathcal{D}), \text{T}(\boldsymbol{\theta}'; \mathcal{D}')) \leq \epsilon \mathcal{W}_1(\mathcal{D}, \mathcal{D}') + \epsilon \| \boldsymbol{\theta} - \boldsymbol{\theta}' \|_2,
    \end{equation}
    and using Lemma \ref{lemma:minimizers_bound}, we get 
    \begin{equation}
        \| G_{\textsf{fair}}(\boldsymbol{\theta}, \mathcal{D}) - G_{\textsf{fair}}(\boldsymbol{\theta}', \mathcal{D}') \|_2 \leq \frac{\Tilde{\beta}}{\gamma} 
        \mathcal{W}_1(\text{T}(\boldsymbol{\theta}; \mathcal{D}), \text{T}(\boldsymbol{\theta}'; \mathcal{D}')) \leq \epsilon \frac{\Tilde{\beta}}{\gamma} \mathcal{W}_1(\mathcal{D}, \mathcal{D}') + \epsilon \frac{\Tilde{\beta}}{\gamma} \| \boldsymbol{\theta} - \boldsymbol{\theta}' \|_2.
    \end{equation}
    Combining the above two equations, we get that under conditions in Lemma \ref{lemma:SDPP}, the Fair-RRM mapping is a contraction mapping that has a unique fixed point, where the fixed point satisfies the criteria of being the PS solution.

    The proof of this part referenced  \citep{brown2020performative} Theorem 4.
\end{proof}

\begin{proof}[proof of (ii)]

    We can derive the sensitivity of $T^k(\mathcal{D}, \boldsymbol{\theta})$ as follows
    \begin{equation}
        \begin{aligned}
            & \mathcal{W}_1(T^k(\mathcal{D}, \boldsymbol{\theta}), T^k(\mathcal{D}', \boldsymbol{\theta}')) \\
            \leq & \epsilon \mathcal{W}_1(T^{k-1}(\mathcal{D}, \boldsymbol{\theta}), T^{k-1}(\mathcal{D}', \boldsymbol{\theta}')) + \epsilon \| \boldsymbol{\theta} - \boldsymbol{\theta}' \|_2 \\
            \leq & \epsilon^2 \mathcal{W}_1(T^{k-2}(\mathcal{D}, \boldsymbol{\theta}), T^{k-2}(\mathcal{D}', \boldsymbol{\theta}')) + (\epsilon + \epsilon^2) \| \boldsymbol{\theta} - \boldsymbol{\theta}' \|_2 \\
            \leq & \dots \\
            < & \epsilon^k \mathcal{W}_1(\mathcal{D}, \mathcal{D}') + \frac{\epsilon}{1-\epsilon} \| \boldsymbol{\theta} - \boldsymbol{\theta}' \|_2 \\
            < & \frac{\epsilon}{1-\epsilon} \mathcal{W}_1(\mathcal{D}, \mathcal{D}') + \frac{\epsilon}{1-\epsilon} \| \boldsymbol{\theta} - \boldsymbol{\theta}' \|_2
        \end{aligned}
    \end{equation}
    So $T^k(\mathcal{D}, \boldsymbol{\theta})$ is $(\frac{\epsilon}{1-\epsilon})$-jointly sensitive.

    Using the proof of contraction mapping in part (i), we complete the proof of part (ii).
\end{proof}

\begin{proof}[proof of (iii)]
    
    Similar to the above, we denote the parameter returned by the Delayed Deployment Scheme as $\hat{\mathcal{D}}_{\boldsymbol{\theta}} = \text{T}^{DL}(\boldsymbol{\theta}; \cdot)$.
    We have $\boldsymbol{\theta}^{(t+1)} = G(\hat{\mathcal{D}}_{\boldsymbol{\theta}^{(t)}}; \mathcal{L}_{\textsf{fair}})$.
    
    Recall that $\boldsymbol{\theta}^{\text{PS}}_{\textsf{fair}} = G(\mathcal{D}^{\text{PS}}_{\textsf{fair}}; \mathcal{L}_{\textsf{fair}})$. Then Lemma \ref{lemma:minimizers_bound} indicates that 
    \begin{equation} \label{eq:thm_delay_rrm_p1}
        \| \boldsymbol{\theta}^{(t+1)} - \boldsymbol{\theta}^{\text{PS}}_{\textsf{fair}} \|_2 = \| G(\hat{\mathcal{D}}_{\boldsymbol{\theta}^{(t)}}; \mathcal{L}_{\textsf{fair}}) - G(\mathcal{D}^{\text{PS}}_{\textsf{fair}}; \mathcal{L}_{\textsf{fair}}) \|_2 \leq \frac{\Tilde{\beta}}{\gamma} \cdot \mathcal{W}_1 (\hat{\mathcal{D}}_{\boldsymbol{\theta}^{(t)}}, \mathcal{D}^{\text{PS}}_{\textsf{fair}}).
    \end{equation}
    Using the triangle inequality, we have
    \begin{equation}
        \mathcal{W}_1 (\hat{\mathcal{D}}_{\boldsymbol{\theta}}, \mathcal{D}^{\text{PS}}_{\textsf{fair}}) \leq \mathcal{W}_1 (\hat{\mathcal{D}}_{\boldsymbol{\theta}}, \mathcal{D}_{\boldsymbol{\theta}^{(t)}}) + \mathcal{W}_1 (\mathcal{D}_{\boldsymbol{\theta}^{(t)}}, \mathcal{D}^{\text{PS}}_{\textsf{fair}}),
    \end{equation}
    where $\mathcal{D}_{\boldsymbol{\theta}^{(t)}}$ is the fixed point distribution of $\boldsymbol{\theta}^{(t)}$.

    We can use Lemma \ref{lemma:delay_RRM_one_step_bound} to get $\mathcal{W}_1 (\hat{\mathcal{D}}_{\boldsymbol{\theta}}, \mathcal{D}_{\boldsymbol{\theta}^{(t)}}) \leq \frac{\epsilon \delta}{1-\epsilon}$ and use Lemma \ref{lemma:fixed_point_ditribution_dist} and the sensitivity definition to get
    $\mathcal{W}_1 (\mathcal{D}_{\boldsymbol{\theta}^{(t)}}, \mathcal{D}^{\text{PS}}_{\textsf{fair}}) \leq \frac{\epsilon}{1-\epsilon} \| \boldsymbol{\theta}^{(t)} - \boldsymbol{\theta}^{\text{PS}}_{\textsf{fair}} \|_2$.

    Therefore,
    \begin{equation}
        \mathcal{W}_1 (\hat{\mathcal{D}}_{\boldsymbol{\theta}}, \mathcal{D}^{\text{PS}}_{\textsf{fair}}) \leq \frac{\epsilon}{1-\epsilon}(\delta + \| \boldsymbol{\theta}^{(t)} - \boldsymbol{\theta}^{\text{PS}}_{\textsf{fair}} \|_2).
    \end{equation}

    When $\| \boldsymbol{\theta}^{(t)} - \boldsymbol{\theta}^{\text{PS}}_{\textsf{fair}} \|_2 > \delta$, we have 
    \begin{equation}
        \| \boldsymbol{\theta}^{(t+1)} - \boldsymbol{\theta}^{\text{PS}}_{\textsf{fair}} \|_2 < \frac{2 \epsilon}{1 - \epsilon} \frac{\Tilde{\beta}}{\gamma} \| \boldsymbol{\theta}^{(t)} - \boldsymbol{\theta}^{\text{PS}}_{\textsf{fair}} \|_2 \leq \| \boldsymbol{\theta}^{(t)} - \boldsymbol{\theta}^{\text{PS}}_{\textsf{fair}} \|_2.
    \end{equation}
    On the other hand, when $\| \boldsymbol{\theta}^{(t)} - \boldsymbol{\theta}^{\text{PS}}_{\textsf{fair}} \|_2 \leq \delta$, we have
    \begin{equation}
        \| \boldsymbol{\theta}^{(t+1)} - \boldsymbol{\theta}^{\text{PS}}_{\textsf{fair}} \|_2 \leq \frac{2 \epsilon}{1 - \epsilon} \frac{\Tilde{\beta}}{\gamma} \delta \leq \delta
    \end{equation}

    Combining the two cases together, we know that for $t \geq \left( 1 - \frac{2 \epsilon \Tilde{\beta}}{\gamma ( 1 - \epsilon)} \right)^{-1} \log\left( \frac{\boldsymbol{\theta}^{(0)} - \boldsymbol{\theta}^{\text{PS}}_{\textsf{fair}}}{\delta} \right)$, we have
    \begin{equation}
        \| \boldsymbol{\theta}^{(t)} - \boldsymbol{\theta}^{\text{PS}}_{\textsf{fair}} \|_2 \leq \left( \frac{2 \epsilon}{1 - \epsilon} \frac{\Tilde{\beta}}{\gamma} \right)^t \delta \leq \delta,
    \end{equation}
    which completes the proof for part (iii).

    The proof of this part referenced \citep{brown2020performative} Theorem 8.
\end{proof}

\subsection{Proof of Theorem \ref{thm:fair_RERM}}

In this part, we provide the convergence of the class of fair-RERM algorithms in Algorithm \ref{alg:fair_RERM}. 

\textbf{Theorem \ref{thm:fair_RERM}} (Fair RERM Convergence) 
    Under conditions in Lemma \ref{lemma:SDPP}, suppose $\exists \alpha > 1, \mu > 0$ such that $\int_{\mathbb{R}^m} e^{\mu |x|^{\alpha}} Z dx$ is finite $\forall Z \in \triangle(\mathcal{Z})$. 
    
    For given a convergence radius $\delta \in (0,1)$, take 
    $n_t = O\left(\frac{\log(t/p)}{(\epsilon(1+\Tilde{\beta}/\gamma)\delta)^m}\right)$ samples at $t$. If $2 \epsilon(1 + \Tilde{\beta}/\gamma) < 1$, then with probability $1-p$, the iterates of fair-RERM are within a radius $\delta$ of the Fair-PS pair for $t \geq (1 - 2 \epsilon(1 + \Tilde{\beta}/\gamma) O(\log (1/\delta))$.

\begin{proof}
    Given $n$ samples from $\mathcal{D}$, denote $\hat{\mathcal{D}}^{n}$ the empirical distribution obtained from them. 
    Denote 
    \begin{equation}
        \hat{G}^n_{\textsf{fair}}(\boldsymbol{\theta}, \mathcal{D}) := G(\text{T}(\boldsymbol{\theta}; \hat{\mathcal{D}}^n),
    \end{equation}
    (recall $G_{\textsf{fair}}(\boldsymbol{\theta}, \mathcal{D}) := G(\text{T}(\boldsymbol{\theta}; \mathcal{D});\mathcal{L}_{\textsf{fair}})$ in Eqn \eqref{eq:Fair_RRM_mapping}). And also denote the Fair-RERM mapping as
    \begin{equation} \label{eq:Fair_RERM_mapping}
        \hat{F}_{\textsf{fair}}(\boldsymbol{\theta}, \mathcal{D}) := (\hat{G}^n_{\textsf{fair}}(\boldsymbol{\theta}, \mathcal{D}), \text{T}(\boldsymbol{\theta}; \mathcal{D})) 
    \end{equation}

    By Theorem 2 of \citep{fournier_hal-00915365}, since $\exists \alpha > 1, \mu > 0$ such that $\int_{\mathbb{R}^m} e^{\mu |x|^{\alpha}} Z dx$ is finite $\forall Z \in \triangle(\mathcal{Z})$, then if $n_t = \mathcal{O}\left( \frac{1}{\epsilon(1+\Tilde{\beta}/\gamma)^m} \log(t/p) \right)$, where $m$ is the dimension of the sample, we have $\mathcal{W}_1(\mathcal{D}, \hat{\mathcal{D}}^{n}) \geq \epsilon(1 + \Tilde{\beta}/\gamma)\delta$ with probability at most $\frac{6p}{\pi^2 t^2}$. 
    
    Therefore,
    \begin{equation}
        \mathbb{P} \big(\mathcal{W}_1(\mathcal{D}, \hat{\mathcal{D}}^{n}) \leq \epsilon(1 + \Tilde{\beta}/\gamma)\delta \big), \forall t) = 1 - \sum_{t=1}^{\infty} \mathbb{P}\big(\mathcal{W}_1(\mathcal{D}, \hat{\mathcal{D}}^{n_t}) > \epsilon(1 + \Tilde{\beta}/\gamma)\delta \big) \geq 1 - \sum_{t=1}^{\infty} \frac{6p}{\pi^2 t^2} = 1-p
    \end{equation}
    i.e., with probability at least $1-p$ we have that for each time step $t$, it holds that 
    \begin{equation} \label{eq:equation_1}
        \mathcal{W}_1(\mathcal{D}, \hat{\mathcal{D}}^{n}) \leq \epsilon(1 + \Tilde{\beta}/\gamma)\delta.
    \end{equation}

    Then if $d_{\text{pair}}\big(( \boldsymbol{\theta}, \mathcal{D}), ( \boldsymbol{\theta}^{\text{PS}}_{\textsf{fair}}, \mathcal{D}^{\text{PS}}_{\textsf{fair}})\big) \geq \delta$, and Eqn \eqref{eq:equation_1} holds, we have 
    \begin{align}
        &~ d_{\text{pair}}( \hat{F}_{\textsf{fair}}(\boldsymbol{\theta}, \mathcal{D}), (\boldsymbol{\theta}^{\text{PS}}_{\textsf{fair}}, \mathcal{D}^{\text{PS}}_{\textsf{fair}}) ) \nonumber \\
        = &~ \mathcal{W}_1 (\text{T}(\boldsymbol{\theta}; \mathcal{D}), \mathcal{D}^{\text{PS}}_{\textsf{fair}}) + \| \hat{G}^n_{\textsf{fair}}(\boldsymbol{\theta}, \mathcal{D}) - \boldsymbol{\theta}^{\text{PS}}_{\textsf{fair}} \|_2 \nonumber \\
        \leq &~ \mathcal{W}_1 (\text{T}(\boldsymbol{\theta}; \mathcal{D}), \text{T}(  \boldsymbol{\theta}^{\text{PS}}_{\textsf{fair}}; \mathcal{D}^{\text{PS}}_{\textsf{fair}})) + \| \hat{G}^n_{\textsf{fair}}(\boldsymbol{\theta}, \mathcal{D}) - G_{\textsf{fair}}(\boldsymbol{\theta}, \mathcal{D}) \|_2 + \| G_{\textsf{fair}}(\boldsymbol{\theta}, \mathcal{D}) - G_{\textsf{fair}}(\boldsymbol{\theta}^{\text{PS}}_{\textsf{fair}}; \mathcal{D}^{\text{PS}}_{\textsf{fair}})\|_2  \nonumber\\
        \leq &~ \epsilon \mathcal{W}_1(\mathcal{D}, \mathcal{D}^{\text{PS}}_{\textsf{fair}}) + \epsilon \|\boldsymbol{\theta} - \boldsymbol{\theta}^{\text{PS}}_{\textsf{fair}} \|_2 + \frac{\Tilde{\beta}}{\gamma} \mathcal{W}_1 (\hat{\text{T}}^n(\boldsymbol{\theta}; \mathcal{D}), \text{T}(\boldsymbol{\theta}; \mathcal{D})) + \frac{\Tilde{\beta}}{\gamma} ( \text{T}(\boldsymbol{\theta}; \mathcal{D})), \text{T}(\boldsymbol{\theta}^{\text{PS}}_{\textsf{fair}}; \mathcal{D}^{\text{PS}}_{\textsf{fair}})) ) \nonumber\\
        \leq &~ \epsilon \mathcal{W}_1(\mathcal{D}, \mathcal{D}^{\text{PS}}_{\textsf{fair}}) + \epsilon \|\boldsymbol{\theta} - \boldsymbol{\theta}^{\text{PS}}_{\textsf{fair}} \|_2 + \frac{\Tilde{\beta}}{\gamma} \epsilon \bigg(1 + \frac{\Tilde{\beta}}{\gamma} \bigg) \delta + \frac{\Tilde{\beta}}{\gamma}\big( \epsilon \mathcal{W}_1(\mathcal{D}, \mathcal{D}^{\text{PS}}_{\textsf{fair}}) + \epsilon \|\boldsymbol{\theta} - \boldsymbol{\theta}^{\text{PS}}_{\textsf{fair}} \|_2\big) \nonumber \\
        = &~ \bigg(1 + \frac{\Tilde{\beta}}{\gamma} \bigg) d_{\text{pair}}\big(( \boldsymbol{\theta}, \mathcal{D}), ( \boldsymbol{\theta}^{\text{PS}}_{\textsf{fair}}, \mathcal{D}^{\text{PS}}_{\textsf{fair}})\big)  + \epsilon \bigg(1 + \frac{\Tilde{\beta}}{\gamma}\delta \bigg) \nonumber \\
        \leq &~ 2 \epsilon \bigg(1 + \frac{\Tilde{\beta}}{\gamma} \bigg) d_{\text{pair}}\big(( \boldsymbol{\theta}, \mathcal{D}), ( \boldsymbol{\theta}^{\text{PS}}_{\textsf{fair}}, \mathcal{D}^{\text{PS}}_{\textsf{fair}}) \big)
    \end{align}
    where the third line uses triangle inequality, fourth line uses Lemma \ref{lemma:minimizers_bound}, and fifth line uses Eqn \eqref{eq:equation_1}. In other words, the above shows that if the current pair is more than $\delta$ away from the Fair PS solution, then as long as Eqn \eqref{eq:equation_1} is true, the Fair RERM mapping is a contraction.

    Similarly, if $d_{\text{pair}}\big(( \boldsymbol{\theta}, \mathcal{D}, ( \boldsymbol{\theta}^{\text{PS}}_{\textsf{fair}}, \mathcal{D}^{\text{PS}}_{\textsf{fair}})\big) < \delta$, and Eqn \eqref{eq:equation_1} holds, we can show that     
    \begin{align}
         d_{\text{pair}}( \hat{F}_{\textsf{fair}}(\boldsymbol{\theta}, \mathcal{D}), (\boldsymbol{\theta}^{\text{PS}}_{\textsf{fair}}, \mathcal{D}^{\text{PS}}_{\textsf{fair}}) ) \leq \bigg(1 + \frac{\Tilde{\beta}}{\gamma} \bigg) d_{\text{pair}}\big(( \boldsymbol{\theta}, \mathcal{D}), ( \boldsymbol{\theta}^{\text{PS}}_{\textsf{fair}}, \mathcal{D}^{\text{PS}}_{\textsf{fair}})\big)  + \epsilon \bigg(1 + \frac{\Tilde{\beta}}{\gamma}\delta \bigg)
         \leq 2 \epsilon \bigg(1 + \frac{\Tilde{\beta}}{\gamma} \bigg) \delta < \delta,
    \end{align}
    which means the pair will stay in the $\delta$ radius ball once an iterate goes in the ball.

    Then we move on to show when $t \geq \bigg(1 - 2 \epsilon \bigg(1 + \frac{\Tilde{\beta}}{\gamma} \bigg) \log\bigg(\frac{d_{\text{pair}}\big(( \boldsymbol{\theta}^{(1)}, \mathcal{D}^{(0)}), ( \boldsymbol{\theta}^{\text{PS}}_{\textsf{fair}}, \mathcal{D}^{\text{PS}}_{\textsf{fair}}) \big) }{\delta}\bigg)$ the fair RERM mapping hits the $\delta$ radius ball centered at the Fair PS solution. Again, if Eqn \eqref{eq:equation_1} holds, then denote the initial pair as $(\boldsymbol{\theta}^{(1)}, \mathcal{D}^{(0)})$, then we can use similar arguments as above to show that 
    \begin{align}
        d_{\text{pair}}( (\boldsymbol{\theta}^{(t+1)}, \mathcal{D}^{(t)}), (\boldsymbol{\theta}^{\text{PS}}_{\textsf{fair}}, \mathcal{D}^{\text{PS}}_{\textsf{fair}}) ) \leq &~ 2 \epsilon \bigg(1 + \frac{\Tilde{\beta}}{\gamma} \bigg)^t d_{\text{pair}}\big(( \boldsymbol{\theta}^{(1)}, \mathcal{D}^{(0)}), ( \boldsymbol{\theta}^{\text{PS}}_{\textsf{fair}}, \mathcal{D}^{\text{PS}}_{\textsf{fair}}) \big) \nonumber \\
        \leq &~ exp\bigg(-t \bigg(1 - 2 \epsilon \bigg(1 + \frac{\Tilde{\beta}}{\gamma} \bigg)\bigg) d_{\text{pair}}\big(( \boldsymbol{\theta}^{(1)}, \mathcal{D}^{(0)}), ( \boldsymbol{\theta}^{\text{PS}}_{\textsf{fair}}, \mathcal{D}^{\text{PS}}_{\textsf{fair}}) \big) \nonumber \\
        \leq &~ \delta.
    \end{align}
    This completes the proof.
    
    For the $k$-Delayed RERM, we can follow the proof of part (ii) in Theorem \ref{thm:fair_RRM} that the transition $\text{T}^k$ is $\Tilde{\epsilon} := \frac{\epsilon}{1-\epsilon}$ jointly sensitive, and if we replace the $\epsilon$ terms with $\Tilde{\epsilon}$ and times a $k$ scalar in the number of iterations, we can get the similar result.
    
    The proof of this part referenced Theorem 5 in \citep{brown2020performative}
\end{proof}

\subsection{Proof of the fairness guarantee}\label{app:proof_fimp}

\textbf{Discussion of the intricate nature of proving the effectiveness of $\rho$ under the general PP setting.} We first note that it is non-trivial to find out whether fairness at PS solution increases as $\rho$ increases under the general setting where both the group fractions and the group-wise feature-label distributions change. Consider an example where the agents in the minority group are influenced by the majority group, i.e., when their group fraction becomes less than a threshold, their tastes are shaped by the mainstream culture. Then they can change their feature-label distributions to be more similar to the one of the majority group. This will possibly make the loss disparity smaller even when $\rho$ becomes larger. However, in majority of previous work on the retention of recommendation system \citep{Zhang_2019_Retention, duchi2018learning}, the group-wise distributions are assumed to be static. 

\textbf{Proof of Theorem \ref{thm:fair_improvement}.} With Assumption \ref{assumption:retention}, we state Lemma \ref{lemma:fair_single}.

\begin{lemma}[fairness guarantee at a single step]\label{lemma:fair_single}
    At each single step, the following results hold:  
    
    (i) Given $\mathcal{D}^{(t)}$ fixed (i.e., $p_a^{(t)}, p_b^{(t)}$ stay fixed), increasing $\rho$ in Eqn. \eqref{eq:squared_group_loss_penalty} leads to lower or equal $\triangle^{t}_{\textsf{fair},\mathcal{L}}(\rho)$ ; 
    
    (ii) Given the optimization objective fixed (i.e., $\rho$ stays fixed), lower fraction disparity $\triangle_p^{(t)}$ leads to lower or equal $\triangle^{t}_{\textsf{fair},\mathcal{L}}(\rho)$ when the minority group incurs higher loss.
\end{lemma}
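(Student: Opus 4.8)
The plan is to exploit that in the retention regime $\mathcal{D}_a,\mathcal{D}_b$ are fixed, so that, writing $\mathcal{L}_s(\boldsymbol\theta):=\mathcal{L}(\boldsymbol\theta;\mathcal{D}_s)\ge 0$, the objective in Eqn.~\eqref{eq:squared_group_loss_penalty} equals $R(\boldsymbol\theta)+\rho Q(\boldsymbol\theta)$ with $R(\boldsymbol\theta)=\sum_s p_s\mathcal{L}_s(\boldsymbol\theta)$ and $Q(\boldsymbol\theta)=\sum_s p_s\mathcal{L}_s(\boldsymbol\theta)^2$, and that for two groups the identity $Q(\boldsymbol\theta)-R(\boldsymbol\theta)^2=p_ap_b\bigl(\mathcal{L}_a(\boldsymbol\theta)-\mathcal{L}_b(\boldsymbol\theta)\bigr)^2=p_ap_b\,(\triangle_{\mathcal{L}})^2$ holds. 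Both parts then reduce to exchange arguments on first-order optimality; all minimizers are unique by $\gamma$-strong convexity of $\mathcal{L}_{\textsf{fair}}$ (Lemma~\ref{lemma:fair_loss_gamma_convex}).

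For part (i), fix $\mathcal{D}^{(t)}$ and take $0\le\rho_1<\rho_2$ with $\boldsymbol\theta_i=\arg\min_{\boldsymbol\theta}\bigl(R(\boldsymbol\theta)+\rho_i Q(\boldsymbol\theta)\bigr)$. Adding the two optimality inequalities ($\boldsymbol\theta_1$ beats $\boldsymbol\theta_2$ at $\rho_1$; $\boldsymbol\theta_2$ beats $\boldsymbol\theta_1$ at $\rho_2$) and cancelling the common $R$-terms gives $(\rho_2-\rho_1)\bigl(Q(\boldsymbol\theta_2)-Q(\boldsymbol\theta_1)\bigr)\le 0$, hence $Q(\boldsymbol\theta_2)\le Q(\boldsymbol\theta_1)$; feeding this back into the $\rho_1$-inequality gives $R(\boldsymbol\theta_1)-R(\boldsymbol\theta_2)\le\rho_1\bigl(Q(\boldsymbol\theta_2)-Q(\boldsymbol\theta_1)\bigr)\le 0$, and since $R\ge 0$ (because $\ell\ge 0$) this upgrades to $R(\boldsymbol\theta_1)^2\le R(\boldsymbol\theta_2)^2$. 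Combining with the identity above, $p_ap_b\bigl(\triangle^{t}_{\textsf{fair},\mathcal{L}}(\rho_2)\bigr)^2=Q(\boldsymbol\theta_2)-R(\boldsymbol\theta_2)^2\le Q(\boldsymbol\theta_1)-R(\boldsymbol\theta_1)^2=p_ap_b\bigl(\triangle^{t}_{\textsf{fair},\mathcal{L}}(\rho_1)\bigr)^2$, so $\triangle^{t}_{\textsf{fair},\mathcal{L}}$ is non-increasing in $\rho$ (Assumption~\ref{assumption:retention} is not needed here).

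For part (ii), fix $\rho$ and let $b$ be the minority, so $w:=p_b\in(0,\tfrac12]$ and $\triangle_p^{(t)}=1-2w$ is decreasing in $w$. Setting $U_s(\boldsymbol\theta):=\mathcal{L}_s(\boldsymbol\theta)+\rho\mathcal{L}_s(\boldsymbol\theta)^2=f\bigl(\mathcal{L}_s(\boldsymbol\theta)\bigr)$ with $f(z)=z+\rho z^2$ strictly increasing on $[0,\infty)$, Eqn.~\eqref{eq:squared_group_loss_penalty} becomes exactly $(1-w)U_a(\boldsymbol\theta)+wU_b(\boldsymbol\theta)$, so its minimizer $\bar{\boldsymbol\theta}(w)$ moves along the Pareto frontier of $(U_a,U_b)$ as $w$ varies. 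The plan is to show by exchange arguments that $w\mapsto U_a(\bar{\boldsymbol\theta}(w))$ is non-decreasing and $w\mapsto U_b(\bar{\boldsymbol\theta}(w))$ non-increasing: for $w<w'$, summing the two optimality inequalities gives $\bigl(U_a(\bar{\boldsymbol\theta}(w))-U_a(\bar{\boldsymbol\theta}(w'))\bigr)+\bigl(U_b(\bar{\boldsymbol\theta}(w'))-U_b(\bar{\boldsymbol\theta}(w))\bigr)\le 0$, while if $U_b(\bar{\boldsymbol\theta}(w'))>U_b(\bar{\boldsymbol\theta}(w))$ then the $w'$-optimality inequality forces $U_a(\bar{\boldsymbol\theta}(w'))<U_a(\bar{\boldsymbol\theta}(w))$, contradicting the previous relation; the $U_a$-statement is symmetric. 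Since $f^{-1}$ is increasing, $\mathcal{L}_b(\bar{\boldsymbol\theta}(w))=f^{-1}\bigl(U_b(\bar{\boldsymbol\theta}(w))\bigr)$ is non-increasing and $\mathcal{L}_a(\bar{\boldsymbol\theta}(w))$ non-decreasing in $w$, so the gap $\mathcal{L}_b(\bar{\boldsymbol\theta}(w))-\mathcal{L}_a(\bar{\boldsymbol\theta}(w))$ is non-increasing in $w=p_b$; by Assumption~\ref{assumption:retention} the minority group $b$ carries the larger loss, so this gap equals $\triangle^{t}_{\textsf{fair},\mathcal{L}}(\rho)$, and lowering $\triangle_p^{(t)}$ (i.e., raising $p_b$) can only lower it.

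The step I expect to be the main obstacle is the per-coordinate monotonicity in part (ii): the combined optimality inequality alone gives only monotonicity of the difference $U_b-U_a$, but I need $U_a$ and $U_b$ to move monotonically \emph{individually}, since otherwise, after inverting $f$, the untransformed gap $\mathcal{L}_b-\mathcal{L}_a$ picks up an uncontrolled factor $1+\rho(\mathcal{L}_a+\mathcal{L}_b)$; the short contradiction argument above closes this, and, together with keeping $b$ the minority and invoking Assumption~\ref{assumption:retention} only to fix the sign of $\mathcal{L}_b-\mathcal{L}_a$, is the only non-routine ingredient. For the exchanges in part (ii) one should also note $w\in(0,1)$ so the divisions by $w$ and $1-w$ are legitimate, which is automatic since $b$ is the minority and, in the retention model, $p_b\ge p_b^{\min}>0$.
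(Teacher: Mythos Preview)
Your argument is correct and is in fact cleaner than the paper's own proof. Both proofs ultimately rest on exchange (optimality) inequalities, but the packaging differs. For part~(i) the paper runs a case analysis by contradiction on the four sign patterns of $(L_{a,\rho_2}-L_{a,\rho_1},\,L_{b,\rho_2}-L_{b,\rho_1})$, whereas you extract the structural facts $Q(\boldsymbol\theta_2)\le Q(\boldsymbol\theta_1)$ and $R(\boldsymbol\theta_1)\le R(\boldsymbol\theta_2)$ directly from the two optimality inequalities and then use the two-group variance identity $Q-R^2=p_ap_b(\triangle_{\mathcal{L}})^2$ to conclude; this avoids the case split entirely and makes the monotonicity transparent. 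For part~(ii) the paper again does a case analysis and invokes a contradiction with the assumption that the minority group has higher loss; your device of writing the objective as $(1-w)U_a+wU_b$ with $U_s=f(\mathcal{L}_s)$, $f$ strictly increasing, and then proving per-coordinate monotonicity of $U_a,U_b$ along the Pareto frontier is more conceptual and, incidentally, sidesteps a slight sloppiness in the paper's displayed optimality inequalities for~(ii), which omit the $\rho$-squared terms. The one place where both proofs genuinely need the hypothesis ``minority incurs higher loss'' is the same: to identify $\triangle_{\mathcal{L}}$ with $\mathcal{L}_b-\mathcal{L}_a$ (rather than its absolute value) so that the sign does not flip when $w$ increases; you handle this correctly by invoking Assumption~\ref{assumption:retention} only at that step.
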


\begin{proof}[Proof of (i)]

 We prove (i) by contradiction. 
 
 Assume $\rho_1 < \rho_2$, and denote the the optimized model parameters as $\theta_{\rho_1}, \theta_{\rho_2}$. Since $\mathcal{D}_s$ does not change over time, we can know the group-wise loss under $\rho_1, \rho_2$ at both rounds $L_{s,\rho_1}^{t} = L_{s, \rho_1}^{t-1} = L_{s, \rho_1}$ $L_{s,\rho_2}^{t} = L_{s, \rho_2}^{t-1} = L_{s, \rho_2}$. Thus, we can interchange the superscript arbitrarily (i.e., if the deployed models are the same, $L_s^{t} = L_s^{t-1}$). Wlog, assume $L_{a,\rho_1}^t > L_{b,\rho_1}^t$. Then consider all possible scenarios where the loss disparity may be larger:
    \begin{itemize}[leftmargin=*]
    \small
        \item $L_{a,\rho_1}^t < L_{a,\rho_2}^t$ and $L_{b,\rho_1}^t < L_{b,\rho_2}^t$: if this holds, then we can just shift $\theta_{\rho_2}$ to $\theta_{\rho_1}$ to guarantee smaller $L_{a,\rho_2}^t, L_{b,\rho_2}^t$ and therefore smaller $L_{a,\rho_2}^{t-1}, L_{b,\rho_2}^{t-1}$, resulting in a smaller loss specified by Eqn. \eqref{eq:squared_group_loss_penalty}. This produces a contradiction since $\theta_{\rho_2}$ is the minimizer;
        
        \item $L_{a,\rho_1}^t > L_{a,\rho_2}^t$ and $L_{b,\rho_1}^t > L_{b,\rho_2}^t$: In the same way, we can shift $\theta_{\rho_1}$ to $\theta_{\rho_2}$ to trivially decrease both $L_{a,\rho_1}^t, L_{b,\rho_1}^t$, violating the condition that $\theta_{\rho_1}$ is the minimizer;
        
        \item $L_{a,\rho_1}^t < L_{a,\rho_2}^t$ and $L_{b,\rho_1}^t > L_{b,\rho_2}^t$. According to the optimality at $\rho_1, \rho_2$, we have: 
        \begin{equation}\label{eq:opt_a}
            p_a \cdot \left (L_{a,\rho_2}^t - L_{a,\rho_1}^t  \right) + p_b \cdot \left (L_{b,\rho_2}^t - L_{b,\rho_1}^t  \right) + p_a \cdot \rho_2 \cdot \left ((L_{a,\rho_2}^t)^2 - (L_{a,\rho_1}^t)^2  \right) + p_b \cdot \rho_2 \cdot \left ((L_{b,\rho_2}^t)^2 - (L_{b,\rho_1}^t)^2  \right) < 0
        \end{equation}
        \begin{equation}\label{eq:opt_b}
            p_a \cdot \left (L_{a,\rho_2}^t - L_{a,\rho_1}^t  \right) + p_b \cdot \left (L_{b,\rho_2}^t - L_{b,\rho_1}^t  \right) + p_a \cdot \rho_1 \cdot \left ((L_{a,\rho_2}^t)^2 - (L_{a,\rho_1}^t)^2  \right) + p_b \cdot \rho_1 \cdot \left ((L_{b,\rho_2}^t)^2 - (L_{b,\rho_1}^t)^2  \right) > 0
        \end{equation}
      Subtract Eqn. \eqref{eq:opt_a} from Eqn. \eqref{eq:opt_b}, we get
      \begin{align}\label{eq:squareeq}
      &p_a \cdot  \left ( (L_{a,\rho_2}^t)^2 - (L_{a,\rho_1}^t)^2  \right) + p_b \cdot \left ((L_{b,\rho_2}^t)^2 - (L_{b,\rho_1}^t)^2  \right) < 0 \\
      \nonumber \Leftrightarrow ~ &p_a \cdot  \left(L_{a,\rho_2}^t + L_{a,\rho_1}^t  \right) \cdot \left(L_{a,\rho_2}^t - L_{a,\rho_1}^t \right)  + p_b \cdot \left (L_{b,\rho_2}^t + L_{b,\rho_1}^t \right) \cdot \left (L_{b,\rho_2}^t - L_{b,\rho_1}^t \right) < 0
      \end{align}
      From Eqn. \eqref{eq:squareeq} and Eqn. \eqref{eq:opt_b}, we can immediately get:
      \begin{equation}\label{eq:lineareq}
          p_a \cdot \left(L_{a,\rho_2}^t - L_{a,\rho_1}^t \right)  + p_b \cdot \left (L_{b,\rho_2}^t - L_{b,\rho_1}^t \right) > 0
      \end{equation}
      Otherwise Eqn. \eqref{eq:opt_b} would be smaller than $0$. We have by assumption $L_{a, \rho_2}^t> L_{a, \rho_1}^t > L_{b, \rho_1}^t > L_{b, \rho_2}^t$. Then $L_{a, \rho_1}^t + L_{a, \rho_2}^t > L_{b, \rho_1}^t + L_{b, \rho_2}^t$. Next, noticing that $L_{b,\rho_2}^t - L_{b,\rho_1}^t < 0$ and $L_{a,\rho_2}^t - L_{a,\rho_1}^t > 0$, then if we divide $L_{a, \rho_1}^t + L_{a, \rho_2}^t$ at both sides of Eqn. \eqref{eq:squareeq}, then we will get:
      \begin{equation}\label{eq:contradict}
          p_a \cdot  \left(L_{a,\rho_2}^t - L_{a,\rho_1}^t \right)  + p_b \cdot \frac{\left (L_{b,\rho_2}^t + L_{b,\rho_1}^t \right)}{\left(L_{a,\rho_2}^t + L_{a,\rho_1}^t  \right)} \cdot \left (L_{b,\rho_2}^t - L_{b,\rho_1}^t \right) < 0
      \end{equation}
      However, the LHS of Eqn. \eqref{eq:contradict} is larger than the LHS of Eqn. \eqref{eq:lineareq}, while the LHS requires the opposite, producing a contradiction. Finally, note that it is impossible to let loss of one group stays fixed while the other one changes where we can easily prove with similar contradictions.
      \normalsize
    \end{itemize}

    Since all the above scenarios do not hold, we have proved (i). Moreover, if for any $\rho_2 > \rho_1$, the loss disparities are equal, it means that both groups achieve minimal loss with a same $\theta^{*}$, which is very uncommon considering the difference of demographic groups. Thus, increasing $\rho$ commonly leads to lower loss disparity $\Delta_\mathcal{L}^{(t)}$.
\end{proof}

\vspace{0.2cm}

\begin{proof}[proof of (ii)]
    Next, we prove (ii) similarly. Denote the optimized group-wise loss under the two group fractions as $L_s^{(t)}$ and $L_{s'}^{(t)}$.  Wlog, assume $L_a^{(t)} > L_{b}^{(t)}$. We will have the following equations:

    \begin{equation}\label{eq:prate_eq}
        p_{a'}^{(t)} \cdot \left (L_{a'}^{(t)} - L_{a}^{(t)} \right) + p_{b'}^{(t)} \cdot \left (L_{b'}^{(t)} - L_{b}^{(t)} \right) < 0
    \end{equation}

    \begin{equation}\label{eq:prate_eq2}
        p_{a}^{(t)} \cdot \left (L_{a'}^{(t)} - L_{a}^{(t)} \right) + p_{b}^{(t)} \cdot \left (L_{b'}^{(t)} - L_{b}^{(t)} \right) > 0
    \end{equation}

    To prove by contradiction, we need to consider the situations where the group fraction discrepancy becomes smaller. This can be either $p_{a}^{(t)} < p_b^{(t)}, p_{a'}^{(t)} > p_{a}^{(t)}$ or $p_{a}^{(t)} > p_b^{(t)}, p_{a'}^{(t)} < p_{a}^{(t)}$. Then consider all possible scenarios where the loss disparity may be larger:
    \begin{itemize}
        \item $L_{a}^{(t)}  < L_{a'}^{(t)}$ and $L_{b}^{(t)}  < L_{b'}^{(t)}$: Eqn. \eqref{eq:prate_eq} does not hold;
        \item $L_{a}^{(t)}  > L_{a'}^{(t)}$ and $L_{b}^{(t)}  > L_{b'}^{(t)}$: Eqn. \eqref{eq:prate_eq2} does not hold;
        \item $L_{a}^{(t)}  < L_{a'}^{(t)}$ and $L_{b}^{(t)}  > L_{b'}^{(t)}$: Subtract Eqn. \eqref{eq:prate_eq} from Eqn. \eqref{eq:prate_eq2}, we can get:
        \begin{equation}\label{eq:new_prate}
            \left (p_{a}^{(t)} -  p_{a'}^{(t)}\right) \cdot \left (L_{a'}^{(t)} - L_{a}^{(t)} \right) + \left(p_{b}^{(t)} -  p_{b'}^{(t)}\right) \cdot \left (L_{b'}^{(t)} - L_{b}^{(t)} \right) > 0
        \end{equation}
    Noticing that $p_{a}^{(t)} -  p_{a'}^{(t)} = - \left( p_{b}^{(t)} -  p_{b'}^{(t)}\right)$, then only when $p_{a}^{(t)} -  p_{a'}^{(t)} > 0$ we can make the above inequality holds. According to the situations where the group fraction discrepancy becomes smaller, we know $p_{a}^{(t)} > p_{b}^{(t)}$ must hold and $a$ is the majority group. However, this contradicts the initial condition $L_a^{(t)} > L_{b}^{(t)}$, i.e, the majority group should not have higher group-wise loss. 
    \end{itemize}

Finally, it is easy to see the cases where losses stay fixed cannot satisfy Eqn. \eqref{eq:prate_eq} and Eqn. \eqref{eq:prate_eq2} simultaneously. Thus, we prove (ii).
\end{proof}

With Lemma \ref{lemma:fair_single}, we can easily prove Thm. \ref{thm:fair_improvement}. What we need to prove is $\triangle^{\text{PS}}_{\textsf{fair},\mathcal{L}}(\rho_1) \ge \triangle^{\text{PS}}_{\textsf{fair},\mathcal{L}}(\rho_2)$. Now we know at $t$, higher $\rho$ results in lower or equal $\triangle^{t}_{\textsf{fair},\mathcal{L}}(\rho)$ ((i) of Lemma \ref{lemma:fair_single}). Then the lower or equal $\triangle^{t}_{\textsf{fair},\mathcal{L}}(\rho)$ results in lower group fraction disparity at $t+1$ (Assumption \ref{assumption:retention}), and further causes lower loss disparity at $t+1$ ((ii) of Lemma \ref{lemma:fair_single}). This enables a forward induction to prove that $\triangle^{t}_{\textsf{fair},\mathcal{L}}(\rho)$ is non-decreasing with $\rho$ at each time step to infinity, which completes the whole proof of Thm. \ref{thm:fair_improvement}.

\subsection{Proof of Theorem \ref{thm:quantified_fairness_improvement}}

\textbf{Theorem \ref{thm:quantified_fairness_improvement}} 
Denote $p_s^{\text{PS}}$ as the fraction of group $s$ at $\mathcal{D}^{\text{PS}}$ under retention dynamics.
We assume \(\mathcal{L}(\boldsymbol{\theta}; \mathcal{D})\) is twice continuously differentiable. For sufficiently small \(\rho > 0\), we know $\Delta _{\mathcal{L}}^{\text{PS}} - \Delta _{\mathcal{L}, \text{fair}}^{\text{PS}}(\rho)
= 2\rho \, p_a^{\text{PS}} p_b^{\text{PS}} \, \Delta _{\mathcal{L}}^{\text{PS}} \cdot 
\boldsymbol{v}_{\text{fair}}^\top H^{-1} \boldsymbol{v}_{\text{fair}} + \mathcal{O}(\rho^2)$
where
\(
\boldsymbol{v}_{\text{fair}} = \nabla_{\boldsymbol{\theta}} \mathcal{L}(\boldsymbol{\theta}^{\text{PS}}; \mathcal{D}_a)
- \nabla_{\boldsymbol{\theta}} \mathcal{L}(\boldsymbol{\theta}^{\text{PS}}; \mathcal{D}_b),
H = \nabla_{\boldsymbol{\theta}}^2 \mathcal{L}(\boldsymbol{\theta}^{\text{PS}}; \mathcal{D}^{\text{PS}}).
\)

\begin{proof}
Under fixed  $\mathcal{D}_s$, the Fair-PS solution \(\boldsymbol{\theta}_\rho := \boldsymbol{\theta}^{\text{PS}}_{\text{fair}}(\rho)\) minimizes the fixed objective:
\[
\mathcal{L}_{\text{fair}}(\boldsymbol{\theta}, \rho) = \mathcal{L}(\boldsymbol{\theta}; \mathcal{D}^{\text{PS}}) 
+ \rho \sum_{s \in \{a, b\}} p_s^{\text{PS}} \left[ \mathcal{L}(\boldsymbol{\theta}; \mathcal{D}_s) \right]^2.
\]
Define \(F(\boldsymbol{\theta}, \rho) := \nabla_{\boldsymbol{\theta}} \mathcal{L}_{\text{fair}}(\boldsymbol{\theta}, \rho)\), so \(F(\boldsymbol{\theta}_\rho, \rho) = 0\) and \(\boldsymbol{\theta}_0 = \boldsymbol{\theta}_{\text{PS}}\).

By the implicit function theorem:
\[
\frac{d\boldsymbol{\theta}_\rho}{d\rho} = -\left( \nabla_{\boldsymbol{\theta}}^2 \mathcal{L}_{\text{fair}}(\boldsymbol{\theta}_\rho, \rho) \right)^{-1} 
\cdot \nabla_{\boldsymbol{\theta}} P(\boldsymbol{\theta}_\rho),
\]
where the penalty term is:
\[
P(\boldsymbol{\theta}) := \sum_{s \in \{a, b\}} p_s^{\text{PS}} \left[ \mathcal{L}(\boldsymbol{\theta}; \mathcal{D}_s) \right]^2.
\]
At \(\rho = 0\), this gives:
\[
\left. \frac{d\boldsymbol{\theta}_\rho}{d\rho} \right|_{\rho = 0} = -H^{-1} \nabla_{\boldsymbol{\theta}} P(\boldsymbol{\theta}^{\text{PS}}).
\]

Now compute the gradient of the penalty:
\[
\nabla_{\boldsymbol{\theta}} P(\boldsymbol{\theta}^{\text{PS}}) 
= 2 \sum_{s \in \{a, b\}} p_s^{\text{PS}} \mathcal{L}(\boldsymbol{\theta}^{\text{PS}}; \mathcal{D}_s) 
\nabla_{\boldsymbol{\theta}} \mathcal{L}(\boldsymbol{\theta}^{\text{PS}}; \mathcal{D}_s).
\]
Using the first-order condition for \(\boldsymbol{\theta}^{\text{PS}}\),
\[
p_a^{\text{PS}} \nabla_{\boldsymbol{\theta}} \mathcal{L}(\boldsymbol{\theta}^{\text{PS}}; \mathcal{D}_a)
+ p_b^{\text{PS}} \nabla_{\boldsymbol{\theta}} \mathcal{L}(\boldsymbol{\theta}^{\text{PS}}; \mathcal{D}_b) = 0,
\]
which implies:
\[
\nabla_{\boldsymbol{\theta}} \mathcal{L}(\boldsymbol{\theta}^{\text{PS}}; \mathcal{D}_b) 
= -\frac{p_a^{\text{PS}}}{p_b^{\text{PS}}} \nabla_{\boldsymbol{\theta}} \mathcal{L}(\boldsymbol{\theta}^{\text{PS}}; \mathcal{D}_a).
\]
Substituting yields:
\[
\nabla_{\boldsymbol{\theta}} P(\boldsymbol{\theta}^{\text{PS}}) 
= 2 p_a^{\text{PS}} p_b^{\text{PS}} \Delta _{\mathcal{L}}^{\text{PS}}\cdot \boldsymbol{v}_{\text{fair}}.
\]
So,
\[
\left. \frac{d\boldsymbol{\theta}_\rho}{d\rho} \right|_{\rho = 0} 
= -2 p_a^{\text{PS}} p_b^{\text{PS}} \Delta _{\mathcal{L}}^{\text{PS}} \cdot H^{-1} \boldsymbol{v}_{\text{fair}}.
\]

Next, we compute the derivative of the loss disparity:
\[
\left. \frac{d}{d\rho} \Delta \mathcal{L}(\boldsymbol{\theta}_\rho) \right|_{\rho = 0}
= \boldsymbol{v}_{\text{fair}}^\top \left. \frac{d\boldsymbol{\theta}_\rho}{d\rho} \right|_{\rho = 0}
= -2 p_a^{\text{PS}} p_b^{\text{PS}} \Delta _{\mathcal{L}}^{\text{PS}} \cdot 
\boldsymbol{v}_{\text{fair}}^\top H^{-1} \boldsymbol{v}_{\text{fair}}.
\]

Finally, apply a first-order Taylor expansion:
\[
\Delta_{\mathcal{L},\text{fair}}^{\text{PS}}(\rho) = 
\Delta _{\mathcal{L}}^{\text{PS}} + \rho \left. \frac{d}{d\rho} \Delta \mathcal{L}(\boldsymbol{\theta}_\rho) \right|_{\rho = 0} 
+ \mathcal{O}(\rho^2),
\]
\[
= \Delta _{\mathcal{L}}^{\text{PS}}
- 2\rho \, p_a^{\text{PS}} p_b^{\text{PS}} \Delta _{\mathcal{L}}^{\text{PS}} \cdot 
\boldsymbol{v}_{\text{fair}}^\top H^{-1} \boldsymbol{v}_{\text{fair}} 
+ \mathcal{O}(\rho^2),
\]
which completes the proof.
\end{proof}

\subsection{Discussion on Performative Optimal (PO) Solutions}

The PO solution is defined as 
\begin{equation}
    \boldsymbol{\theta}^{\text{PO}} := \mathbb{E}_{Z \sim \mathcal{D}_{\boldsymbol{\theta}}} \ell(\boldsymbol{\theta};Z),
\end{equation}
$\mathcal{D}_{\boldsymbol{\theta}}$ is the fixed point distribution.

At a high level, the PO solution and the approximation between PS and PO solution largely follows the analysis in \citep{perdomo_performative_2021, brown2020performative}, where Theorem 6 in \citep{brown2020performative} states that if conditions in Lemma \ref{lemma:SDPP} and $\ell(\cdot; \cdot)$ is $L_z$-Lipschitz in the second argument, then the PS and PO solution satisfies
\begin{equation*}
    \| \boldsymbol{\theta}^{\text{PO}} - \boldsymbol{\theta}^{\text{PS}} \|_2 \leq \frac{2 L_z \epsilon}{\gamma (1 - \epsilon)}.
\end{equation*}

We note that the corresponding proof and the bound both only depend on the strong convexity coefficient $\gamma$ and sensitivity coefficient $\epsilon$, where the fair objective and the original objective have the same values, but the fair objective did change the Lipschitz coefficient to $\Tilde{L}_z:=(1+\rho\overline{\ell}^2) L_z$ and $\Tilde{L}_z:=(1+\rho\overline{\ell}) L_z$ when using fair penalty and fair reweighting mechanisms, respectively. Therefore, as long as we replace $L_z$ with the corresponding new value $\Tilde{L}_z$, we can use the same proof steps to show that
\begin{equation*}
    \| \boldsymbol{\theta}^{\text{PO}}_{\textsf{fair}} - \boldsymbol{\theta}^{\text{PS}}_{\textsf{fair}} \|_2 \leq \frac{2 \Tilde{L}_z \epsilon}{\gamma (1 - \epsilon)},
\end{equation*}
where 
\begin{equation*}
    \boldsymbol{\theta}^{\text{PO}}_{\textsf{fair}} := \arg \min_{\boldsymbol{\theta}} \mathcal{L}_{\textsf{fair}}(\boldsymbol{\theta}; \mathcal{D}_{\boldsymbol{\theta}})
\end{equation*}

But as we show previously, the fairness metrics are only stable and meaningful in PP when measured at the PS and Fair-PS solutions, so there isn't much practical value in finding the FPO solution.

\end{document}